\newtheorem{lemma}{Lemma}[section]
\newtheorem{theorem}{Theorem}[section]
\newcolumntype{H}{>{\setbox0=\hbox\bgroup}c<{\egroup}@{}}
\journal{Pattern Recognition}
\begin{document}
\doublespacing
\begin{frontmatter}

	\title{ Quaternion Nuclear Norm minus Frobenius Norm Minimization for Color Image Reconstruction}

    \author[address1]{Yu Guo}\ead{yuguomath@aliyun.com}
    \author[address1]{Guoqing Chen}\ead{cgq@imu.edu.cn}
    \author[address2]{Tieyong~Zeng}\ead{zeng@math.cuhk.edu.hk}
    \author[address1]{Qiyu~Jin\corref{cor1}}
    \cortext[cor1]{Corresponding author.}
    \ead{qyjin2015@aliyun.com}
    \author[address3]{Michael Kwok-Po Ng}\ead{michael-ng@hkbu.edu.hk}
    
    \address[address1]{School of Mathematical Science, Inner Mongolia University, Hohhot, China }
    \address[address2]{Department of Mathematics, The Chinese University of Hong Kong, Satin, Hong Kong}
    \address[address3]{Department of Mathematics, Hong Kong Baptist University, Kowloon Tong, Hong Kong}

	\begin{abstract}
    Color image restoration methods typically represent images as vectors in Euclidean space or combinations of three monochrome channels. However, they often overlook the correlation between these channels, leading to color distortion and artifacts in the reconstructed image. To address this, we present Quaternion Nuclear Norm Minus Frobenius Norm Minimization (QNMF), a novel approach for color image reconstruction. QNMF utilizes quaternion algebra to capture the relationships among RGB channels comprehensively. By employing a regularization technique that involves nuclear norm minus Frobenius norm, QNMF approximates the underlying low-rank structure of quaternion-encoded color images. Theoretical proofs are provided to ensure the method's mathematical integrity. Demonstrating versatility and efficacy, the QNMF regularizer excels in various color low-level vision tasks, including denoising, deblurring, inpainting, and random impulse noise removal, achieving state-of-the-art results.
	\end{abstract}

	\begin{keyword}
		Quaternion, low rank, color image, low-level vision, nuclear norm, Frobenius norm
	\end{keyword}

\end{frontmatter}

\section{Introduction}

Color images find applications in various fields including video processing, computer vision, medical imaging, remote sensing, and multimedia processing. However, during generation, transmission, and storage, images unavoidably degrade due to factors like noise, blurring, downsampling, and other forms of corruption. This degradation can be represented by the equation:
\begin{equation}
\mathbf{Y}=\mathbf{A}\mathbf{X}+\mathbf{N},
\label{degraded}
\end{equation}
where $\mathbf{X}$ represents the original image, $\mathbf{Y}$ represents the observed degraded image, $\mathbf{A}$ stands for a linear degradation operator, such as a blurring kernel, and $\mathbf{N}$ denotes additive independent Gaussian white noise with a zero mean. Eq. (\ref{degraded}) poses a classical inverse problem due to its ill-posed nature, presenting significant challenges in data science, particularly in the realm of color image restoration methods.

Various research efforts have focused on solving image reconstruction problems using methods such as total variation (TV) \cite{jia2019color}, non-local self-similarity (NSS) \cite{DF07BM3D,dabov2007color}, dictionary learning \cite{huang2021quaternion,zha2020image}, low rank \cite{gu2014weighted,guo2022gaussian}, tensor \cite{jiang2023robust,yang2020low}, and deep learning \cite{Zhang2017DnCNN,guo2021fast}. While these algorithms have shown significant success with grayscale images, their application to color images has posed significant challenges. Grayscale images have a single channel, while color images have three channels: R, G, and B, exhibiting color correlation. Applying grayscale algorithms independently to each channel disrupts this correlation, leading to color artifacts and distortions in the final restored result.

Two successful strategies have been proposed to address the challenge of channel correlation in color images. The first strategy involves converting color images from the standard RGB space to a less correlated color space, such as YCbCr or HSV, and then independently denoising each channel of the converted image. Pioneering work in this approach extended block matching and 3D filtering (BM3D) \cite{DF07BM3D} to color block matching and 3D filtering (CBM3D) \cite{dabov2007color}. CBM3D converts an sRGB image to a luminance-chrominance space, performs block matching in the luminance space, and then filters each channel independently. This approach has been followed by a substantial body of work \cite{hurault2018epll}. In addition to the YCbCr space, other researchers have explored the opponent transformation \cite{hurault2018epll}. Recently, Jia et al. \cite{jia2019color} proposed the saturation-value total variation (SV-TV) for the HSV color space. These studies effectively circumvent the challenges posed by channel correlation through color space conversion. However, this conversion leads to a complex noise structure and can even result in the loss of noise independence, transforming it into correlated noise \cite{guo2023best}. Furthermore, the conversion of color space does not effectively exploit the correlation between colors.

An alternative solution considers coupling between channels. This can be achieved by vectorizing the three channels, fully accounting for cross-channel information. Kong et al. \cite{kong2019color} have further extended this approach by transforming color image patches into vectors, treating them as a unified entity. These methods allow for simultaneous processing of all three channels, unlike independent processing. They can also be generalized to a tensor form \cite{jiang2023robust,miao2020low}, interpreting the color image as a multidimensional tensor with dimensions corresponding to the spatial and spectral channels. Xu et al. \cite{xu2017multi} noted that noise intensity is not always uniform across the three channels of a color image. To address this, they proposed a diagonal form of the weight matrix to model noise differences between the channels, an idea that was subsequently expanded upon \cite{shan2023multi}. However, this strategy reverts to independent channel processing when the noise intensity approaches uniformity, thereby losing the capacity for information exchange between channels.

Additionally, data-driven methods have demonstrated exceptional performance in image reconstruction. Deep learning, unlike traditional methods, primarily focuses on learning the mapping between degraded images and their corresponding clean images using pairwise data. Consequently, these methods circumvent the challenge of extending traditional algorithms from grayscale to color images. In 2016, Zhang et al. proposed DnCNN \cite{Zhang2017DnCNN}, a creative approach based on residual learning that significantly enhanced reconstruction performance. Since then, advancements in the deep learning community \cite{tian2023multi} have consistently surpassed state-of-the-art results for image reconstruction. Currently, the most advanced image reconstruction algorithms are based on Transformer \cite{shamsolmoali2024distance} and diffusion models \cite{xia2023diffir}. Despite the excellent performance of these deep learning algorithms, their computational complexity and high computational cost pose challenges for real-world applications.

The main challenge in color image restoration is effectively utilizing inter-channel correlation to minimize color distortion and artifacts while preserving image structure. Given the human visual system's sensitivity to color, degradation from color anomalies significantly impacts subjective evaluations of image quality. Recent advances have been made in traditional algorithms for processing color images. Quaternions, generalizations of complex numbers, consist of a real part and three imaginary parts, precisely encoding the three channels of a color image. This captures the coupling relations between channels, allowing the color image to be processed as a whole, leveraging non-trivial algebraic structure. Quaternion multiplication facilitates complete information exchange between channels. Recent studies \cite{chen2019low,yu2019quaternion} demonstrate that the color image processing method based on quaternion representation delivers excellent color image performance. Figure \ref{patch_1} succinctly illustrates the advantages of the quaternion method in color processing. In the smoothing region, even subtle processing errors between channels can lead to significant color artifacts, whereas the quaternion-based QNMF yields excellent visual results.

\begin{figure}[t]
	\centering
	\addtolength{\tabcolsep}{-5.5pt}
    \renewcommand\arraystretch{0.6}
	{\fontsize{10pt}{\baselineskip}\selectfont 
	\begin{tabular}{ccccc}
        \includegraphics[width=0.15\textwidth]{./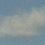} &
        \includegraphics[width=0.15\textwidth]{./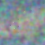} &
        \includegraphics[width=0.15\textwidth]{./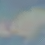} &
        \includegraphics[width=0.15\textwidth]{./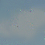} &
        \includegraphics[width=0.15\textwidth]{./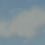} \\

        \includegraphics[width=0.15\textwidth]{./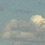} &
        \includegraphics[width=0.15\textwidth]{./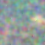} &
        \includegraphics[width=0.15\textwidth]{./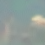} &
        \includegraphics[width=0.15\textwidth]{./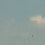} &
        \includegraphics[width=0.15\textwidth]{./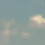} \\

        GT & SV-TV & CBM3D & McWNNM & QNMF \\

	\end{tabular}
	}
	\caption{ Comparison of color image denoising algorithms with different strategies in terms of color distortion and artifacts. The images are $45\times45$ patches, sourced from the Kodak dataset, and are corrupted by noise with $\sigma=60$.
 }
	\label{patch_1}
\end{figure}

To better address color distortion and artifacts while preserving more image details, we propose a new nonconvex quaternion hybrid norm, termed the Quaternion Nuclear Norm Minus Frobenius Norm (QNMF). The QNMF effectively handles various rank components, enabling improved approximation of the rank norm. We employ QNMF as a quaternionic low-rank regularizer for a wide range of low-level vision tasks, thus enhancing algorithm performance and robustness. Additionally, we offer a rigorous mathematical proof to ensure the theoretical soundness of QNMF.
The main contributions of this paper are as follows:
\begin{itemize}

    \item To enhance the approximation of the rank norm, this paper introduces QNMF as a quaternionic low-rank regularizer, applicable to various color image tasks.

    \item We prove the global optimality of our proposed QNMF model and demonstrate the convergence of the algorithm.

    \item     Numerous experiments were conducted, demonstrating that the proposed QNMF is capable of addressing various tasks including color image denoising, deblurring, matrix completion, and robust principal component analysis (RPCA), and achieved state-of-the-art results.
    
\end{itemize}

The paper is structured as follows: Section \ref{sec:2} covers the fundamentals of quaternion algebra and quaternion low-rank correlation methods. Our proposed nuclear norm minus Frobenius norm and its application to color image processing are introduced in Section \ref{sec:3}. Section \ref{sec:4} presents simulation experiments comparing our approach with other state-of-the-art methods. Conclusions are outlined in Section \ref{sec:5}.

\section{Related work}
\label{sec:2}

\subsection{Quaternion Basics}
As a generalization of the real space $\mathbb{R}$ and the complex space $\mathbb{C}$, the quaternion space $\mathbb{ Q }$ is defined as $\mathbb{ Q } = \{a_0 + a_1\mathbf{i} + a_2\mathbf{j} +a_3\mathbf{k}|a_0,a_1,a_2,a_3 \in \mathbb{R}\}$, where $\{1,\mathbf{i},\mathbf{j},\mathbf{k} \}$ forms a basis of $\mathbb{ Q }$, and the imaginary units $\mathbf{i},\mathbf{j},\mathbf{k}$ satisfy $\mathbf{i}^2=\mathbf{j}^2=\mathbf{k}^2=\mathbf{i}\mathbf{j}\mathbf{k}=-1$, $\mathbf{i}\mathbf{j}=\mathbf{k}=-\mathbf{j}\mathbf{i}$, $\mathbf{j}\mathbf{k}=\mathbf{i}=-\mathbf{k}\mathbf{j}$, $\mathbf{k}\mathbf{i}=\mathbf{j}=-\mathbf{i}\mathbf{k}$. Quaternion multiplication, one of the most important properties of the quaternion space $\mathbb{ Q }$, does not satisfy the law of exchange. For example, a quaternion variable is denoted by having a dot on the variable, such as $\dot{\mathbf{a}}$ and $\dot{\mathbf{A}}$.

Let $\dot{\mathbf{a}} = a_0 + a_1\mathbf{i} + a_2\mathbf{j} +a_3\mathbf{k} \in \mathbb{ Q }$, $\dot{\mathbf{b}} = b_0 + b_1\mathbf{i} + b_2\mathbf{j} +b_3\mathbf{k} \in \mathbb{ Q }$ and $\lambda \in \mathbb{R}$,  their operations are
\begin{equation*}
\begin{aligned}
&\dot{\mathbf{a}} + \dot{\mathbf{b}} =   (a_0 + b_0) + (a_1+b_1)\mathbf{i} + (a_2+b_2)\mathbf{j} + (a_3+b_3)\mathbf{k}, \\
&\lambda\dot{\mathbf{a}} =  (\lambda a_0) + (\lambda a_1)\mathbf{i} + (\lambda a_2)\mathbf{j} + (\lambda a_3)\mathbf{k}, \\
&\dot{\mathbf{a}}\dot{\mathbf{b}} = (a_0b_0-a_1b_1-a_2b_2-a_3b_3) + (a_0b_1+a_1b_0+a_2b_3-a_3b_2)\mathbf{i} \\
&~~+ (a_0b_2-a_1b_3+a_2b_0+a_3b_1)\mathbf{j}  + (a_0b_3+a_1b_2-a_2b_1-a_3b_0)\mathbf{k}.
\end{aligned}
\end{equation*}
The conjugate and modulus of $\dot{\mathbf{a}}$ are defined as
\begin{equation}
\begin{aligned}
\dot{\mathbf{a}}^{*} &= a_0 - a_1\mathbf{i} - a_2\mathbf{j} - a_3\mathbf{k}, \\
|\dot{\mathbf{a}}| &= \sqrt{a_{0}^{2}+a_{1}^{2}+a_{2}^{2}+a_{3}^{2}}.
\end{aligned}
\end{equation}
If $a_0 = 0$, $\dot{\mathbf{a}}$ is called pure quaternion. Each quaternion $\dot{\mathbf{a}}$ can be uniquely represented as 
$$\dot{\mathbf{a}} = a_0 + a_1\mathbf{i} + (a_2 +a_3\mathbf{i})\mathbf{j} = c_{1} + c_{2}\mathbf{j},$$
where $c_{1}=a_0 + a_1\mathbf{i}$ and $c_{2}=a_2 +a_3\mathbf{i}$ are complex numbers. 

For a quaternion matrix, defined as $\dot{\mathbf{X}} = (\dot{x}_{ij}) \in \mathbb{ Q }^{m\times n}$, where $\dot{\mathbf{X}} = \mathbf{X}_0 + \mathbf{X}_1\mathbf{i} + \mathbf{X}_2\mathbf{j}+ \mathbf{X}_3\mathbf{k}$ and $\mathbf{X}_l\in \mathbb{R}^{m\times n}(l=0,1,2,3)$ are real matrices. For a color image, it can be represented as a pure quaternion matrix with real part 0, i.e., each channel is represented as an imaginary part
\begin{equation}
\begin{aligned}
\dot{\mathbf{X}}_{RGB} =  \mathbf{X}_{R}\mathbf{i} + \mathbf{X}_{G}\mathbf{j}+ \mathbf{X}_{B}\mathbf{k},
\end{aligned}
\end{equation}
where $\mathbf{X}_R$, $\mathbf{X}_G$ and $\mathbf{X}_B$ are the R, G, and B channels of the color image.

The norms of quaternion vectors and matrices are defined as follows. The $l_1$ and $l_2$ norms of the quaternion vector are $||\dot{\mathbf{a}}||_1:=\sum_{i=0}^{3}|a_i|$ and $||\dot{\mathbf{a}}||_2:=\sqrt{\sum_{i=0}^{3}|a_i|^2}$, respectively. The Frobenius norm of a quaternionic matrix is $||\dot{\mathbf{X}}||_{F} := \sqrt{\sum_{i,j}| \dot{x}_{ij}|^{2}} = \sqrt{\sum_{k}^{\min{\{i,j\}}}  \sigma_{\dot{\mathbf{X}},k}^{2} }$, where $\sigma_{\dot{\mathbf{X}},k}$ denotes the $k$-th singular value. The singular value decomposition of a quaternion matrix $\dot{\mathbf{X}}$ is given by the following lemma.

\begin{lemma}(QSVD \cite{zhang1997quaternions})
Given a quaternion matrix $\dot{\mathbf{X}} \in \mathbb{ Q }^{m\times n}$ with rank $r$.  There are two unitary quaternion matrices $\dot{\mathbf{U}} \in \mathbb{ Q }^{m\times m}$ and $\dot{\mathbf{V}} \in \mathbb{ Q }^{n\times n}$ satisfying $\dot{\mathbf{X}}  = \dot{\mathbf{U}}\left(\begin{array}{cc} \mathbf{\Sigma_{r}} & 0\\ 0&0 \end{array}\right)\dot{\mathbf{V}}^{\top}$, where $\mathbf{\Sigma_{r}} = \mathrm{diag}(\sigma_1,\dots,\sigma_r)\in \mathbb{R}^{r\times r}$, and all singular values $\sigma_i>0, i=1,\dots,r$.
\end{lemma}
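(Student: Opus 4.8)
The plan is to reduce the quaternion problem to the classical complex singular value decomposition by means of the \emph{complex adjoint} (representation) matrix. Writing $\dot{\mathbf{X}} = \mathbf{C}_1 + \mathbf{C}_2\mathbf{j}$ with complex matrices $\mathbf{C}_1 = \mathbf{X}_0 + \mathbf{X}_1\mathbf{i}$ and $\mathbf{C}_2 = \mathbf{X}_2 + \mathbf{X}_3\mathbf{i}$, I associate to $\dot{\mathbf{X}}$ the matrix
\begin{equation*}
\chi_{\dot{\mathbf{X}}} = \begin{pmatrix} \mathbf{C}_1 & \mathbf{C}_2 \\ -\overline{\mathbf{C}_2} & \overline{\mathbf{C}_1} \end{pmatrix} \in \mathbb{C}^{2m\times 2n}.
\end{equation*}
The first step is to record that $\chi$ is injective and preserves sums and products, that it sends the quaternion conjugate transpose to the complex conjugate transpose, and consequently that $\dot{\mathbf{U}}$ is unitary in $\mathbb{Q}^{m\times m}$ if and only if $\chi_{\dot{\mathbf{U}}}$ is unitary in $\mathbb{C}^{2m\times 2m}$. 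These verifications are routine block computations, so I treat them as known.

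Second, I apply the ordinary complex SVD to $\chi_{\dot{\mathbf{X}}}$. The essential structural input is the symmetry
\begin{equation*}
\mathbf{J}_m\,\overline{\chi_{\dot{\mathbf{X}}}} = \chi_{\dot{\mathbf{X}}}\,\mathbf{J}_n, \qquad \mathbf{J}_t = \begin{pmatrix} 0 & \mathbf{I}_t \\ -\mathbf{I}_t & 0 \end{pmatrix},
\end{equation*}
which one checks directly from the block form. Using this, I show that the Hermitian positive semidefinite matrix $\chi_{\dot{\mathbf{X}}}^{*}\chi_{\dot{\mathbf{X}}}$ has every eigenvalue of even multiplicity: if $\mathbf{v}$ is an eigenvector for $\lambda>0$, then the antilinear map $\phi(\mathbf{v}) = \mathbf{J}_n\overline{\mathbf{v}}$ produces a second eigenvector for the same $\lambda$, and since $\phi^2 = -\mathrm{Id}$ the vectors $\mathbf{v}$ and $\phi(\mathbf{v})$ are always $\mathbb{C}$-linearly independent. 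Hence the complex rank of $\chi_{\dot{\mathbf{X}}}$ equals $2r$, its nonzero singular values split into equal pairs $\sigma_1,\sigma_1,\dots,\sigma_r,\sigma_r$, and the $\sigma_i$ are exactly the (real, positive) quaternion singular values.

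The third and most delicate step — the main obstacle — is to choose the complex singular bases so that the resulting unitary factors are themselves adjoint matrices, i.e.\ lie in the image of $\chi$. A generic complex SVD of $\chi_{\dot{\mathbf{X}}}$ will \emph{not} have this property. I resolve this by building orthonormal bases compatible with $\phi$: within each right eigenspace of $\chi_{\dot{\mathbf{X}}}^{*}\chi_{\dot{\mathbf{X}}}$ I select vectors in $\phi$-paired form $\{\mathbf{v},\phi(\mathbf{v})\}$, using a Gram--Schmidt argument inside the eigenspace to cope with repeated singular values, and I transport them to the left side via $\mathbf{u} = \chi_{\dot{\mathbf{X}}}\mathbf{v}/\sigma$. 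Arranging the paired columns in the correct order makes the assembled complex unitaries $\mathbf{P}$ and $\mathbf{Q}$ satisfy the adjoint symmetry $\mathbf{J}\overline{\mathbf{P}} = \mathbf{P}\mathbf{J}$ (and likewise for $\mathbf{Q}$), which is precisely the characterization of matrices of the form $\chi_{\dot{\mathbf{U}}}$, $\chi_{\dot{\mathbf{V}}}$ for quaternion unitaries $\dot{\mathbf{U}}$, $\dot{\mathbf{V}}$.

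Finally, the diagonal factor of the structured complex SVD is itself $\chi_{\mathbf{\Sigma}}$ for the real diagonal quaternion matrix $\mathbf{\Sigma} = \mathrm{diag}(\sigma_1,\dots,\sigma_r)$ (padded by zeros). Thus $\chi_{\dot{\mathbf{X}}} = \mathbf{P}\,\chi_{\mathbf{\Sigma}}\,\mathbf{Q}^{*} = \chi_{\dot{\mathbf{U}}}\,\chi_{\mathbf{\Sigma}}\,\chi_{\dot{\mathbf{V}}}^{*} = \chi_{\dot{\mathbf{U}}\mathbf{\Sigma}\dot{\mathbf{V}}^{*}}$, and injectivity of $\chi$ yields $\dot{\mathbf{X}} = \dot{\mathbf{U}}\mathbf{\Sigma}\dot{\mathbf{V}}^{*}$ with $\dot{\mathbf{U}},\dot{\mathbf{V}}$ unitary and $\sigma_i>0$, which is the asserted decomposition (the $\top$ in the statement being read as the quaternion conjugate transpose). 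As an alternative I could instead prove a spectral theorem for the Hermitian matrix $\dot{\mathbf{X}}^{*}\dot{\mathbf{X}}$ directly by induction, but the real difficulty — that eigenspaces must be organized into quaternionic pairs — reappears there through the non-commutativity of the scalars, so the adjoint route gives the cleaner bookkeeping.
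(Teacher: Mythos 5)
Your proposal is correct, but note the paper offers no proof of this lemma at all: it is imported verbatim from the cited reference \cite{zhang1997quaternions}, and the complex adjoint matrix argument you give --- the symmetry $\mathbf{J}_m\overline{\chi_{\dot{\mathbf{X}}}}=\chi_{\dot{\mathbf{X}}}\mathbf{J}_n$, the antilinear pairing $\phi(\mathbf{v})=\mathbf{J}_n\overline{\mathbf{v}}$ forcing paired singular values, and the $\phi$-compatible choice of singular bases so that the unitary factors lie in the image of $\chi$ --- is essentially Zhang's own proof, so you have reconstructed the standard route rather than diverged from it. Your reading of $\dot{\mathbf{V}}^{\top}$ as the quaternion conjugate transpose is also the correct interpretation (the statement would be false with a plain transpose), and the only detail glossed over is a harmless sign/ordering convention: in the adjoint form, column $n+k$ is $-\phi$ applied to column $k$, which your ``arrange the paired columns'' step absorbs.
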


\subsection{Quaternion Low Rank Methods}

Real-valued low-rank methods excel in grayscale image processing but face hurdles with color images, often resulting in color distortion and artifacts due to disregarding RGB channel correlation. Recognizing the significance of inter-channel correlation, quaternion-based color image processing methods have garnered increasing interest in recent years.

Yu et al. \cite{yu2019quaternion} introduced quaternion-weighted nuclear norm minimization (QWNNM) for color image denoising, drawing inspiration from \cite{gu2014weighted,gu2017weighted}. Extending this approach, Huang et al. \cite{huang2022quaternion} adapted QWNNM for processing color image deblurring. Mathematically, QWNNM is defined as:
\begin{equation}
\begin{aligned}
\min_{\dot{\mathbf{X}}} \frac{\gamma}{2}|| \dot{\mathbf{A}}\dot{\mathbf{X}} - \dot{\mathbf{Y}} ||^{2}_F + ||\dot{\mathbf{X}}||_{w,*} 
\label{deblur-QWNNM}
\end{aligned}
\end{equation}
where  $ \dot{\mathbf{A}} \in \mathbb{ Q }^{m\times m}$ represents a 2D quaternion degradation matrix, while $\dot{\mathbf{X}}$ and $\dot{\mathbf{Y}}$ stand for the potential image and the observed degraded image, respectively. 
The term $||\cdot||_{w,*} $ is defined as 
$$ ||\dot{\mathbf{X}}||_{w,*} = \sum_{i} |w_{i}\sigma_{\dot{\mathbf{X}},i} |,$$
where $\sigma_{\dot{\mathbf{X}},i}$ is the $i$-th singular value of $\dot{\mathbf{X}}$ and the weight $w_{i}$ is computed as   $w_{i}=\frac{c}{\sigma_{\dot{\mathbf{X}},i}+\epsilon}$, where $c$ is a constant and $\epsilon $ is a small number.

The studies by Yu et al. \cite{yu2019quaternion} and Huang et al. \cite{huang2022quaternion} illustrate the effectiveness of quaternion representation in color image processing by leveraging inter-channel correlation. Combining the weighted Schatten p-norm minimization \cite{xie2016weighted} with quaternion representation, Zhang et al. \cite{zhang2024quaternion} proposed the QWSNM model, defined as:
\begin{equation}
\begin{aligned}
||\dot{\mathbf{X}}||_{w,S_{p}} = \left( \sum_{i=1}^{\min{\{n,m\}}} w_{i}\sigma_{\dot{\mathbf{X}},i}^{p} \right)^{\frac{1}{p}},
\end{aligned}
\end{equation}
where $p>0$. The QWSNM offers precise control over each singular value. Notably, when $p=1$, QWNNM emerges as a special instance of QWSNM. However, the inclusion of the Schatten p-norm increases the computational complexity of QWSNM relative to QWNNM.

Several recent works \cite{wang2021scalable,yu2023low} have highlighted that convex alternatives to rank norms can yield suboptimal solutions. In response, various nonconvex alternatives to rank norms have emerged \cite{chen2019low,yang2022quaternion}. Building upon these insights, we introduce a novel quaternion nonconvex hybrid norm, termed quaternion nuclear norm minus Frobenius norm, tailored for color image processing.

\section{Quaternion Nuclear Norm minus Frobenius Norm Minimization}
\label{sec:3}

In this section, we introduce the quaternion nuclear norm minus Frobenius norm minimization (QNMF) problem, extending it to the image linear inverse problem.

\subsection{QNMF model}
\label{3.1}
To address the color image denoising problem, we propose a novel approach by combining quaternion representation with the nuclear-Frobenius hybrid norm. This leads to the formulation of the quaternion nuclear norm minus Frobenius norm minimization problem:
\begin{equation}
\dot{\mathbf{X}} = \arg\min_{\dot{\mathbf{X}}} \frac{1}{2}|| \dot{\mathbf{Y}}-\dot{\mathbf{X}} ||^{2}_F + \lambda(||\dot{\mathbf{X}}||_{*} - \alpha||\dot{\mathbf{X}}||_{F} ),
\label{NNFN}
\end{equation}
where $\dot{\mathbf{Y}}$ represents the observed noisy color image, and $\lambda>0$ and $\alpha>0$ are regularization parameters. The global optimal solution of model (\ref{NNFN}) can be obtained according to the following theorem.

\begin{theorem}
Given $\dot{\mathbf{Y}} \in \mathbb{ Q }^{m\times n}$, without loss of generality, we assume that $m \geq n$. Let $\dot{\mathbf{Y}}= \dot{\mathbf{U}} \mathbf{\Sigma_{\dot{Y}}} \dot{\mathbf{V}}^{\top}$ be the QSVD of $\dot{\mathbf{Y}}$, where $\mathbf{\Sigma_{\dot{Y}}}=\mathrm{diag}(\sigma_{\dot{\mathbf{Y}},1},\sigma_{\dot{\mathbf{Y}},2},\dots,\sigma_{\dot{\mathbf{Y}},m})$. The global optimum of model (\ref{NNFN}) can be expressed as $\dot{\mathbf{X}}=\hat{\dot{\mathbf{U}}} \mathbf{\Sigma_{\dot{X}}} \hat{\dot{\mathbf{V}}}^{\top}$, where $\mathbf{\Sigma_{\dot{X}}}=\mathrm{diag}(\sigma_{\dot{\mathbf{X}},1},\sigma_{\dot{\mathbf{X}},2},\dots,\sigma_{\dot{\mathbf{X}},m})$ is a diagonal nonnegative matrix and $(\sigma_{\dot{\mathbf{X}},1},\sigma_{\dot{\mathbf{X}},2},\dots,\sigma_{\dot{\mathbf{X}},m})$ is the solution to the following optimization problem:
\begin{equation}
\begin{aligned}
\arg&\min_{\mathbf{\Sigma_{\dot{X}}}} \frac{1}{2} ||\mathbf{\Sigma_{\dot{Y}}} - \mathbf{\Sigma_{\dot{X}}} ||^{2}_{2} + \lambda(||\mathbf{\Sigma_{\dot{X}}}||_{1}-\alpha ||\mathbf{\Sigma_{\dot{X}}}||_{2}) \\
& \mathrm{s.t.} \sigma_{\dot{\mathbf{X}},1} \geq \sigma_{\dot{\mathbf{X}},2} \geq \dots \geq \sigma_{\dot{\mathbf{X}},m} \geq 0, 
\end{aligned}
\end{equation}
and the globally optimal solution $\mathbf{\tilde{\Sigma}_{\dot{X}} }$ is $\mathbf{\tilde{\Sigma}_{\dot{X}} } = \frac{|| \mathbf{z} ||_{2} + \alpha\lambda}{|| \mathbf{z} ||_{2}}\mathbf{z}$, where $\mathbf{z} = \max(\mathbf{\Sigma_{\dot{Y}}}-\lambda,0)$. 
\label{theorem01}
\end{theorem}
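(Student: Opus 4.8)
The plan is to exploit the unitary invariance of both the nuclear norm and the Frobenius norm to collapse the quaternion matrix problem onto its singular values, and then to solve the resulting nonconvex vector problem by a radius--direction (polar) decomposition.

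First I would expand the fidelity term using the quaternion Frobenius inner product $\langle \dot{\mathbf{X}}, \dot{\mathbf{Y}}\rangle = \mathrm{Re}\,\mathrm{tr}(\dot{\mathbf{X}}^{*}\dot{\mathbf{Y}})$, writing
\[
\frac12\|\dot{\mathbf{Y}}-\dot{\mathbf{X}}\|_F^2 = \frac12\|\dot{\mathbf{Y}}\|_F^2 - \mathrm{Re}\,\mathrm{tr}(\dot{\mathbf{X}}^{*}\dot{\mathbf{Y}}) + \frac12\|\dot{\mathbf{X}}\|_F^2.
\]
Since $\|\dot{\mathbf{X}}\|_{*}$, $\|\dot{\mathbf{X}}\|_F$ and the last term depend only on the singular values $\{\sigma_{\dot{\mathbf{X}},i}\}$, minimizing over all $\dot{\mathbf{X}}$ with prescribed singular values reduces to maximizing the cross term $\mathrm{Re}\,\mathrm{tr}(\dot{\mathbf{X}}^{*}\dot{\mathbf{Y}})$. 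Invoking a quaternion von Neumann trace inequality --- which I would obtain by passing to the complex adjoint representation of a quaternion matrix, whose singular values coincide (with multiplicity) with those of $\dot{\mathbf{X}}$ and $\dot{\mathbf{Y}}$, and then applying the classical inequality --- gives $\mathrm{Re}\,\mathrm{tr}(\dot{\mathbf{X}}^{*}\dot{\mathbf{Y}}) \le \sum_i \sigma_{\dot{\mathbf{X}},i}\sigma_{\dot{\mathbf{Y}},i}$, with equality precisely when $\dot{\mathbf{X}}$ shares the left and right singular vectors of $\dot{\mathbf{Y}}$ in matching order. This forces $\hat{\dot{\mathbf{U}}}=\dot{\mathbf{U}}$, $\hat{\dot{\mathbf{V}}}=\dot{\mathbf{V}}$ and reduces the model to the stated diagonal problem in the ordered singular-value vector.

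Next I would solve that vector problem, writing $\mathbf{x}=\mathbf{\Sigma_{\dot{X}}}$ and $\mathbf{y}=\mathbf{\Sigma_{\dot{Y}}}$, by a two-stage exact minimization that sidesteps the nonconvexity of $\|\mathbf{x}\|_1-\alpha\|\mathbf{x}\|_2$. Fixing the radius $\rho=\|\mathbf{x}\|_2$, the objective becomes $\frac12\|\mathbf{y}\|_2^2 - \langle \mathbf{y}-\lambda\mathbf{1},\mathbf{x}\rangle + \frac12\rho^2 - \lambda\alpha\rho$, so on the nonnegative sphere $\{\mathbf{x}\ge 0:\|\mathbf{x}\|_2=\rho\}$ only the inner product varies. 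Maximizing $\langle\mathbf{y}-\lambda\mathbf{1},\mathbf{x}\rangle$ there is a Cauchy--Schwarz problem whose optimizer aligns $\mathbf{x}$ with the positive part $\mathbf{z}=\max(\mathbf{y}-\lambda,0)$, yielding optimal value $\rho\|\mathbf{z}\|_2$ and optimal direction $\mathbf{x}=\rho\,\mathbf{z}/\|\mathbf{z}\|_2$.

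Substituting back leaves the one-dimensional problem $h(\rho)=\frac12\rho^2 - (\|\mathbf{z}\|_2+\lambda\alpha)\rho + \frac12\|\mathbf{y}\|_2^2$, a convex quadratic minimized at $\rho^{*}=\|\mathbf{z}\|_2+\lambda\alpha$, which reproduces the claimed $\mathbf{\tilde{\Sigma}_{\dot{X}}}=\frac{\|\mathbf{z}\|_2+\alpha\lambda}{\|\mathbf{z}\|_2}\mathbf{z}$. Because $\mathbf{y}$ is sorted nonincreasingly, so is $\mathbf{z}$ and hence any positive multiple of it, so the ordering constraint holds automatically and no projection is needed. The main obstacle I anticipate is twofold: establishing the quaternion trace inequality and its equality condition rigorously, since noncommutativity of quaternion multiplication forbids directly quoting the real or complex statement (hence the detour through the adjoint representation), and guaranteeing \emph{global} rather than merely stationary optimality despite the nonconvex $-\alpha\|\mathbf{x}\|_2$ term --- the radius--direction split resolves the latter because each stage is minimized exactly. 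I would close by separately treating the degenerate case $\mathbf{z}=0$ (all $\sigma_{\dot{\mathbf{Y}},i}\le\lambda$), where the quadratic is minimized at $\rho^{*}\le 0$ and the optimum collapses to $\dot{\mathbf{X}}=0$.
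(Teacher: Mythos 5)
Your proof is correct on the generic case and, for the vector subproblem, takes a genuinely different route from the paper. The paper's proof performs the same first reduction --- expand the Frobenius term, invoke von Neumann's trace inequality to force $\hat{\dot{\mathbf{U}}}=\dot{\mathbf{U}}$, $\hat{\dot{\mathbf{V}}}=\dot{\mathbf{V}}$, and pass to the sorted singular values --- but then simply cites \cite{lou2018fast} for the closed form of the resulting $\ell_1-\alpha\ell_2$ proximal problem. You instead derive that closed form from scratch via the radius--direction split: Cauchy--Schwarz over the nonnegative sphere to align $\mathbf{x}$ with $\mathbf{z}=\max(\mathbf{\Sigma_{\dot{Y}}}-\lambda,0)$, then an exact one-dimensional convex quadratic in $\rho$. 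This buys a self-contained certificate of \emph{global} optimality despite the nonconvex $-\alpha\|\cdot\|_2$ term, and your two explicit observations --- that the quaternion von Neumann inequality needs separate justification (your detour through the complex adjoint representation is a standard and valid way to get it, whereas the paper asserts the inequality without comment and even writes the cross term with a bare transpose rather than $\mathrm{Re}\,\mathrm{tr}(\dot{\mathbf{X}}^{*}\dot{\mathbf{Y}})$), and that the ordering constraint is inactive because $\mathbf{z}$ inherits the sorting of $\mathbf{\Sigma_{\dot{Y}}}$ --- are both points the paper leaves implicit.

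Your treatment of the degenerate case $\mathbf{z}=0$ is wrong, however, on two counts. First, it contradicts your own quadratic: with $\|\mathbf{z}\|_2=0$, $h(\rho)=\frac12\rho^2-\alpha\lambda\rho+\mathrm{const}$ is minimized at $\rho^{*}=\alpha\lambda>0$, not at $\rho^{*}\le 0$. Second, and more substantively, when $\mathbf{z}=0$ the direction step degenerates: the maximum of $\langle \mathbf{\Sigma_{\dot{Y}}}-\lambda\mathbf{1},\mathbf{x}\rangle$ over the nonnegative sphere of radius $\rho$ is then $\rho\,(\sigma_{\dot{\mathbf{Y}},1}-\lambda)$, attained at a $1$-sparse direction, not at the undefined $\mathbf{z}/\|\mathbf{z}\|_2$; carrying your split through correctly gives the global minimizer $(\sigma_{\dot{\mathbf{Y}},1}+(\alpha-1)\lambda,0,\dots,0)$ whenever $\sigma_{\dot{\mathbf{Y}},1}>(1-\alpha)\lambda$, which for the paper's operating choice $\alpha=4>1$ means every nonzero $\dot{\mathbf{Y}}$ --- so the optimum does \emph{not} collapse to $\dot{\mathbf{X}}=0$. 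This is exactly the suppressed case distinction in \cite{lou2018fast}. To be fair, the theorem as stated shares the defect: the formula $\frac{\|\mathbf{z}\|_2+\alpha\lambda}{\|\mathbf{z}\|_2}\mathbf{z}$ is $0/0$ at $\mathbf{z}=0$ and the paper's proof never addresses it, so your error concerns only a boundary case the paper also ignores; your argument for $\mathbf{z}\neq 0$ (equivalently $\sigma_{\dot{\mathbf{Y}},1}>\lambda$) stands as written.
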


\begin{proof}[Proof]
Let $\dot{\mathbf{X}}=\hat{\dot{\mathbf{U}}} \mathbf{\Sigma_{\dot{X}}} \hat{\dot{\mathbf{V}}}^{\top}$ and  $\dot{\mathbf{Y}}= \dot{\mathbf{U}} \mathbf{\Sigma_{\dot{Y}}} \dot{\mathbf{V}}^{\top}$ be the QSVD decomposition of $\dot{\mathbf{X}}$ and $\dot{\mathbf{Y}}$.
\begin{equation}
\begin{aligned}
& \frac{1}{2}|| \dot{\mathbf{Y}}-\dot{\mathbf{X}} ||^{2}_F + \lambda(||\dot{\mathbf{X}}||_{*} - \alpha||\dot{\mathbf{X}}||_{F} ) \\
&= \frac{1}{2}\mathrm{tr}(\dot{\mathbf{Y}}^{\top}\dot{\mathbf{Y}} + \dot{\mathbf{X}}^{\top}\dot{\mathbf{X}} - 2\dot{\mathbf{X}}^{\top}\dot{\mathbf{Y}}) + \lambda(||\dot{\mathbf{X}}||_{*} - \alpha||\dot{\mathbf{X}}||_{F} ) \\
&= \frac{1}{2}( || \mathbf{\Sigma_{\dot{Y}}} ||^{2}_{2} + || \mathbf{\Sigma_{\dot{X}}} ||^{2}_{2}) - \mathrm{tr}(\dot{\mathbf{X}}^{\top}\dot{\mathbf{Y}}) + \lambda|| \mathbf{\Sigma_{\dot{X}}}||_{1} - \lambda \alpha ||\mathbf{\Sigma_{\dot{X}}}||_{2}.
\end{aligned}
\end{equation}
According to the Von Neumann's trace inequality, when $\dot{\mathbf{U}}=\hat{\dot{\mathbf{U}}}$ and $\dot{\mathbf{V}}=\hat{\dot{\mathbf{V}}}$,   $\mathrm{tr}(\dot{\mathbf{X}}^{\top}\dot{\mathbf{Y}}) \leq \mathbf{\Sigma_{\dot{X}}}^{\top}\mathbf{\Sigma_{\dot{Y}}}$.
Then solving (\ref{NNFN}) can be instead computed by solving $\mathbf{\Sigma_{\dot{X}}}$ as
\begin{equation}
\begin{aligned}
\arg&\min_{\mathbf{\Sigma_{\dot{X}}}} \frac{1}{2} ||\mathbf{\Sigma_{\dot{Y}}} - \mathbf{\Sigma_{\dot{X}}} ||^{2}_{2} + \lambda(||\mathbf{\Sigma_{\dot{X}}}||_{1}-\alpha ||\mathbf{\Sigma_{\dot{X}}}||_{2}) \\
& \mathrm{s.t.} \sigma_{\dot{\mathbf{X}},1} \geq \sigma_{\dot{\mathbf{X}},2} \geq \dots \geq \sigma_{\dot{\mathbf{X}},m} \geq 0,
\end{aligned}
\label{l1-l2}
\end{equation}
where $\mathbf{\Sigma_{\dot{X}}}=\mathrm{diag}(\sigma_{\dot{\mathbf{X}},1},\sigma_{\dot{\mathbf{X}},2},\dots,\sigma_{\dot{\mathbf{X}},m})$. According to \cite{lou2018fast}, the global optimal solution of model (\ref{l1-l2}) can be obtained by
\begin{equation}
\mathbf{\tilde{\Sigma}_{\dot{X}} } = \frac{|| \mathbf{z} ||_{2}+\alpha\lambda}{|| \mathbf{z} ||_{2}}\mathbf{z}, 
\end{equation}
where $\mathbf{z} = \max(\mathbf{\Sigma_{\dot{Y}}}-\lambda,0)$.
\end{proof}

According to Theorem \ref{theorem01}, the globally optimal solution for the model (\ref{NNFN}) can be expressed as $\tilde{\dot{\mathbf{X}}} = \dot{\mathbf{U}} \mathbf{\tilde{\Sigma}_{\dot{X}}} \dot{\mathbf{V}}^{\top}$. Larger singular values hold more information and should be shrunk less. Therefore, we adjust the shrinkage threshold $\lambda$ accordingly.
If $\sigma_{\dot{\mathbf{Y}},i} \leq \lambda$, we keep the current shrinkage threshold: $ z_i = \max(\sigma_{\dot{\mathbf{Y}},i} - \lambda, 0) = 0$. However, if $\sigma_{\dot{\mathbf{Y}},i} > \lambda$, we reduce the shrinkage threshold to $\lambda/2$: $ z_i = \sigma_{\dot{\mathbf{Y}},i} - \lambda/2$.
We observe that $\frac{|| \mathbf{z} ||_{2} + \alpha\lambda/2}{|| \mathbf{z} ||_{2}} > 1$, which could lead to $ \tilde{\sigma}_{\dot{\mathbf{X}},i} = \frac{|| \mathbf{z} ||_{2} + \alpha\lambda/2}{|| \mathbf{z} ||_{2}} z_i > \sigma_{\dot{\mathbf{Y}},i}$. This is not allowed in singular value shrinkage. To prevent this, when $\frac{|| \mathbf{z} ||_{2} + \alpha\lambda/2}{|| \mathbf{z} ||_{2}} z_i > \sigma_{\dot{\mathbf{Y}},i}$, we set $\tilde{\sigma}_{\dot{\mathbf{X}},i} = \sigma_{\dot{\mathbf{Y}},i}$.
In conclusion, the final solution for $\mathbf{\tilde{\Sigma}_{\dot{X}} }$ is presented as follows:
\begin{equation}  
\tilde{\sigma}_{\dot{\mathbf{X}},i} = 
\left\{  
     \begin{array}{cl}  
     \sigma_{\dot{\mathbf{Y}},i}, & \mathrm{if} \enspace \sigma_{\dot{\mathbf{Y}},i} > \frac{K\lambda}{2(K-1)};\\
     K(\sigma_{\dot{\mathbf{Y}},i}-\lambda/2), & \mathrm{if} \enspace \lambda < \sigma_{\dot{\mathbf{Y}},i} \leq \frac{K\lambda}{2(K-1)}; \enspace \\
     0, & \mathrm{if} \enspace \sigma_{\dot{\mathbf{Y}},i} \leq \lambda. \\    
     \end{array}  
\right.  
\label{solution}
\end{equation}
where $K = 1 + \frac{\alpha\lambda}{2|| \mathbf{z} ||_{2}}$, $\mathbf{z} = \mathrm{diag}(z_1,z_2,\dots,z_m)$, and 
$$ z_i = \left\{  
     \begin{array}{cl} \sigma_{\dot{\mathbf{Y}},i}-\lambda/2, & \mathrm{if} \enspace \sigma_{\dot{\mathbf{Y}},i} > \lambda;\\ 
     0, & \mathrm{if} \enspace \sigma_{\dot{\mathbf{Y}},i} \leq \lambda. 
     \end{array}  
\right. $$

Figure \ref{singular}(a) illustrates the shrinkage process described in Eq. (\ref{solution}) on a $35 \times 35$ patch with $\sigma=60$. We compare our proposed QNMF method with previous studies such as QWNNM \cite{yu2019quaternion} and QWSNM \cite{zhang2024quaternion}. Similar to QWNNM and QWSNM, QNMF applies less shrinkage to larger singular values, but it avoids complex weight calculations. A key distinction is that QNMF offers a closed-form solution, eliminating the need for iterative shrinkage and simplifying computations.
Figure \ref{singular}(b) shows the variation in shrinkage among the three methods. QNMF incorporates truncation to prevent excessive increases in singular values, setting $\tilde{\sigma}_{\dot{\mathbf{X}},i}=\sigma_{\dot{\mathbf{Y}},i}$ when $\sigma_{\dot{\mathbf{Y}},i} > \frac{K\lambda}{2(K-1)}$. This approach is similar to truncated nuclear norms (TNN) \cite{yang2021weighted}, but with a significant difference: while TNN uses a fixed truncation threshold, QNMF adjusts dynamically.

\begin{figure}[t]
	\centering
    \renewcommand\arraystretch{0.4}
	{\fontsize{8pt}{\baselineskip}\selectfont 
	\begin{tabular}{cc}
        \includegraphics[width=0.4\textwidth, trim={14 19 39 21}, clip]{./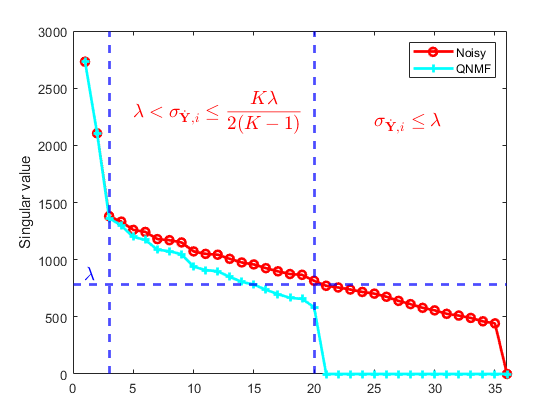} &
        \includegraphics[width=0.4\textwidth, trim={14 19 39 21}, clip]{./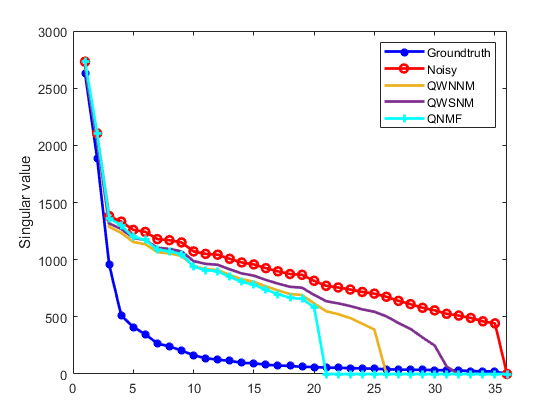} \\
        (a) & (b) \\
	\end{tabular}
	}
	\caption{Singular value shrinkage diagrammed.}
	\label{singular}
\end{figure}

\subsection{QNMF for image linear inverse problem}
We extend QNMF to the image linear inverse problem given by:
\begin{equation}
\begin{aligned}
\dot{\mathbf{X}} = \arg\min_{\dot{\mathbf{X}}} \frac{\gamma}{2}|| \dot{\mathbf{A}}\dot{\mathbf{X}} - \dot{\mathbf{Y}} ||^{2}_F + \lambda(||\dot{\mathbf{X}}||_{*} - \alpha||\dot{\mathbf{X}}||_{F} ) 
\label{deblur-NNFN}
\end{aligned}
\end{equation}
where $ \dot{\mathbf{A}} \in \mathbb{ Q }^{m\times m}$ represents a 2D quaternion degradation matrix.For instance, if $\dot{\mathbf{A}}=\dot{\mathbf{I}}$, it pertains to the denoising problem discussed in subsection \ref{3.1}. Alternatively, if $\dot{\mathbf{A}}$ denotes a blur kernel, it corresponds to color image deblurring. $\dot{\mathbf{X}} \in \mathbb{ Q }^{m\times n} $ and $\dot{\mathbf{Y}} \in \mathbb{ Q }^{m\times n} $ represent the original and degraded images, respectively.

We utilize the alternating direction multiplier method (ADMM) to solve the model (\ref{deblur-NNFN}). This involves introducing an auxiliary variable $\dot{\mathbf{Z}}$ with the constraint $\dot{\mathbf{X}}=\dot{\mathbf{Z}}$, and the Lagrange multiplier $\dot{\mathbf{\eta}}$. The augmented Lagrange function of (\ref{deblur-NNFN}) is then expressed as
\begin{equation}
\begin{aligned}
\mathcal{L}(\dot{\mathbf{X}}, \dot{\mathbf{Z}}, \dot{\mathbf{\eta}}, \beta) &= \frac{\gamma}{2}|| \dot{\mathbf{A}}\dot{\mathbf{X}} - \dot{\mathbf{Y}} ||^{2}_F + \lambda(||\dot{\mathbf{Z}}||_{*} - \alpha||\dot{\mathbf{Z}}||_{F} ) + \frac{\beta}{2}||\dot{\mathbf{X}}-\dot{\mathbf{Z}}||_{F}^{2} + \langle{ \dot{\bm{\eta}}, \dot{\mathbf{X}}-\dot{\mathbf{Z}} }\rangle. 
\label{en-NNFN}
\end{aligned}
\end{equation}
where $\beta > 0$ is the penalty parameter. The update strategy for (\ref{en-NNFN}) is as follows:
\begin{align}\left\{\begin{aligned}
\dot{\mathbf{X}}^{(k+1)} &= \arg\min_{\dot{\mathbf{X}}} \mathcal{L}(\dot{\mathbf{X}}, \dot{\mathbf{Z}}^{(k)}, \dot{\mathbf{\eta}}^{(k)}, \beta^{(k)}), \\
\dot{\mathbf{Z}}^{(k+1)} &= \arg\min_{\dot{\mathbf{Z}}} \mathcal{L}(\dot{\mathbf{X}}^{(k+1)}, \dot{\mathbf{Z}}, \dot{\mathbf{\eta}}^{(k)}, \beta^{(k)}), \\
\dot{\mathbf{\eta}}^{(k+1)} &= \dot{\mathbf{\eta}}^{(k)} + \beta^{(k)}(\dot{\mathbf{X}}^{(k+1)} - \dot{\mathbf{Z}}^{(k+1)}), \\
\beta^{(k+1)} &= \mu\beta^{(k)},\quad(\mu>1). \\
\end{aligned}\right.
\end{align}
The minimization (\ref{en-NNFN}) readily splits into two subproblems: $\dot{\mathbf{X}}$ and $\dot{\mathbf{Z}}$.

The $\dot{\mathbf{X}}$ sub-problem is written as
\begin{equation}
\begin{aligned}
\dot{\mathbf{X}} = \arg\min_{\dot{\mathbf{X}}} \frac{\gamma}{2}|| \dot{\mathbf{A}}\dot{\mathbf{X}} - \dot{\mathbf{Y}} ||^{2}_F +  \frac{\beta}{2}||\dot{\mathbf{X}}-\dot{\mathbf{Z}}||_{F}^{2} + \langle{ \dot{\mathbf{\eta}}, \dot{\mathbf{X}}-\dot{\mathbf{Z}} }\rangle. 
\end{aligned}
\end{equation}
By quaternion theory \cite{Xu2015Theory}, the closed solution of the problem is
\begin{equation}
\begin{aligned}
\dot{\mathbf{X}} = (\gamma\dot{\mathbf{A}}^{*}\dot{\mathbf{A}}+\beta\dot{\mathbf{I}})^{-1}(\gamma\dot{\mathbf{A}}^{*}\dot{\mathbf{Y}}+\beta\dot{\mathbf{Z}}-\dot{\mathbf{\eta}}).
\end{aligned}
\label{qfft}
\end{equation}
The left inverse of a quaternion matrix is defined as follows: if a quaternion matrix $\dot{\mathbf{U}}$ is known and there exists a quaternion matrix $\dot{\mathbf{V}}$ such that $\dot{\mathbf{V}}\dot{\mathbf{U}}=\mathbf{I}$, then $\dot{\mathbf{V}}$ is the left inverse of $\dot{\mathbf{U}}$, denoted as $\dot{\mathbf{U}}^{-1}$. To avoid complex inverse operations, this can be efficiently solved using the quaternion fast Fourier transform \cite{sangwine1996fourier}.

According to \eqref{en-NNFN}, the $\dot{\mathbf{Z}}$ subproblem can be reformulated as
\begin{equation}
\begin{aligned}
\dot{\mathbf{Z}} = \arg\min_{\dot{\mathbf{Z}}} \, \lambda(||\dot{\mathbf{Z}}||_{*} - \alpha||\dot{\mathbf{Z}}||_{F} ) + \frac{\beta}{2}||\dot{\mathbf{Z}} - ( \dot{\mathbf{X}} + \frac{ \dot{\mathbf{\eta}}}{\beta})||_{F}^{2}.
\end{aligned}
\label{zz}
\end{equation}
Therefore, the $\dot{\mathbf{Z}}$ subproblem can be solved using \eqref{solution}. The entire optimization process is summarized in Algorithm \ref{ag1}. Subsequently, we provide weak convergence results for Algorithm \ref{ag1}.

\begin{theorem}
Suppose that the parameter sequence $\{\beta^{(k)}\}$ is unbounded and the sequences $\{\dot{\mathbf{X}}^{(k)}\}$ and $\{\dot{\mathbf{Z}}^{(k)}\}$ are generated by Algorithm \ref{ag1}, which satisfies: 
\begin{align}
&\lim_{k\rightarrow +\infty} || \dot{\mathbf{X}}^{(k+1)}-\dot{\mathbf{Z}}^{(k+1)} ||_{F} = 0, \label{converge1}\\
&\lim_{k\rightarrow +\infty} || \dot{\mathbf{X}}^{(k+1)}-\dot{\mathbf{X}}^{(k)} ||_{F} = 0, \label{converge2}\\
&\lim_{k\rightarrow +\infty} || \dot{\mathbf{Z}}^{(k+1)}-\dot{\mathbf{Z}}^{(k)} ||_{F} = 0. \label{converge3}
\end{align}
\label{theorem02}
\end{theorem}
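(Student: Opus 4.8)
The plan is to treat this as a standard augmented-Lagrangian convergence argument whose engine is the multiplier update combined with the geometric blow-up $\beta^{(k)}=\mu^{k}\beta^{(0)}\to+\infty$. The single fact that drives everything is the \emph{boundedness of the multiplier sequence} $\{\dot{\bm{\eta}}^{(k)}\}$: once that is secured, \eqref{converge1} is almost immediate, and \eqref{converge2}--\eqref{converge3} follow by elementary manipulation of the subproblem optimality conditions. So I would first isolate and prove the boundedness of $\{\dot{\bm{\eta}}^{(k)}\}$ as the key lemma, then deduce the three limits in the order \eqref{converge1}, \eqref{converge2}, \eqref{converge3}.

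For the key lemma I would exploit the optimality condition of the $\dot{\mathbf{Z}}$-subproblem \eqref{zz}. Writing $g(\dot{\mathbf{Z}})=\lambda\|\dot{\mathbf{Z}}\|_{*}-\lambda\alpha\|\dot{\mathbf{Z}}\|_{F}$, the first-order stationarity of the (globally optimal) minimizer $\dot{\mathbf{Z}}^{(k+1)}$ reads $0\in\partial g(\dot{\mathbf{Z}}^{(k+1)})-\beta^{(k)}(\dot{\mathbf{X}}^{(k+1)}-\dot{\mathbf{Z}}^{(k+1)})-\dot{\bm{\eta}}^{(k)}$, and substituting the multiplier update $\dot{\bm{\eta}}^{(k+1)}=\dot{\bm{\eta}}^{(k)}+\beta^{(k)}(\dot{\mathbf{X}}^{(k+1)}-\dot{\mathbf{Z}}^{(k+1)})$ collapses this to $\dot{\bm{\eta}}^{(k+1)}\in\partial g(\dot{\mathbf{Z}}^{(k+1)})$. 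The point is that this subdifferential lies in a fixed bounded set: every quaternion subgradient of the nuclear norm has spectral norm at most $1$ (hence Frobenius norm at most $\sqrt{n}$ by QSVD), while the Frobenius-norm gradient $\dot{\mathbf{Z}}/\|\dot{\mathbf{Z}}\|_{F}$ is a unit matrix. This yields the uniform estimate $\|\dot{\bm{\eta}}^{(k+1)}\|_{F}\le\lambda(\sqrt{n}+\alpha)=:M$ for every $k$.

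Given the bound $M$, \eqref{converge1} drops out of the multiplier update rearranged as $\dot{\mathbf{X}}^{(k+1)}-\dot{\mathbf{Z}}^{(k+1)}=(\dot{\bm{\eta}}^{(k+1)}-\dot{\bm{\eta}}^{(k)})/\beta^{(k)}$, since $\|\dot{\mathbf{X}}^{(k+1)}-\dot{\mathbf{Z}}^{(k+1)}\|_{F}\le 2M/\beta^{(k)}\to0$. For \eqref{converge2} I would return to the closed form \eqref{qfft} of the $\dot{\mathbf{X}}$-subproblem and write the residual as $\dot{\mathbf{X}}^{(k+1)}-\dot{\mathbf{Z}}^{(k)}=(\gamma\dot{\mathbf{A}}^{*}\dot{\mathbf{A}}+\beta^{(k)}\dot{\mathbf{I}})^{-1}[\gamma\dot{\mathbf{A}}^{*}(\dot{\mathbf{Y}}-\dot{\mathbf{A}}\dot{\mathbf{Z}}^{(k)})-\dot{\bm{\eta}}^{(k)}]$; because $\gamma\dot{\mathbf{A}}^{*}\dot{\mathbf{A}}\succeq0$ the inverse has operator norm at most $1/\beta^{(k)}$, so this residual is $O(1/\beta^{(k)})\to0$. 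Then $\dot{\mathbf{X}}^{(k+1)}-\dot{\mathbf{X}}^{(k)}=(\dot{\mathbf{X}}^{(k+1)}-\dot{\mathbf{Z}}^{(k)})-(\dot{\mathbf{X}}^{(k)}-\dot{\mathbf{Z}}^{(k)})$, whose two pieces vanish by the residual estimate and by \eqref{converge1} (with the index shifted), giving \eqref{converge2}. Finally \eqref{converge3} is a pure triangle inequality, $\dot{\mathbf{Z}}^{(k+1)}-\dot{\mathbf{Z}}^{(k)}=(\dot{\mathbf{Z}}^{(k+1)}-\dot{\mathbf{X}}^{(k+1)})+(\dot{\mathbf{X}}^{(k+1)}-\dot{\mathbf{X}}^{(k)})+(\dot{\mathbf{X}}^{(k)}-\dot{\mathbf{Z}}^{(k)})$, with every term already shown to tend to $0$.

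The main obstacle I anticipate is twofold. First, $g$ is a \emph{difference of convex} functions, so the subdifferential step must be justified with care: I would handle $\dot{\mathbf{Z}}^{(k+1)}\neq0$ using that $\|\cdot\|_{F}$ is smooth there, and absorb the degenerate case $\dot{\mathbf{Z}}^{(k+1)}=0$ into the (still bounded) Frobenius subdifferential ball, so the bound $M$ survives either way. Second, and more delicate, the residual estimate for \eqref{converge2} requires $\{\dot{\mathbf{Z}}^{(k)}\}$ (equivalently $\{\dot{\mathbf{X}}^{(k)}\}$, by \eqref{converge1}) to stay bounded so that $\gamma\dot{\mathbf{A}}^{*}(\dot{\mathbf{Y}}-\dot{\mathbf{A}}\dot{\mathbf{Z}}^{(k)})$ remains controlled as $\beta^{(k)}\to\infty$. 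I would either take boundedness of the primal iterates as a standing regularity property of the generated sequences, or establish it separately, for instance by showing the iterates cannot grow faster than $\beta^{(k)}$ using the lower bound $\|\dot{\mathbf{Z}}\|_{*}-\alpha\|\dot{\mathbf{Z}}\|_{F}\ge(1-\alpha)\|\dot{\mathbf{Z}}\|_{F}$ (for $\alpha\le1$) together with the bounded multiplier. This boundedness is the one place where the nonconvexity and the unbounded penalty genuinely interact, and it is where I would concentrate the rigor.
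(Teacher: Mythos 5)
Your skeleton coincides with the paper's proof in Appendix~A at every structural joint: a uniform bound on the multipliers $\{\dot{\bm{\eta}}^{(k)}\}$, then \eqref{converge1} read off from $\dot{\mathbf{X}}^{(k+1)}-\dot{\mathbf{Z}}^{(k+1)}=(\dot{\bm{\eta}}^{(k+1)}-\dot{\bm{\eta}}^{(k)})/\beta^{(k)}$, an $O(1/\beta^{(k)})$ estimate on the residual of the closed form \eqref{qfft} for \eqref{converge2}, and triangle inequalities for \eqref{converge3}. Your derivation of the multiplier bound is a genuine variant, though: you place $\dot{\bm{\eta}}^{(k+1)}\in\partial g(\dot{\mathbf{Z}}^{(k+1)})$ and bound the subdifferential by $\lambda(\sqrt{n}+\alpha)$, whereas the paper reads the bound off the explicit shrinkage rule: expressing $\beta^{(k)-1}\dot{\bm{\eta}}^{(k)}+\dot{\mathbf{X}}^{(k+1)}$ and $\dot{\mathbf{Z}}^{(k+1)}$ in the same QSVD basis, each singular value moves by at most $\lambda/\beta^{(k)}$, giving $\|\dot{\bm{\eta}}^{(k+1)}\|_{F}\le\lambda\sqrt{M}$ as in \eqref{eta}. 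Your version is representation-free and would survive a change of the thresholding formula, at the cost of the DC-subdifferential care you correctly flag; the paper's version avoids subdifferential calculus entirely but is tied to the closed form \eqref{solution}. Your triangle-inequality shortcut for \eqref{converge3} is also cleaner than the paper's longer computation and is valid once \eqref{converge1}--\eqref{converge2} are in hand.

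The genuine gap is exactly the point you defer: boundedness of $\{\dot{\mathbf{Z}}^{(k)}\}$, without which $\gamma\dot{\mathbf{A}}^{*}(\dot{\mathbf{Y}}-\dot{\mathbf{A}}\dot{\mathbf{Z}}^{(k)})$ in your residual estimate could grow with $\beta^{(k)}$ and \eqref{converge2} would not follow. Neither of your proposed fixes closes it. Declaring boundedness a ``standing regularity property'' assumes part of the conclusion; and the coercivity bound $\|\dot{\mathbf{Z}}\|_{*}-\alpha\|\dot{\mathbf{Z}}\|_{F}\ge(1-\alpha)\|\dot{\mathbf{Z}}\|_{F}$ is vacuous in the paper's actual operating regime, which sets $\alpha=4$ throughout the experiments: for rank-one $\dot{\mathbf{Z}}$ one has $\|\dot{\mathbf{Z}}\|_{*}=\|\dot{\mathbf{Z}}\|_{F}$, so the regularizer equals $(1-\alpha)\|\dot{\mathbf{Z}}\|_{F}\rightarrow-\infty$ along rank-one rays and is not even bounded below, let alone coercive. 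The paper supplies the missing step with a Lagrangian chain you skipped entirely: bounded multipliers give $\|\dot{\bm{\eta}}^{(k+1)}-\dot{\bm{\eta}}^{(k)}\|_{F}\le M_{0}$; the exact identity \eqref{L}, namely $\mathcal{L}(\dot{\mathbf{X}}^{(k+1)},\dot{\mathbf{Z}}^{(k+1)},\dot{\bm{\eta}}^{(k+1)},\beta^{(k+1)})=\mathcal{L}(\dot{\mathbf{X}}^{(k+1)},\dot{\mathbf{Z}}^{(k+1)},\dot{\bm{\eta}}^{(k)},\beta^{(k)})+\frac{\beta^{(k+1)}+\beta^{(k)}}{2(\beta^{(k)})^{2}}\|\dot{\bm{\eta}}^{(k+1)}-\dot{\bm{\eta}}^{(k)}\|_{F}^{2}$, combined with the decrease under exact subproblem minimization and the geometric growth $\beta^{(k)}=\mu^{k}\beta^{(0)}$ (so that $\sum_{k}\mu^{-k}<\infty$ --- this is where unboundedness of $\beta^{(k)}$ with $\mu>1$ actually enters), shows the augmented Lagrangian is bounded above; the rearrangement \eqref{XZ} then bounds $\frac{\gamma}{2}\|\dot{\mathbf{A}}\dot{\mathbf{X}}^{(k+1)}-\dot{\mathbf{Y}}\|_{F}^{2}+\lambda(\|\dot{\mathbf{Z}}^{(k+1)}\|_{*}-\alpha\|\dot{\mathbf{Z}}^{(k+1)}\|_{F})$, from which the paper extracts boundedness of $\{\dot{\mathbf{Z}}^{(k)}\}$ and hence of $\{\dot{\mathbf{X}}^{(k)}\}$ via the multiplier update. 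To repair your proposal you must import this chain (or an equivalent primal bound) before your residual estimate; be aware, too, that the paper's own final inference from \eqref{XZ} to boundedness of $\{\dot{\mathbf{Z}}^{(k)}\}$ quietly leans on lower-boundedness of the regularizer and is therefore itself delicate when $\alpha>1$, so this is truly the load-bearing point of the whole theorem, not a technicality to be waved through.
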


The detailed proof of Theorem \ref{theorem02} is available in Appendix A. It's noteworthy that the proof hinges on the unboundedness of $\beta^{(k)}$. Rapid increases in $\beta^{(k)}$ can prematurely halt iterations, hindering the attainment of a satisfactory solution. Thus, smaller $\mu$ values are employed in the experiments to restrain the rapid growth of $\beta^{(k)}$.

\begin{algorithm}[t]
\SetAlgoNoLine
\SetKwInOut{Input}{\textbf{Input}}
\SetKwInOut{Output}{\textbf{Output}}

\Input{
    Observation $\dot{\mathbf{Y}}$; Initialize $\dot{\mathbf{X}}^{(0)}=\dot{\mathbf{Y}}$, $\dot{\mathbf{Z}}^{(0)}=\dot{\mathbf{X}}^{(0)}$, $\dot{\mathbf{\eta}}^{(0)}=0$; Parameters $\gamma$, $\lambda$, $\beta$, $\mu$, $\alpha$, $K$;\\
    }

\Output{
    The reconstructed image $\dot{\mathbf{X}}^{(K)}$; \\
    }

    \For{$k=0,1,\dots,K-1$}{
    Update $\dot{\mathbf{X}}^{(k+1)}$ by (\ref{qfft}) \;
    Update $\dot{\mathbf{Z}}^{(k+1)}$ by (\ref{zz}) \;
    Update $\dot{\mathbf{\eta}}^{(k+1)} = \dot{\mathbf{\eta}}^{(k)} + \beta^{(k)}(\dot{\mathbf{X}}^{(k+1)} - \dot{\mathbf{Z}}^{(k+1)})$ \;
    Update $\beta^{(k+1)} = \mu\beta^{(k)}$ \;
    }
    \caption{QNMF Algorithm}
    \label{ag1}
\end{algorithm}

\subsection{QNMF for matrix completion}
In this section, we apply QNMF to tackle the matrix completion problem and introduce a model based on QNMF:
\begin{equation}
\begin{aligned}
\min_{\dot{\mathbf{X}}} ||\dot{\mathbf{X}}||_{*} - \alpha||\dot{\mathbf{X}}||_{F} \quad s.t \quad \mathcal{P}_{\Omega}(\dot{\mathbf{X}}) = \mathcal{P}_{\Omega}(\dot{\mathbf{Y}}),
\label{mc-NNFN}
\end{aligned}
\end{equation}
where $\Omega$ is a binary mask matrix of the same size as $\dot{\mathbf{Y}}$. Zeroes in $\Omega$ indicate unobserved values. $\mathcal{P}_{\Omega}(\dot{\mathbf{Y}})=\Omega\odot\dot{\mathbf{Y}}$, where $\odot$ denotes the Hadamard product. This constraint ensures consistency between the estimated $\dot{\mathbf{X}}$ and the observed $\dot{\mathbf{Y}}$ at the observed entries.

By introducing the auxiliary variable $\dot{\mathbf{Z}}$, and the Lagrange multiplier $\dot{\mathbf{\eta}}$,  the augmented Lagrange function for (\ref{mc-NNFN}) is expressed as:
\begin{equation}
\begin{aligned}
\mathcal{L}(\dot{\mathbf{X}}, \dot{\mathbf{Z}}, \dot{\mathbf{\eta}}, \beta) &=  \lambda(||\dot{\mathbf{X}}||_{*} - \alpha||\dot{\mathbf{X}}||_{F} ) + \frac{\beta}{2}||\dot{\mathbf{Y}}-\dot{\mathbf{X}} - \dot{\mathbf{Z}} ||_{F}^{2} \\
&~~~+ \langle{ \dot{\mathbf{\eta}},\dot{\mathbf{Y}}-\dot{\mathbf{X}} - \dot{\mathbf{Z}} }\rangle \quad s.t \quad \mathcal{P}_{\Omega}(\dot{\mathbf{Z}}) = 0. 
\label{en-mc}
\end{aligned}
\end{equation}
The update strategy for (\ref{en-mc}) is as follows:
\begin{align}\left\{\begin{aligned}
\dot{\mathbf{Z}}^{(k+1)} &= \arg\min_{\dot{\mathbf{Z}}} \mathcal{L}(\dot{\mathbf{X}}^{(k+1)}, \dot{\mathbf{Z}}, \dot{\mathbf{\eta}}^{(k)}, \beta^{(k)}),  \quad s.t \quad ||\mathcal{P}_{\Omega}(\dot{\mathbf{Z}})||_{F}^{2} = 0,  \\
\dot{\mathbf{X}}^{(k+1)} &= \arg\min_{\dot{\mathbf{X}}} \mathcal{L}(\dot{\mathbf{X}}, \dot{\mathbf{Z}}^{(k)}, \dot{\mathbf{\eta}}^{(k)}, \beta^{(k)}), \\
\dot{\mathbf{\eta}}^{(k+1)} &= \dot{\mathbf{\eta}}^{(k)} + \beta^{(k)}(\dot{\mathbf{Y}}- \dot{\mathbf{X}}^{(k+1)} - \dot{\mathbf{Z}}^{(k+1)}), \\
\beta^{(k+1)} &= \mu\beta^{(k)},\quad(\mu>1). \\
\end{aligned}\right.
\end{align}
The $\dot{\mathbf{Z}}$ subproblem is
\begin{equation}
\begin{aligned}
\dot{\mathbf{Z}} = \arg\min_{\dot{\mathbf{Z}}} \, ||\dot{\mathbf{Y}} -  \dot{\mathbf{Z}} + \frac{ \dot{\mathbf{\eta}}}{\beta} - \dot{\mathbf{X}}||_{F}^{2} \; s.t \; ||\mathcal{P}_{\Omega}(\dot{\mathbf{Z}})||_{F}^{2} = 0.
\end{aligned}
\label{z-mc}
\end{equation}
This quadratic problem can be easily solved. The $\dot{\mathbf{X}}$ subproblem is
\begin{equation}
\begin{aligned}
\dot{\mathbf{X}} = \arg\min_{\dot{\mathbf{X}}} \, \lambda(||\dot{\mathbf{X}}||_{*} - \alpha||\dot{\mathbf{X}}||_{F} ) + \frac{\beta}{2}||(\dot{\mathbf{Y}} -  \dot{\mathbf{Z}} + \frac{ \dot{\mathbf{\eta}}}{\beta}) - \dot{\mathbf{X}}||_{F}^{2}.
\end{aligned}
\label{x-mc}
\end{equation}
The solution of this subproblem is therefore given in (\ref{solution}).  The entire algorithmic procedure is described in Algorithm \ref{ag2}.

\begin{theorem}
Suppose that the parameter sequence $\{\beta^{(k)}\}$ is unbounded and the sequences $\{\dot{\mathbf{X}}^{(k)}\}$ and $\{\dot{\mathbf{Z}}^{(k)}\}$ are generated by Algorithm \ref{ag2}, which satisfies: 
\begin{align}
&\lim_{k\rightarrow +\infty} || \dot{\mathbf{X}}^{(k+1)}-\dot{\mathbf{X}}^{(k)} ||_{F} = 0, \label{converge21}\\
&\lim_{k\rightarrow +\infty} || \dot{\mathbf{Y}} - \dot{\mathbf{X}}^{(k+1)}-\dot{\mathbf{Z}}^{(k+1)} ||_{F} = 0. \label{converge22}
\end{align}
\label{theorem03}
\end{theorem}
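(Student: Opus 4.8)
The plan is to follow the same scheme used for Theorem~\ref{theorem02} in Appendix~A, adapting every step to the matrix-completion Lagrangian \eqref{en-mc}, in which the linear constraint reads $\dot{\mathbf{Y}}-\dot{\mathbf{X}}-\dot{\mathbf{Z}}=0$ rather than $\dot{\mathbf{X}}-\dot{\mathbf{Z}}=0$. The whole argument rests on one structural fact: the multiplier sequence $\{\dot{\bm{\eta}}^{(k)}\}$ stays bounded. Once that is in hand, both limits \eqref{converge21} and \eqref{converge22} drop out of the dual update together with the hypothesis $\beta^{(k)}\to+\infty$.

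First I would write the first-order optimality condition for the $\dot{\mathbf{X}}$-subproblem \eqref{x-mc}. Since $\dot{\mathbf{X}}^{(k+1)}$ minimizes the augmented Lagrangian over $\dot{\mathbf{X}}$ with $\dot{\mathbf{Z}}^{(k+1)}$ already updated, there is a subgradient $\dot{\mathbf{G}}^{(k+1)}\in\partial(\|\dot{\mathbf{X}}^{(k+1)}\|_{*}-\alpha\|\dot{\mathbf{X}}^{(k+1)}\|_{F})$ with
\[
\lambda\dot{\mathbf{G}}^{(k+1)}-\dot{\bm{\eta}}^{(k)}-\beta^{(k)}\bigl(\dot{\mathbf{Y}}-\dot{\mathbf{X}}^{(k+1)}-\dot{\mathbf{Z}}^{(k+1)}\bigr)=0 .
\]
Substituting the dual update $\dot{\bm{\eta}}^{(k+1)}=\dot{\bm{\eta}}^{(k)}+\beta^{(k)}(\dot{\mathbf{Y}}-\dot{\mathbf{X}}^{(k+1)}-\dot{\mathbf{Z}}^{(k+1)})$ collapses this to $\dot{\bm{\eta}}^{(k+1)}=\lambda\dot{\mathbf{G}}^{(k+1)}$. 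The key estimate is that this subgradient set is uniformly bounded: every subgradient of the nuclear norm has spectral norm at most one, hence Frobenius norm at most $\sqrt{n}$, while the gradient of $\|\cdot\|_{F}$ has unit Frobenius norm away from the origin. Consequently $\|\dot{\bm{\eta}}^{(k+1)}\|_{F}\le\lambda(\sqrt{n}+\alpha)$ for every $k$, so $\{\dot{\bm{\eta}}^{(k)}\}$ is bounded.

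With boundedness secured, \eqref{converge22} is immediate: rearranging the dual update gives $\dot{\mathbf{Y}}-\dot{\mathbf{X}}^{(k+1)}-\dot{\mathbf{Z}}^{(k+1)}=(\dot{\bm{\eta}}^{(k+1)}-\dot{\bm{\eta}}^{(k)})/\beta^{(k)}$, whose numerator is bounded while $\beta^{(k)}\to+\infty$; since $\beta^{(k+1)}=\mu\beta^{(k)}$ with $\mu>1$, the residuals are in fact summable. For \eqref{converge21} I would combine this feasibility relation, which yields $\dot{\mathbf{X}}^{(k+1)}=\dot{\mathbf{Y}}-\dot{\mathbf{Z}}^{(k+1)}-(\dot{\bm{\eta}}^{(k+1)}-\dot{\bm{\eta}}^{(k)})/\beta^{(k)}$, with the projection form of the $\dot{\mathbf{Z}}$-update \eqref{z-mc}: because $\dot{\mathbf{Z}}^{(k+1)}$ is the nonexpansive projection of $\dot{\mathbf{Y}}-\dot{\mathbf{X}}^{(k)}+\dot{\bm{\eta}}^{(k)}/\beta^{(k)}$ onto $\{\mathcal{P}_{\Omega}(\cdot)=0\}$, the increments $\dot{\mathbf{X}}^{(k+1)}-\dot{\mathbf{X}}^{(k)}$ and $\dot{\mathbf{Z}}^{(k+1)}-\dot{\mathbf{Z}}^{(k)}$ are coupled and governed by residual terms of order $1/\beta^{(k)}$.

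I expect this last step to be the main obstacle. The difficulty is that the regularizer $\|\cdot\|_{*}-\alpha\|\cdot\|_{F}$ is nonconvex, so this is not a textbook convex ADMM and the iterate increments do not contract for free; a naive chaining of the nonexpansive estimates only produces a recursion of the form $a_{k+1}\le a_{k}+o(1)$, which is too weak to force $a_k\to0$. To close \eqref{converge21} I would instead establish a sufficient-decrease (Lyapunov) inequality for $\mathcal{L}(\dot{\mathbf{X}}^{(k)},\dot{\mathbf{Z}}^{(k)},\dot{\bm{\eta}}^{(k)},\beta^{(k)})$ along the iterations, exploiting the strong convexity of modulus $\beta^{(k)}$ injected by the quadratic penalty to dominate the bounded concave curvature of $-\alpha\|\cdot\|_{F}$, and then sum the per-step decrease while invoking $\sum_{k}1/\beta^{(k)}<+\infty$. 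A secondary technical point is to confirm that the subgradient and optimality calculus used above remains valid over the quaternion field, which I would justify through the standard real representation of quaternion matrices and the quaternion singular value decomposition.
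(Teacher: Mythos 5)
Your treatment of the multiplier and of \eqref{converge22} is sound and essentially coincides with the paper's argument: your identity $\dot{\bm{\eta}}^{(k+1)}=\lambda\dot{\mathbf{G}}^{(k+1)}$ is exactly the content of the paper's computation, which instead bounds the thresholding directly via the per-singular-value estimate $\sigma^{(k)}_{i}-\sigma^{(k)}_{\dot{\mathbf{X}},i}\leq \lambda/\beta^{(k)}$ to get $\|\dot{\bm{\eta}}^{(k+1)}\|_{F}\leq\lambda\sqrt{M}$, after which \eqref{converge22} follows from the dual update and the geometric growth of $\beta^{(k)}$, just as you say.

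The genuine gap is in your plan for \eqref{converge21}. Your Lyapunov route rests on the premise that the strong convexity of modulus $\beta^{(k)}$ from the quadratic penalty dominates ``the bounded concave curvature of $-\alpha\|\cdot\|_{F}$'' --- but that curvature is \emph{not} bounded: $-\alpha\|\dot{\mathbf{X}}\|_{F}$ has a concave kink at the origin that no quadratic can absorb. Along a rank-one ray the $\dot{\mathbf{X}}$-subproblem objective behaves like $\lambda(1-\alpha)t+\tfrac{\beta}{2}(t-t_{0})^{2}$, which for $\alpha>1$ (the paper uses $\alpha=4$) is nonconvex at $t=0$, so the subproblem is not strongly convex and the standard sufficient-decrease inequality with a quadratic gap does not follow from global minimality alone. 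Without that quadratic gap, summing the per-step decreases of $\mathcal{L}$ only yields $\sum_{k}(\mathcal{L}^{(k)}-\mathcal{L}^{(k+1)})<\infty$, which does not force $\|\dot{\mathbf{X}}^{(k+1)}-\dot{\mathbf{X}}^{(k)}\|_{F}\to 0$. So as sketched, the step you yourself flagged as the main obstacle remains open.

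Moreover, no Lyapunov or contraction argument is needed: the paper's route (written out for Theorem \ref{theorem04} and mirrored for Theorem \ref{theorem03}) exploits the closed forms of \emph{both} updates. From \eqref{z-mc}, $\dot{\mathbf{Z}}^{(k+1)}=(\mathbf{1}-\Omega)\odot\bigl(\dot{\mathbf{Y}}-\dot{\mathbf{X}}^{(k)}+\dot{\bm{\eta}}^{(k)}/\beta^{(k)}\bigr)$, hence $\dot{\mathbf{Y}}-\dot{\mathbf{Z}}^{(k+1)}+\dot{\bm{\eta}}^{(k)}/\beta^{(k)}=\mathcal{P}_{\Omega}(\dot{\mathbf{Y}})+(\mathbf{1}-\Omega)\odot\dot{\mathbf{X}}^{(k)}+\mathcal{P}_{\Omega}(\dot{\bm{\eta}}^{(k)})/\beta^{(k)}$; and by \eqref{x-mc}, $\dot{\mathbf{X}}^{(k+1)}$ is the QNMF thresholding \eqref{solution} of this matrix, which moves it by at most $\lambda\sqrt{mn}/\beta^{(k)}$ in Frobenius norm (the same estimate used to bound $\dot{\bm{\eta}}$). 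Therefore $\dot{\mathbf{X}}^{(k+1)}-\dot{\mathbf{X}}^{(k)}=\mathcal{P}_{\Omega}(\dot{\mathbf{Y}}-\dot{\mathbf{X}}^{(k)})+O(1/\beta^{(k)})$, and since $\mathcal{P}_{\Omega}(\dot{\mathbf{Z}}^{(k)})=0$, the already-established \eqref{converge22} gives $\|\mathcal{P}_{\Omega}(\dot{\mathbf{Y}}-\dot{\mathbf{X}}^{(k)})\|_{F}\leq\|\dot{\mathbf{Y}}-\dot{\mathbf{X}}^{(k)}-\dot{\mathbf{Z}}^{(k)}\|_{F}\to 0$, which closes \eqref{converge21} directly. (The paper additionally proves boundedness of the Lagrangian sequence and of the iterates before extracting the limits; those steps are part of its template, though the two stated limits follow from multiplier boundedness and the closed-form structure alone.) Replacing your Lyapunov step with this closed-form argument would make your proof complete and align it with the paper's.
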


The proof of Theorem \ref{theorem03} mirrors that of Theorem \ref{theorem04} discussed subsequently; refer to the proof of Theorem \ref{theorem04} for specifics. It is not reiterated in this paper.

\begin{algorithm}[t]
\SetAlgoNoLine
\SetKwInOut{Input}{\textbf{Input}}
\SetKwInOut{Output}{\textbf{Output}}

\Input{
    Observation $\dot{\mathbf{Y}}$; Mask matrix $\Omega$; 
    Initialize $\dot{\mathbf{X}}^{(0)}=\dot{\mathbf{Y}}$, $\dot{\mathbf{\eta}}^{(0)}=0$; Parameters $\lambda$, $\beta$, $\mu$, $\alpha$, $K$;\\
    }

\Output{
    The reconstructed image $\dot{\mathbf{X}}^{(K)}$; \\
    }

    \For{$k=0,1,\dots,K-1$}{
    Update $\dot{\mathbf{Z}}^{(k+1)}$ by (\ref{z-mc}) \;
    Update $\dot{\mathbf{X}}^{(k+1)}$ by (\ref{x-mc}) \;
    Update $\dot{\mathbf{\eta}}^{(k+1)} = \dot{\mathbf{\eta}}^{(k)} + \beta^{(k)}(\dot{\mathbf{Y}}- \dot{\mathbf{X}}^{(k+1)} - \dot{\mathbf{Z}}^{(k+1)})$ \;
    Update $\beta^{(k+1)} = \mu\beta^{(k)}$ \;
    }
    \caption{QNMF-MC Algorithm}
    \label{ag2}
\end{algorithm}

\subsection{QNMF for robust PCA}

In practice, observed data are often corrupted by outliers or sparse noise, significantly impacting the Frobenius norm data fidelity term used in low-rank estimation. To address this issue,  Cand\'{e}s et al. \cite{candes2011robust} introduced robust principal component analysis (RPCA) based on the nuclear norm, utilizing the $\mathcal{L}1$-norm for modeling errors. Building upon this, we extend QNMF to RPCA, resulting in a QNMF-based RPCA model:
\begin{equation}
\begin{aligned}
\min_{\dot{\mathbf{X}},\dot{\mathbf{Z}}} \lambda(||\dot{\mathbf{X}}||_{*} - \alpha||\dot{\mathbf{X}}||_{F}) + \rho||\dot{\mathbf{Z}}||_{1} \; s.t \; \dot{\mathbf{Y}} = \dot{\mathbf{X}} + \dot{\mathbf{Z}}.
\label{rpca-NNFN}
\end{aligned}
\end{equation}
where $\dot{\mathbf{Z}}$ is sparse error. According to ADMM, its augmented Lagrange function is
\begin{equation}
\begin{aligned}
\mathcal{L}(\dot{\mathbf{X}}, \dot{\mathbf{Z}}, \dot{\mathbf{\eta}}, \beta) &=  \lambda(||\dot{\mathbf{X}}||_{*} - \alpha||\dot{\mathbf{X}}||_{F} ) + \rho||\dot{\mathbf{Z}}||_{1} + \frac{\beta}{2}||\dot{\mathbf{Y}}-\dot{\mathbf{X}} - \dot{\mathbf{Z}} ||_{F}^{2}
 + \langle{ \dot{\mathbf{\eta}},\dot{\mathbf{Y}}-\dot{\mathbf{X}} - \dot{\mathbf{Z}} }\rangle.
\label{en-rpca}
\end{aligned}
\end{equation}

The update strategy for (\ref{en-rpca}) is as follows:
\begin{align}\left\{\begin{aligned}
\dot{\mathbf{Z}}^{(k+1)} &= \arg\min_{\dot{\mathbf{Z}}} \mathcal{L}(\dot{\mathbf{X}}^{(k+1)}, \dot{\mathbf{Z}}, \dot{\mathbf{\eta}}^{(k)}, \beta^{(k)}),   \\
\dot{\mathbf{X}}^{(k+1)} &= \arg\min_{\dot{\mathbf{X}}} \mathcal{L}(\dot{\mathbf{X}}, \dot{\mathbf{Z}}^{(k)}, \dot{\mathbf{\eta}}^{(k)}, \beta^{(k)}), \\
\dot{\mathbf{\eta}}^{(k+1)} &= \dot{\mathbf{\eta}}^{(k)} + \beta^{(k)}(\dot{\mathbf{Y}}- \dot{\mathbf{X}}^{(k+1)} - \dot{\mathbf{Z}}^{(k+1)}), \\
\beta^{(k+1)} &= \mu\beta^{(k)},\quad(\mu>1). \\
\end{aligned}\right.
\end{align}
The $\dot{\mathbf{Z}}$ subproblem is
\begin{equation}
\begin{aligned}
\dot{\mathbf{Z}} = \arg\min_{\dot{\mathbf{Z}}} \, \rho||\dot{\mathbf{Z}}||_1 + \frac{\beta}{2} ||(\dot{\mathbf{Y}} -  \dot{\mathbf{X}} + \frac{ \dot{\mathbf{\eta}}}{\beta}) - \dot{\mathbf{Z}}||_{F}^{2}.
\end{aligned}
\label{z-rpca}
\end{equation}
According to the soft-threshold shrinkage, the solution for $\dot{\mathbf{Z}}$  is
\begin{equation}
\begin{aligned}
\dot{\mathbf{Z}} = \frac{\dot{\mathbf{Y}} -  \dot{\mathbf{X}} + \frac{ \dot{\mathbf{\eta}}}{\beta}}{\left|\dot{\mathbf{Y}} -  \dot{\mathbf{X}} + \frac{ \dot{\mathbf{\eta}}}{\beta}\right| + \epsilon }\max\left(\left|\dot{\mathbf{Y}} -  \dot{\mathbf{X}} + \frac{ \dot{\mathbf{\eta}}}{\beta}\right| - \frac{\rho}{\beta}, 0\right),
\end{aligned}
\label{z-s}
\end{equation}
where $|\cdot|$ is  the modulus of the quaternion and $\epsilon$ is a small number. The $\dot{\mathbf{X}}$ subproblem is consistent with (\ref{x-mc}). The whole algorithmic procedure is described in Algorithm \ref{ag3}.

\begin{algorithm}[t]
\SetAlgoNoLine
\SetKwInOut{Input}{\textbf{Input}}
\SetKwInOut{Output}{\textbf{Output}}

\Input{
    Observation $\dot{\mathbf{Y}}$; Initialize $\dot{\mathbf{X}}^{(0)}=\dot{\mathbf{Y}}$, $\dot{\mathbf{Z}}^{(0)}=0$, $\dot{\mathbf{\eta}}^{(0)}=0$; Parameters $\rho$, $\lambda$, $\beta$, $\mu$, $\alpha$, $K$;\\
    }

\Output{
    The reconstructed image $\dot{\mathbf{X}}^{(K)}$; \\
    }

    \For{$k=0,1,\dots,K-1$}{
    Update $\dot{\mathbf{Z}}^{(k+1)}$ by (\ref{z-s}) \;
    Update $\dot{\mathbf{X}}^{(k+1)}$ by (\ref{x-mc}) \;
    Update $\dot{\mathbf{\eta}}^{(k+1)} = \dot{\mathbf{\eta}}^{(k)} + \beta^{(k)}(\dot{\mathbf{Y}}- \dot{\mathbf{X}}^{(k+1)} - \dot{\mathbf{Z}}^{(k+1)})$ \;
    Update $\beta^{(k+1)} = \mu\beta^{(k)}$ \;
    }
    \caption{QNMF-RPCA Algorithm}
    \label{ag3}
\end{algorithm}

\begin{theorem}
Suppose that the parameter sequence $\{\beta^{(k)}\}$ is unbounded and the sequences $\{\dot{\mathbf{X}}^{(k)}\}$ and $\{\dot{\mathbf{Z}}^{(k)}\}$ are generated by Algorithm \ref{ag3}, which satisfies: 
\begin{align}
&\lim_{k\rightarrow +\infty} || \dot{\mathbf{X}}^{(k+1)}-\dot{\mathbf{X}}^{(k)} ||_{F} = 0, \label{converge31}\\
&\lim_{k\rightarrow +\infty} || \dot{\mathbf{Z}}^{(k+1)}-\dot{\mathbf{Z}}^{(k)} ||_{F} = 0. \label{converge32}\\
&\lim_{k\rightarrow +\infty} || \dot{\mathbf{Y}} - \dot{\mathbf{X}}^{(k+1)}-\dot{\mathbf{Z}}^{(k+1)} ||_{F} = 0. \label{converge33}
\end{align}
\label{theorem04}
\end{theorem}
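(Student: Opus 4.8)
The plan is to follow the standard inexact augmented-Lagrangian convergence template, exploiting the structure of the QNMF-RPCA iteration so that all three limits fall out algebraically once the dual variables are shown to be bounded. The engine of the argument is the first-order optimality condition of each subproblem, which identifies the multiplier with a subgradient of the corresponding objective term; everything else is bookkeeping against the divergence of $\beta^{(k)}$.

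First I would record the optimality conditions. Writing $g(\dot{\mathbf{X}}) = ||\dot{\mathbf{X}}||_{*} - \alpha||\dot{\mathbf{X}}||_{F}$, the stationarity condition of the $\dot{\mathbf{X}}$-subproblem \eqref{x-mc} reads $0 \in \lambda\partial g(\dot{\mathbf{X}}^{(k+1)}) + \beta^{(k)}(\dot{\mathbf{X}}^{(k+1)} - \dot{\mathbf{Y}} + \dot{\mathbf{Z}}^{(k+1)} - \dot{\mathbf{\eta}}^{(k)}/\beta^{(k)})$. Substituting the dual update $\dot{\mathbf{\eta}}^{(k+1)} = \dot{\mathbf{\eta}}^{(k)} + \beta^{(k)}(\dot{\mathbf{Y}} - \dot{\mathbf{X}}^{(k+1)} - \dot{\mathbf{Z}}^{(k+1)})$ collapses this to $\dot{\mathbf{\eta}}^{(k+1)} \in \lambda\partial g(\dot{\mathbf{X}}^{(k+1)})$. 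Since the $\dot{\mathbf{Z}}$-step \eqref{z-rpca} precedes the $\dot{\mathbf{X}}$-step and therefore uses the stale iterate $\dot{\mathbf{X}}^{(k)}$, the same manipulation yields $\hat{\dot{\mathbf{\eta}}}^{(k+1)} \in \rho\,\partial||\dot{\mathbf{Z}}^{(k+1)}||_{1}$, where $\hat{\dot{\mathbf{\eta}}}^{(k+1)} := \dot{\mathbf{\eta}}^{(k)} + \beta^{(k)}(\dot{\mathbf{Y}} - \dot{\mathbf{X}}^{(k)} - \dot{\mathbf{Z}}^{(k+1)})$ is the multiplier evaluated at the stale primal point.

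The crux is then boundedness of both multiplier sequences, and this is the step I expect to be the main obstacle. The subdifferential of the quaternion $\ell_1$-norm consists of quaternions of modulus at most one, so $\{\hat{\dot{\mathbf{\eta}}}^{(k+1)}\}$ is bounded immediately. For $\{\dot{\mathbf{\eta}}^{(k+1)}\}$ I would compute $\partial g$ through the QSVD: the subdifferential of the quaternion nuclear norm at $\dot{\mathbf{X}}=\dot{\mathbf{U}}\mathbf{\Sigma}\dot{\mathbf{V}}^{\top}$ consists of quaternion matrices $\dot{\mathbf{U}}\dot{\mathbf{V}}^{\top}+\dot{\mathbf{W}}$ with $\dot{\mathbf{W}}$ orthogonal to the row and column spaces and spectral norm at most one, while the gradient of $||\cdot||_{F}$ is $\dot{\mathbf{X}}/||\dot{\mathbf{X}}||_{F}$ (Frobenius norm one) away from the origin; both sets are bounded, hence so is $\{\dot{\mathbf{\eta}}^{(k+1)}\}$. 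The difficulty is that $g$ is nonconvex (a difference of two norms), so one must work with the limiting/Clarke subdifferential and verify that the closed-form minimizer supplied by Theorem \ref{theorem01} actually satisfies this stationarity inclusion; establishing the quaternion analogue of nuclear-norm subdifferential calculus is where the quaternion algebra genuinely enters and must be handled carefully.

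With both sequences bounded and $\beta^{(k)}=\mu^{k}\beta^{(0)}\to+\infty$ (monotone since $\mu>1$ and $\{\beta^{(k)}\}$ is assumed unbounded), the three limits follow by pure algebra. Rearranging the dual update gives $\dot{\mathbf{Y}} - \dot{\mathbf{X}}^{(k+1)} - \dot{\mathbf{Z}}^{(k+1)} = (\dot{\mathbf{\eta}}^{(k+1)} - \dot{\mathbf{\eta}}^{(k)})/\beta^{(k)}$, of Frobenius norm $O(1/\beta^{(k)})\to 0$, which is \eqref{converge33}. Subtracting the two multiplier definitions yields $\beta^{(k)}(\dot{\mathbf{X}}^{(k)} - \dot{\mathbf{X}}^{(k+1)}) = \dot{\mathbf{\eta}}^{(k+1)} - \hat{\dot{\mathbf{\eta}}}^{(k+1)}$, so $||\dot{\mathbf{X}}^{(k+1)} - \dot{\mathbf{X}}^{(k)}||_{F}=O(1/\beta^{(k)})\to 0$, which is \eqref{converge31}. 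Finally, expressing $\dot{\mathbf{Z}}^{(k+1)}-\dot{\mathbf{Z}}^{(k)}$ as $-(\dot{\mathbf{X}}^{(k+1)}-\dot{\mathbf{X}}^{(k)})$ plus two feasibility residuals of the form already bounded above, each term tends to zero, giving \eqref{converge32} and completing the proof.
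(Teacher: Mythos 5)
Your proposal is correct, but it takes a genuinely different route from the paper's proof. The paper never invokes subdifferential calculus: it bounds the multiplier sequence directly from the closed-form solutions of the subproblems, observing that the QNMF shrinkage operator moves each singular value of its argument by at most $\lambda/\beta^{(k)}$, so that $\|\dot{\mathbf{\eta}}^{(k+1)}\|_F^2=(\beta^{(k)})^2\sum_i(\sigma_i^{(k)}-\sigma_{\dot{\mathbf{X}},i}^{(k)})^2\leq\lambda^2 M$; it then additionally proves upper boundedness of the augmented Lagrangian and of the primal sequences (content beyond the three stated limits), obtains \eqref{converge33} from the dual update exactly as you do, and derives \eqref{converge31} and \eqref{converge32} by again exploiting the explicit operators, bounding $\|\Sigma-\mathcal{S}^{\mathrm{QNMF}}_{\lambda/\beta}(\Sigma)\|_F\leq\lambda\sqrt{mn}/\beta$ and the soft-threshold deviation by $\rho\sqrt{mn}/\beta$. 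Your KKT-based argument replaces all of this with the two inclusions $\dot{\mathbf{\eta}}^{(k+1)}\in\lambda\partial g(\dot{\mathbf{X}}^{(k+1)})$ and $\hat{\dot{\mathbf{\eta}}}^{(k+1)}\in\rho\,\partial\|\dot{\mathbf{Z}}^{(k+1)}\|_1$, after which the three limits are one-line algebra; your identity $\beta^{(k)}(\dot{\mathbf{X}}^{(k)}-\dot{\mathbf{X}}^{(k+1)})=\dot{\mathbf{\eta}}^{(k+1)}-\hat{\dot{\mathbf{\eta}}}^{(k+1)}$ yields \eqref{converge31} far more directly than the paper's chain through the QNMF operator and the $\dot{\mathbf{Z}}$-differences. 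What the paper's approach buys is the complete avoidance of the obstacle you correctly flag: since $g=\|\cdot\|_*-\alpha\|\cdot\|_F$ is nonconvex, your inclusion requires Fermat's rule with limiting or Clarke subdifferentials on $\mathbb{Q}^{m\times n}$ viewed as a real vector space, plus a quaternion nuclear-norm subdifferential characterization that would have to be established. Note, though, that your heavy machinery can be lightened in either of two ways: $g$ is globally Lipschitz (with constant $\sqrt{\min(m,n)}+\alpha$ in the Frobenius metric), so every Clarke subgradient is bounded without any QSVD characterization; or, more simply still, $\dot{\mathbf{\eta}}^{(k+1)}$ and $\hat{\dot{\mathbf{\eta}}}^{(k+1)}$ are each exactly $\beta^{(k)}$ times the input-minus-output of a prox step, and bounding these differences via the closed-form formulas — the paper's device — makes the stationarity inclusions unnecessary altogether. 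One caution if you keep the subdifferential route: the stationarity inclusion must be verified for the actual iterate produced by Theorem \ref{theorem01} including its truncation regime $\tilde{\sigma}_{\dot{\mathbf{X}},i}=\sigma_{\dot{\mathbf{Y}},i}$, and the Clarke subdifferential of $-\alpha\|\cdot\|_F$ must be handled at $\dot{\mathbf{X}}=0$, where it is still contained in the ball of radius $\alpha$, so boundedness survives.
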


The proof details of Theorem \ref{theorem04} are provided in the supplementary material.

\begin{figure}[t]
	\centering
	\addtolength{\tabcolsep}{-5.5pt}
    \renewcommand\arraystretch{0.4}
	{\fontsize{8pt}{\baselineskip}\selectfont 
	\begin{tabular}{cccccc}
        \includegraphics[width=0.15\textwidth]{./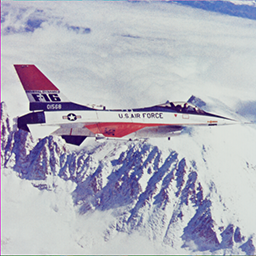} &
        \includegraphics[width=0.15\textwidth]{./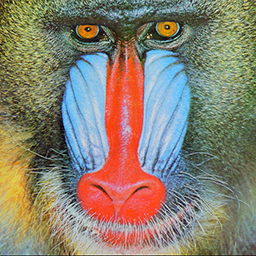} &
        \includegraphics[width=0.15\textwidth]{./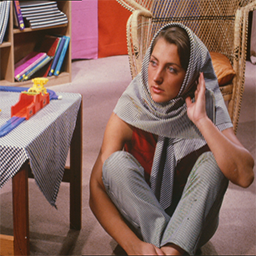} &
        \includegraphics[width=0.15\textwidth]{./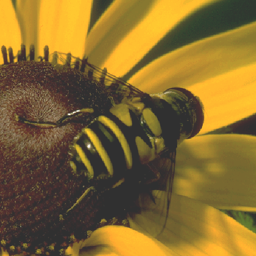} &
        \includegraphics[width=0.15\textwidth]{./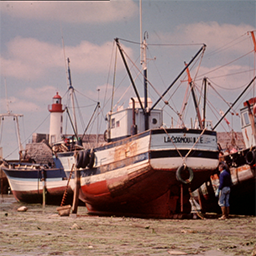} &
        \includegraphics[width=0.15\textwidth]{./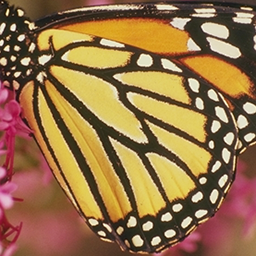} \\
        \includegraphics[width=0.15\textwidth]{./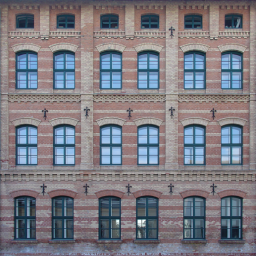} &
        \includegraphics[width=0.15\textwidth]{./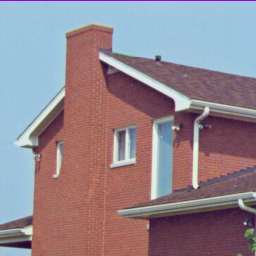} &
        \includegraphics[width=0.15\textwidth]{./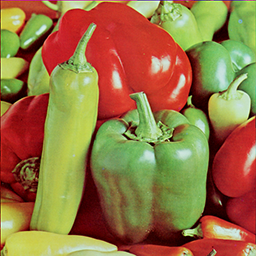} &
        \includegraphics[width=0.15\textwidth]{./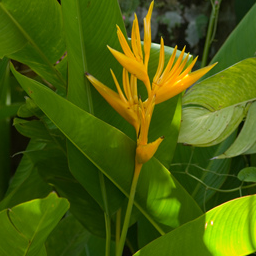} & \includegraphics[width=0.15\textwidth]{./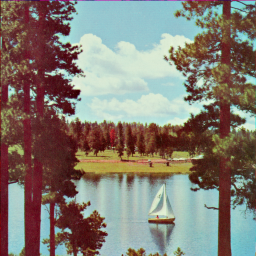} &
        \includegraphics[width=0.15\textwidth]{./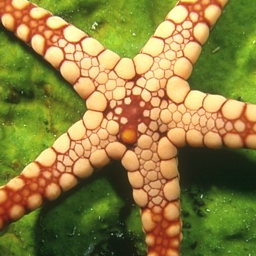} \\

	\end{tabular}
	}
	\caption{ CSet12. }
	\label{CSet12}
\end{figure}

\section{Experiments}
\label{sec:4}
\subsection{Experiment Setting}

In this section, we conducted experiments to verify the effectiveness and potential of the proposed QNMF in color image processing. We utilized 12 representative color images, each with dimensions of $256\times256$, collectively referred to as CSet12 (Figure \ref{CSet12}). To ensure the generalizability of our approach, we included the McMaster \cite{Dubois2005Frequency} and Kodak \cite{zhang2011color} datasets in our denoising experiments alongside CSet12. The McMaster dataset comprises 18 color images of size $500\times500$, while the Kodak dataset contains 24 color images of size $768\times512$. For real image denoising, we utilized three datasets: cc \cite{nam2016holistic}, PolyU \cite{xu2018real}, and SIDD \cite{Abdelhamed2018}, extracting 15 patches of dimensions $512\times512$ from each.

We assessed the restoration quality using peak signal-to-noise ratio (PSNR) and structural similarity (SSIM). Subsequently, we present the outcomes of our proposed algorithm across various tasks, including color image denoising, deblurring, matrix completion, and RPCA. All experiments were executed in MATLAB on a PC with 3.40 GHz and 64 GB RAM.

\subsection{Color image denoising} 
\label{Exp:Gaussian}

This subsection provides a comparative analysis of QNMF with other state-of-the-art color image denoising algorithms. The comparison includes six algorithms: CBM3D \cite{dabov2007color} and SV-TV \cite{jia2019color} for color space transformation; McWNNM \cite{xu2017multi} for vectorization; and QLRMA \cite{chen2019low}, QWNNM \cite{yu2019quaternion}, and QWSNM \cite{zhang2024quaternion} based on quaternionic representation. We utilized three datasets, CSet12, McMaster, and Kodak, where the images were corrupted by Gaussian noise with mean zero and standard deviation $\sigma = 5, 10, 20, 30, 40, 50, 60$.

The rank of an image is typically not exactly low but exhibits a long-tailed distribution. Consequently, directly applying low-rank regularization may not yield optimal results. According to the non-local self-similarity (NNS) prior for images, images can be represented by several sets of similar patches, each tending to be low-rank. Using the NNS prior for image denoising involves processing image patches instead of the entire image, as detailed in \cite{gu2014weighted} and \cite{guo2022gaussian}.
For various noise levels $\sigma$, we set the parameters of QNMF as follows: for $\sigma \leq 5$, $m=4$, $n=80$, $K=4$; for $5 < \sigma \leq 20$, $m=4$, $n=80$, $K=6$; for $20 < \sigma \leq 40$, $m=5$, $n=90$, $K=8$; for $40 < \sigma \leq 60$, $m=5$, $n=120$, $K=9$; and for $\sigma > 60$, $m=5$, $n=140$, $K=10$. Similar patches are sought within a $30 \times 30$ search window. The crucial parameters $\lambda$ and $\alpha$ are set to $\lambda=2c$, where $c=\sqrt{5\sqrt{2n}\sigma}$, and $\alpha=4$.

\begin{table*}[t]	
	\begin{center}
	\caption{
 Comparison of denoising PSNR/SSIM values in CSet12, McMaster and Kodak datasets. The best results are shown in \textbf{bold}.}
    \label{denoising}
    \resizebox{\textwidth}{!}{
    \addtolength{\tabcolsep}{ -2pt}
    \begin{tabular}{c|ccccccc}
    \hline
    \multicolumn{8}{c}{CSet12 (12)} \\ \hline
    
    $\sigma$ & CBM3D \cite{dabov2007color} & McWNNM \cite{xu2017multi} & SV-TV \cite{jia2019color} & QLRMA \cite{chen2019low} & QWSNM \cite{zhang2024quaternion} & QWNNM \cite{yu2019quaternion} & Ours \\ \hline
    5 & 38.51/0.9679 & 38.34/0.9656 & 34.89/0.9477 & 37.73/0.9624  & \textbf{39.15}/0.9719 & 39.13/\textbf{0.9726} & 39.14/0.9725 \\  
    10 & 34.74/0.9398 & 35.12/0.9428 & 32.17/0.9064 & 34.10/0.9310 & 35.44/0.9460 & 35.39/0.9477  & \textbf{35.50}/\textbf{0.9482} \\  
    20 & 31.31/0.8950 & 30.01/0.8685 & 28.93/0.8133 & 29.42/0.8443 & 31.89/0.9017 & 31.87/0.9054  & \textbf{32.00}/\textbf{0.9065} \\  
    30 & 29.38/0.8567 & 27.32/0.8026 & 26.64/0.7180 & 28.43/0.8218 & 30.00/0.8671 & 29.77/0.8671  & \textbf{30.02}/\textbf{0.8705} \\  
    40 & 28.06/0.8231 & 25.77/0.7498 & 24.87/0.6335 & 27.26/0.7916 & 28.58/0.8339 & 28.42/0.8358  & \textbf{28.61}/\textbf{0.8386} \\  
    50 & 27.01/0.7920 & 24.62/0.7040 & 23.43/0.5623 & 26.08/0.7546 & 27.44/0.8038 & 27.50/0.8106  & \textbf{27.54}/\textbf{0.8126} \\  
    60 & 26.18/0.7647 & 23.74/0.6668 & 22.20/0.5024 & 25.36/0.7282 & 26.59/0.7761 & 26.67/0.7845  & \textbf{26.71}/\textbf{0.7861} \\ 
    Av.& 30.74/0.8627 & 29.27/0.8143 & 27.59/0.7262 & 29.77/0.8334 & 31.30/0.8715 & 31.25/0.8748 & \textbf{31.36}/\textbf{0.8764} \\ \hline

    \multicolumn{8}{c}{McMaster (18)} \\ \hline
    
    $\sigma$ & CBM3D \cite{dabov2007color} & McWNNM \cite{xu2017multi} & SV-TV \cite{jia2019color} & QLRMA \cite{chen2019low} & QWSNM \cite{zhang2024quaternion} & QWNNM \cite{yu2019quaternion} & Ours \\ \hline
    5 & 39.18/0.9606 & 39.09/0.9587 & 35.14/0.9345 & 38.56/0.9558 & 39.90/0.9660 & 39.89/0.9671  & \textbf{39.96}/\textbf{0.9675} \\  
    10 & 35.91/0.9336 & 36.29/0.9376 & 33.05/0.8988 & 35.36/0.9263 & 36.62/0.9422 & 36.56/0.9436  & \textbf{36.70}/\textbf{0.9445} \\ 
    20 & 32.71/0.8896 & 31.65/0.8712 & 30.26/0.8209 & 31.59/0.8720 & 33.24/0.8995 & 33.26/0.9038  & \textbf{33.38}/\textbf{0.9051} \\
    30 & 30.85/0.8516 & 29.15/0.8146 & 28.32/0.7419 & 30.02/0.8240 & 31.46/0.8672 & 31.29/0.8684  & \textbf{31.48}/\textbf{0.8701} \\ 
    40 & 29.53/0.8190 & 27.60/0.7689 & 26.79/0.6683 & 28.93/0.7998 & 30.05/0.8360 & 29.95/0.8385  & \textbf{30.08}/\textbf{0.8388} \\ 
    50 & 28.54/0.7903 & 26.43/0.7296 & 25.54/0.6026 & 27.85/0.7683 & 28.99/0.8100 & 29.09/0.8161  & \textbf{29.14}/\textbf{0.8171} \\  
    60 & 27.73/0.7636 & 25.54/0.6964 & 24.45/0.5441 & 27.17/0.7478 & 28.11/0.7847 & 28.21/0.7910  & \textbf{28.28}/\textbf{0.7916} \\ 
    Av. & 32.06/0.8583 & 30.82/0.8253 & 29.08/0.7444 & 31.35/0.8420 & 32.62/0.8722 & 32.61/0.8755 & \textbf{32.72}/\textbf{0.8764} \\ \hline

    \multicolumn{8}{c}{Kodak (24)} \\ \hline

    $\sigma$ & CBM3D \cite{dabov2007color} & McWNNM \cite{xu2017multi} & SV-TV \cite{jia2019color} & QLRMA \cite{chen2019low} & QWSNM \cite{zhang2024quaternion} & QWNNM \cite{yu2019quaternion} & Ours \\ \hline
    5 & 40.30/0.9704 & 39.26/0.9616 & 36.68/0.9440 & 38.30/0.9567 & 40.38/0.9695 & 40.45/0.9711  & \textbf{40.51}/\textbf{0.9714} \\  
    10 & 36.57/0.9433 & 36.22/0.9357 & 33.18/0.8896 & 34.83/0.9194 & 36.61/0.9409 & 36.66/0.9443  & \textbf{36.77}/\textbf{0.9452} \\ 
    20 & 33.00/0.8919 & 31.07/0.8414 & 29.75/0.7935 & 30.11/0.8308 & 32.90/0.8846 & 33.05/0.8938  & \textbf{33.13}/\textbf{0.8959} \\
    30 & 31.02/0.8471 & 28.72/0.7736 & 27.70/0.7076 & 29.57/0.8036 & 31.02/0.8393 & 31.05/0.8509  & \textbf{31.16}/\textbf{0.8525} \\ 
    40 & 29.69/0.8099 & 27.35/0.7243 & 26.18/0.6297 & 28.49/0.7662 & 29.64/0.7993 & 29.70/0.8121  & \textbf{29.79}/\textbf{0.8136} \\ 
    50 & 28.68/0.7785 & 26.33/0.6855 & 24.94/0.5607 & 27.54/0.7356 & 28.60/0.7667 & 28.77/0.7789  & \textbf{28.85}/\textbf{0.7847} \\ 
    60 & 27.90/0.7521 & 25.54/0.6552 & 23.89/0.5009 & 26.90/0.7096 & 27.79/0.7400 & 27.95/0.7510  & \textbf{28.06}/\textbf{0.7564} \\ 
    Av. & 32.45/0.8562 & 30.64/0.7968 & 28.90/0.7180 & 30.82/0.8174 & 32.42/0.8486 & 32.52/0.8574 & \textbf{32.61}/\textbf{0.8600} \\ \hline
    
    \end{tabular}}
	\end{center}

\end{table*} 

\begin{figure*}[!t]
	\centering
	\addtolength{\tabcolsep}{-5.5pt}
    \renewcommand\arraystretch{0.7}
	{\fontsize{10pt}{\baselineskip}\selectfont 
	\begin{tabular}{ccccccccc}
        \includegraphics[width=0.17\textwidth]{./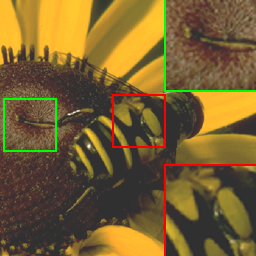} &
        \includegraphics[width=0.17\textwidth]{./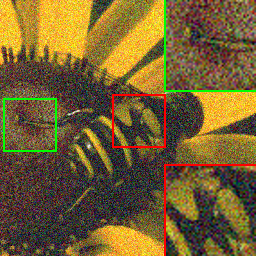} &
        \includegraphics[width=0.17\textwidth]{./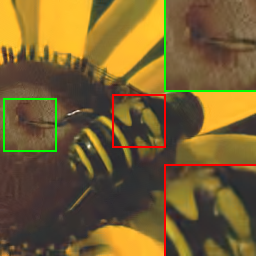} &
        \includegraphics[width=0.17\textwidth]{./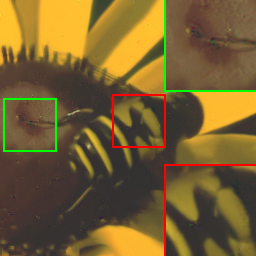} &
        \includegraphics[width=0.17\textwidth]{./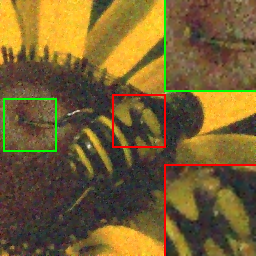} \\
        PSNR/SSIM & $\sigma=30$ & 31.65/0.8521 & 30.76/0.8496 & 27.86/0.6347 \\
        (a) GT & (b) NY & (c) CBM3D & (d) McWNNM & (e) SV-TV \\

        &
        \includegraphics[width=0.17\textwidth]{./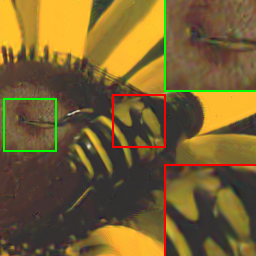} &
        \includegraphics[width=0.17\textwidth]{./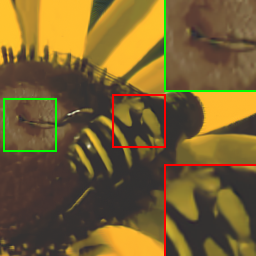} &
        \includegraphics[width=0.17\textwidth]{./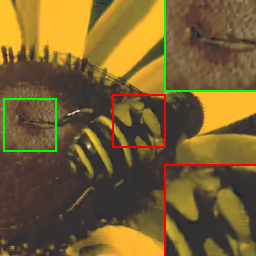} &
        \includegraphics[width=0.17\textwidth]{./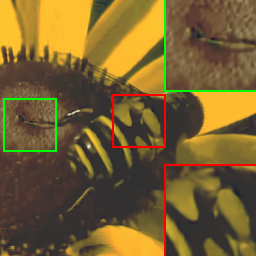} \\
        & 31.02/0.8213 & 32.47/0.8804 & 32.35/0.8809 & 32.57/0.8853 \\
        & (f) QLRMA & (g) QWSNM & (h) QWNNM & (i) Ours \\

        \includegraphics[width=0.17\textwidth]{./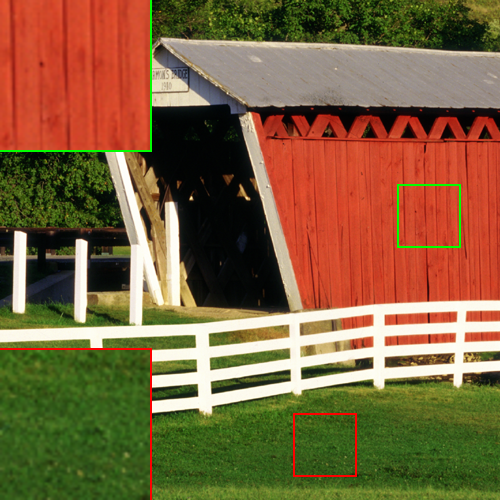} &
        \includegraphics[width=0.17\textwidth]{./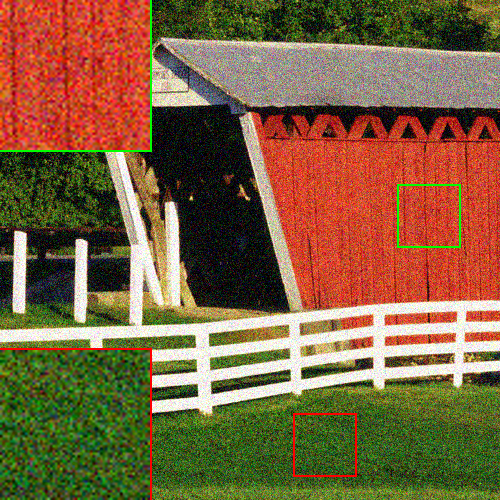} &
        \includegraphics[width=0.17\textwidth]{./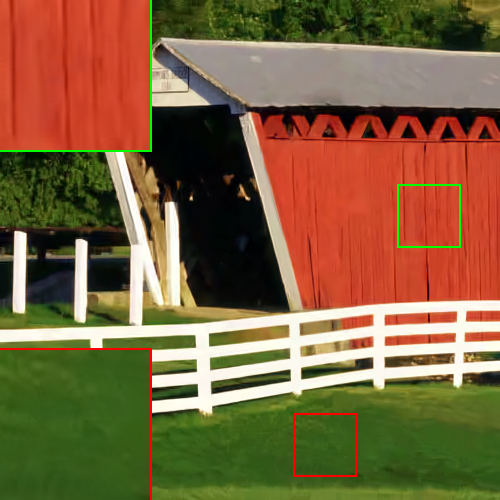} &
        \includegraphics[width=0.17\textwidth]{./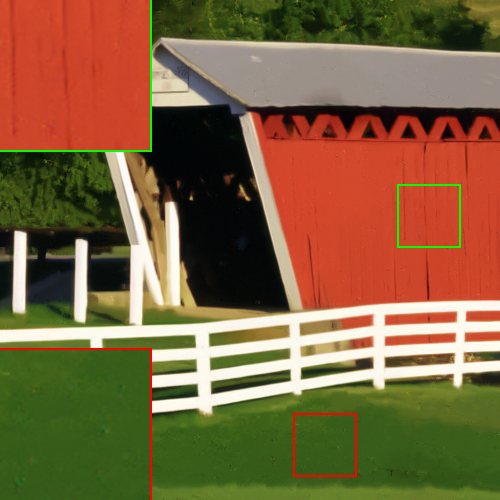} &
        \includegraphics[width=0.17\textwidth]{./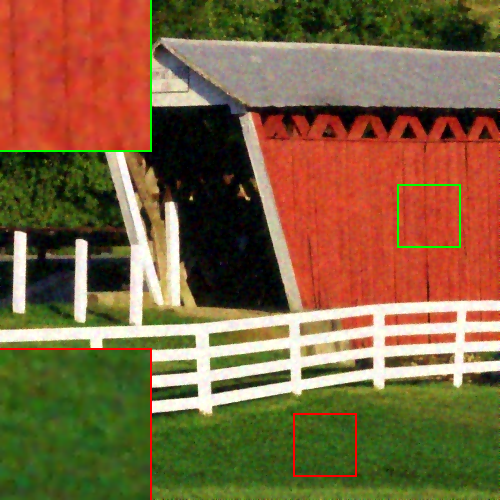} \\
        PSNR/SSIM & $\sigma=30$ & 29.86/0.8252 & 28.36/0.7768 & 27.77/0.7403 \\
        (a) GT & (b) NY & (c) CBM3D & (d) McWNNM & (e) SV-TV \\
        
        &
        \includegraphics[width=0.17\textwidth]{./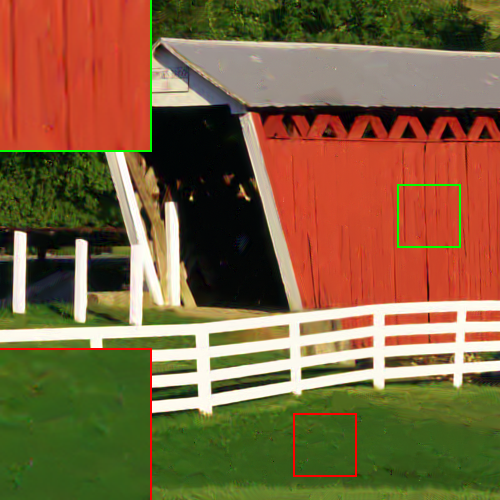} &
        \includegraphics[width=0.17\textwidth]{./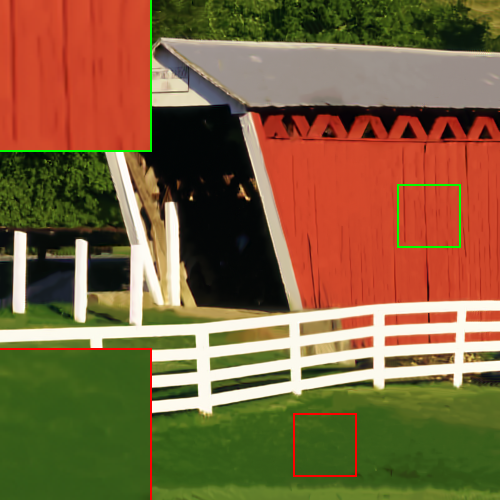} &
        \includegraphics[width=0.17\textwidth]{./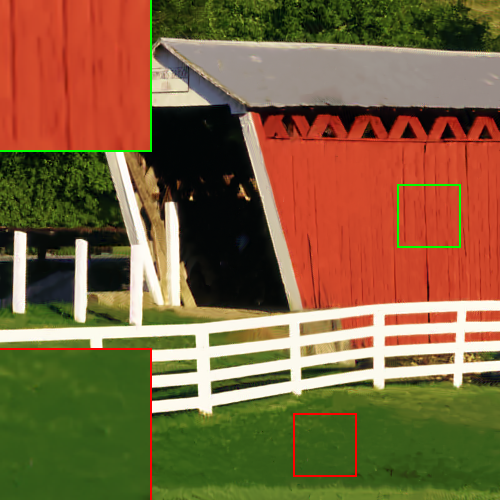} &
        \includegraphics[width=0.17\textwidth]{./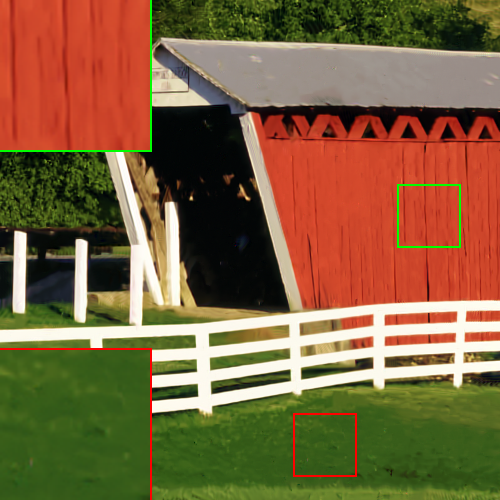} \\
        & 29.10/0.7946 & 30.37/0.8326 & 30.33/0.8417 & 30.54/0.8469 \\ 
        & (f) QLRMA & (g) QWSNM & (h) QWNNM & (i) Ours \\

    \end{tabular}
	}
	\caption{
    Comparison of visual results for color image denoising. The first two rows of images are from CSet12. The last two rows of images are from the McMaster dataset. 
 }
	\label{denoise_fig_1}
\end{figure*}

\begin{figure*}[!t]
	\centering
	\addtolength{\tabcolsep}{-5.5pt}
    \renewcommand\arraystretch{0.7}
	{\fontsize{10pt}{\baselineskip}\selectfont 
	\begin{tabular}{ccccccccc}
 
        \includegraphics[width=0.18\textwidth]{./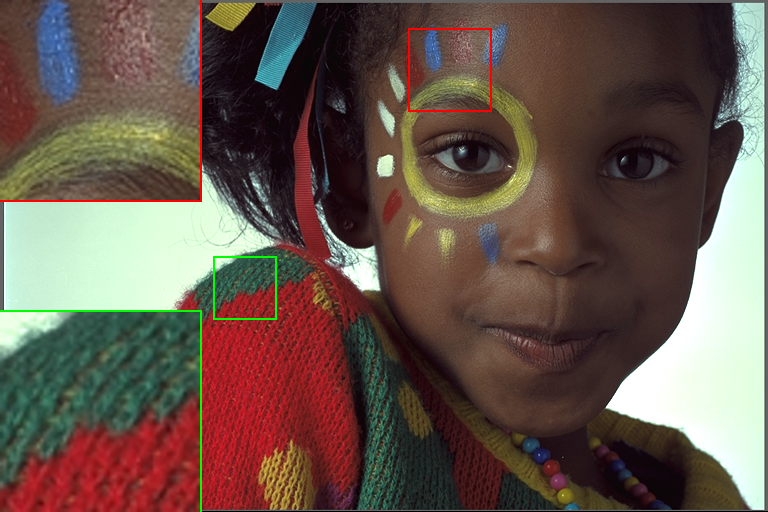} &
        \includegraphics[width=0.18\textwidth]{./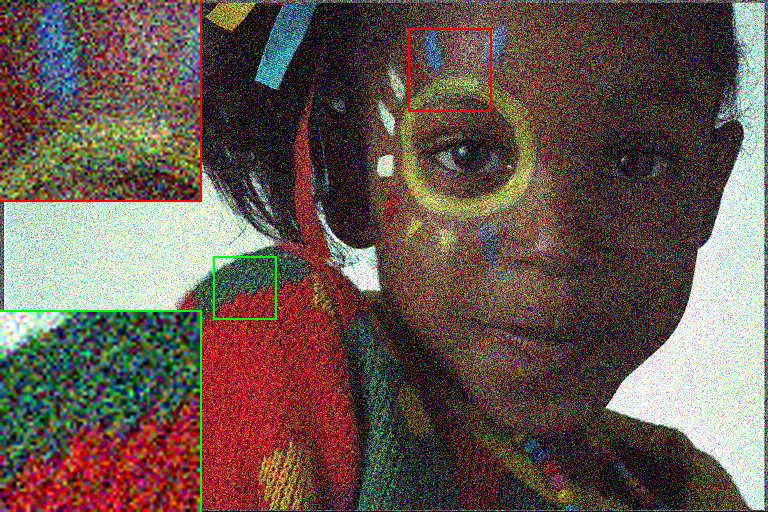} &
        \includegraphics[width=0.18\textwidth]{./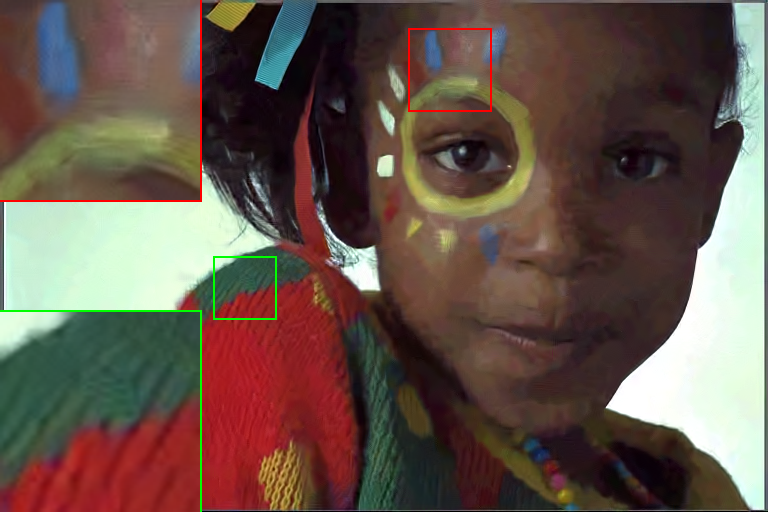} &
        \includegraphics[width=0.18\textwidth]{./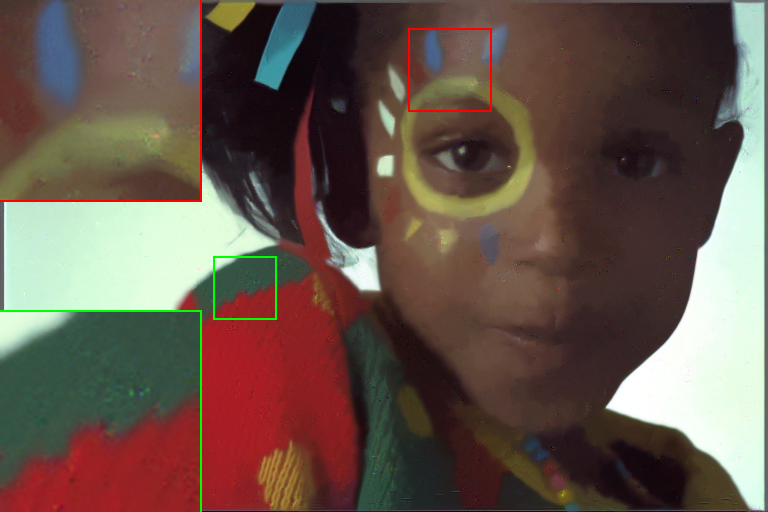} &
        \includegraphics[width=0.18\textwidth]{./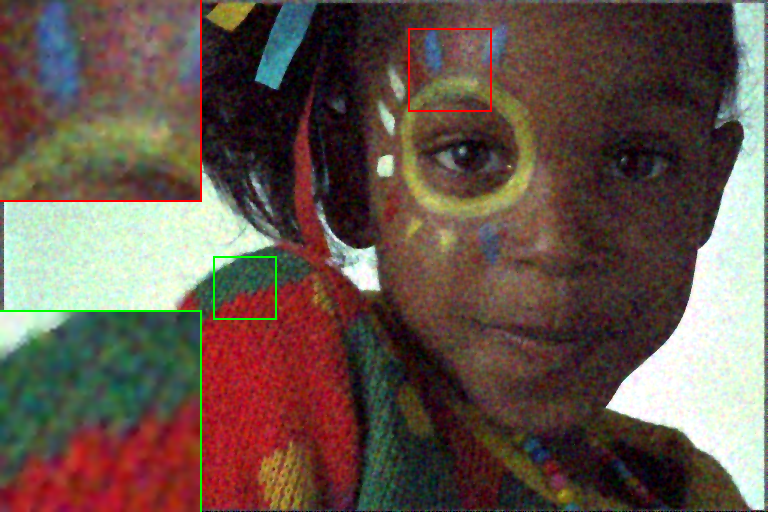} \\
        PSNR/SSIM & $\sigma=50$ & 30.22/0.8037 & 28.26/0.7495 & 26.02/0.5511  \\
        (a) GT & (b) NY & (c) CBM3D & (d) McWNNM & (e) SV-TV \\
        
        &
        \includegraphics[width=0.18\textwidth]{./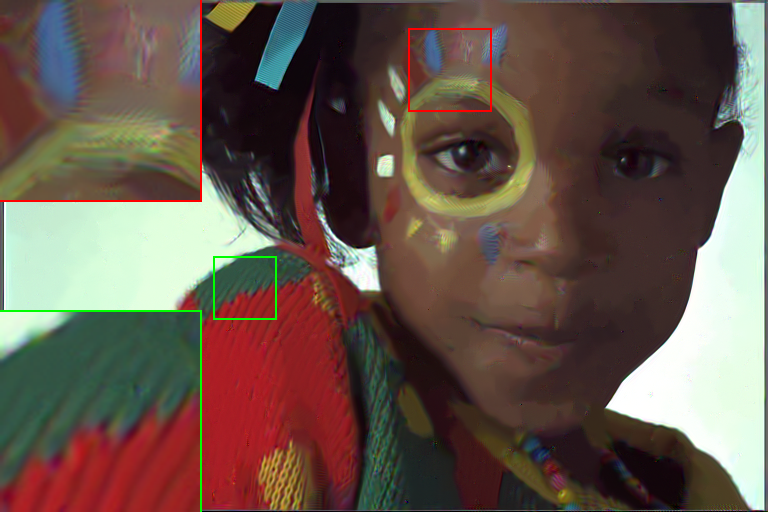} &
        \includegraphics[width=0.18\textwidth]{./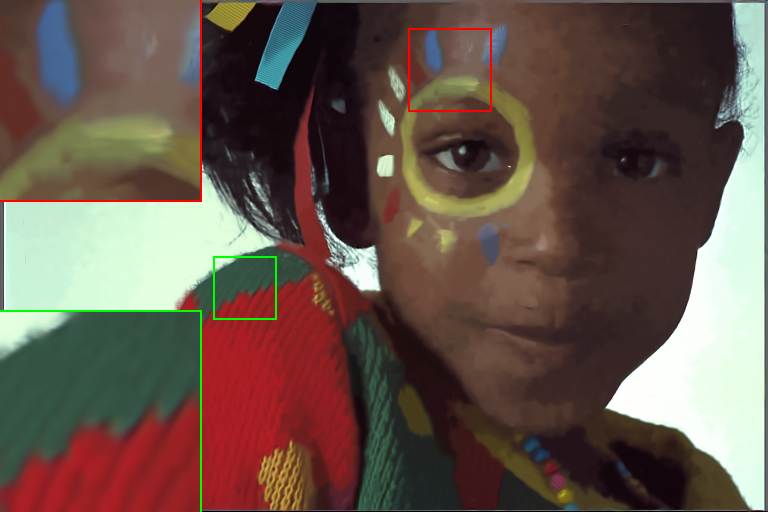} &
        \includegraphics[width=0.18\textwidth]{./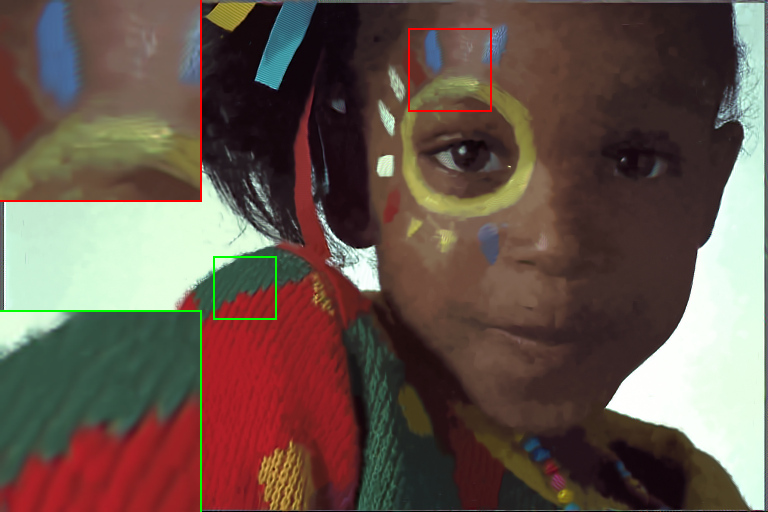} &
        \includegraphics[width=0.18\textwidth]{./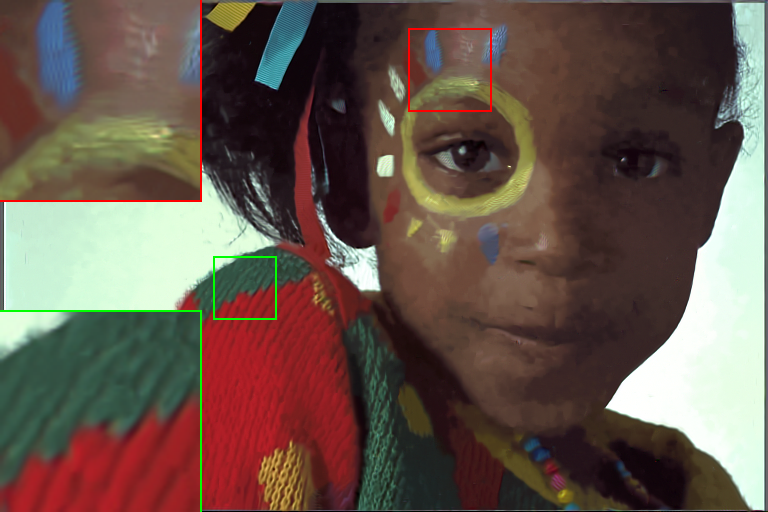} \\
        & 29.44/0.7819 & 30.25/0.8014 & 30.34/0.8075 & 30.48/0.8091 \\ 
        & (f) QLRMA & (g) QWSNM & (h) QWNNM & (i) Ours \\
        
        \includegraphics[width=0.18\textwidth]{./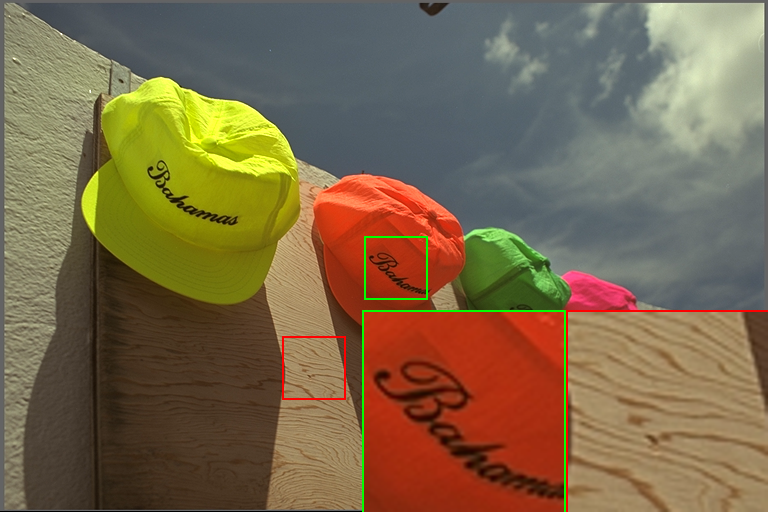} &
        \includegraphics[width=0.18\textwidth]{./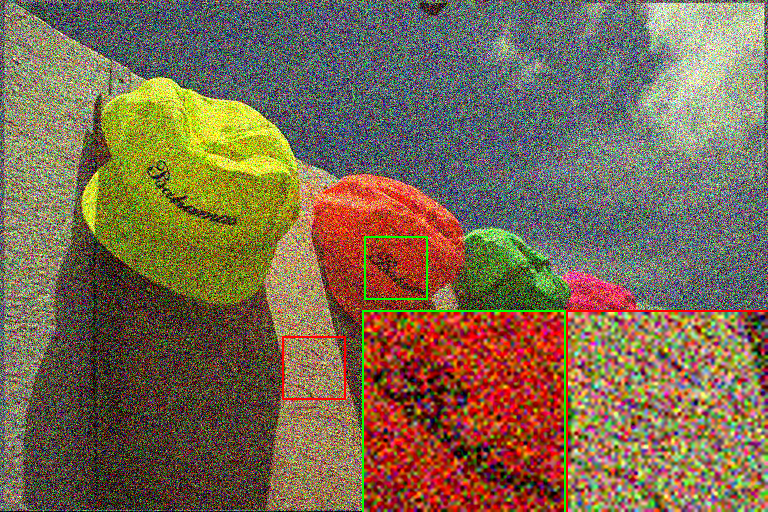} &
        \includegraphics[width=0.18\textwidth]{./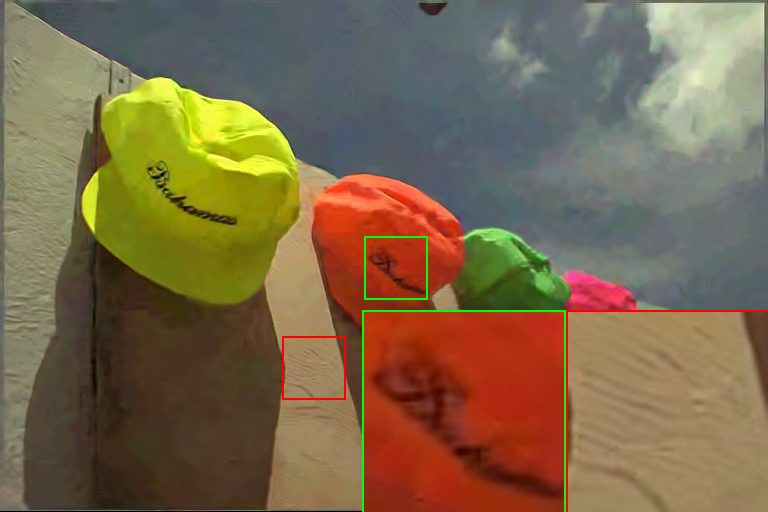} &
        \includegraphics[width=0.18\textwidth]{./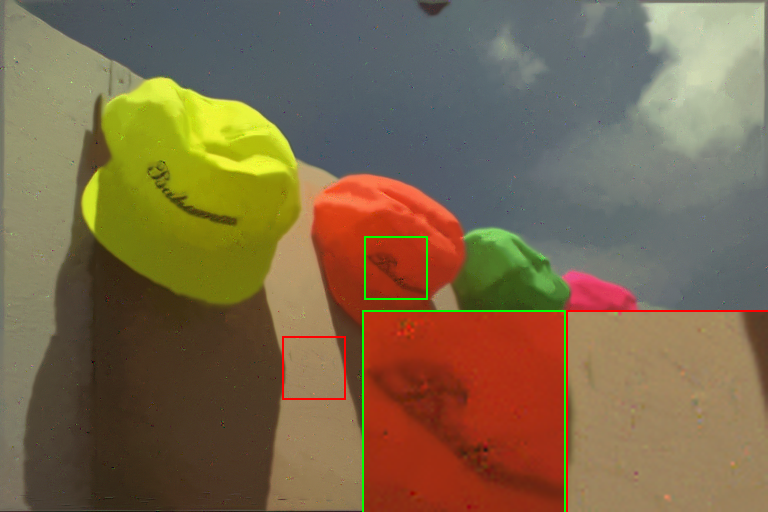} &
        \includegraphics[width=0.18\textwidth]{./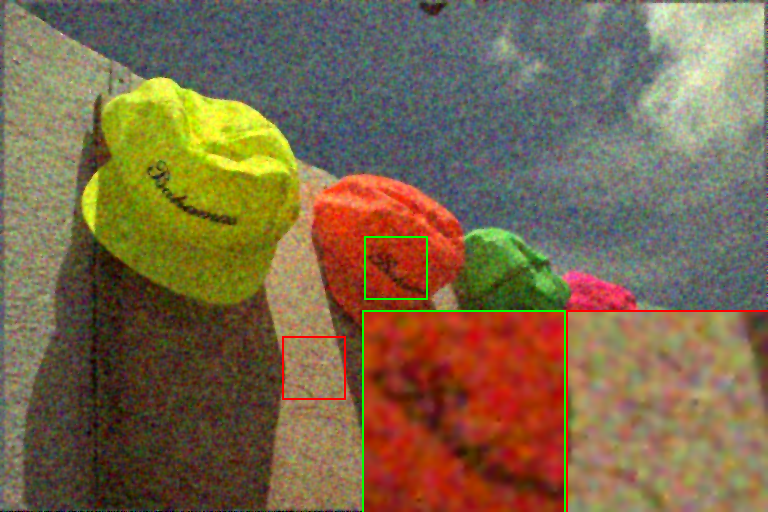} \\
        PSNR/SSIM & $\sigma=60$ & 30.52/0.8243 & 28.60/0.7630 & 25.07/0.4679 \\
        (a) GT & (b) NY & (c) CBM3D & (d) McWNNM & (e) SV-TV \\
        
        & 
        \includegraphics[width=0.18\textwidth]{./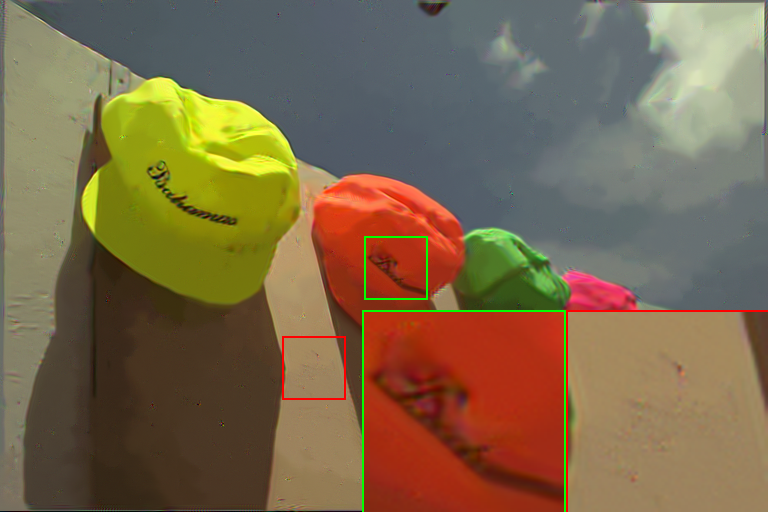} &
        \includegraphics[width=0.18\textwidth]{./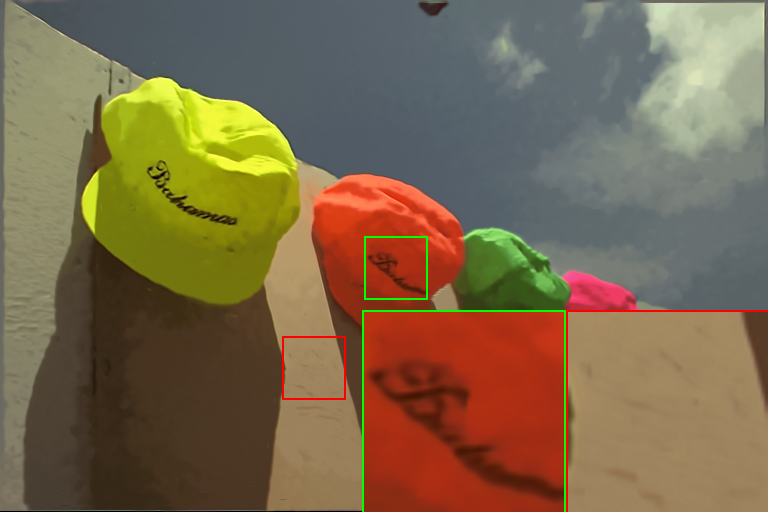} &
        \includegraphics[width=0.18\textwidth]{./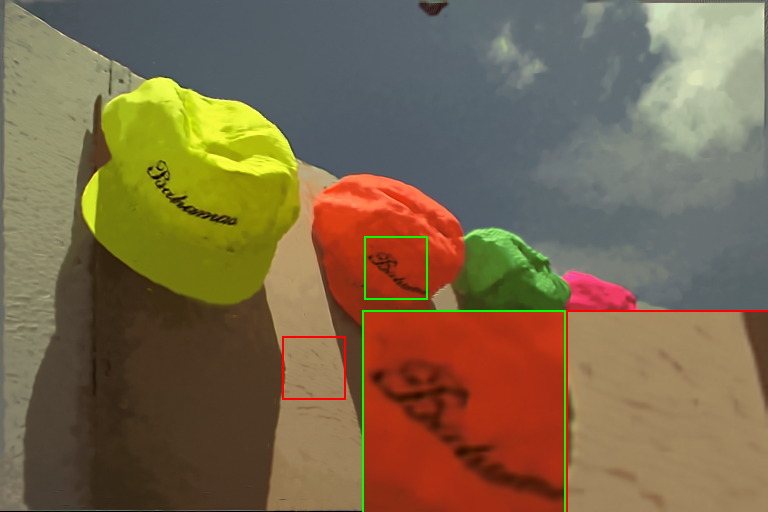} &
        \includegraphics[width=0.18\textwidth]{./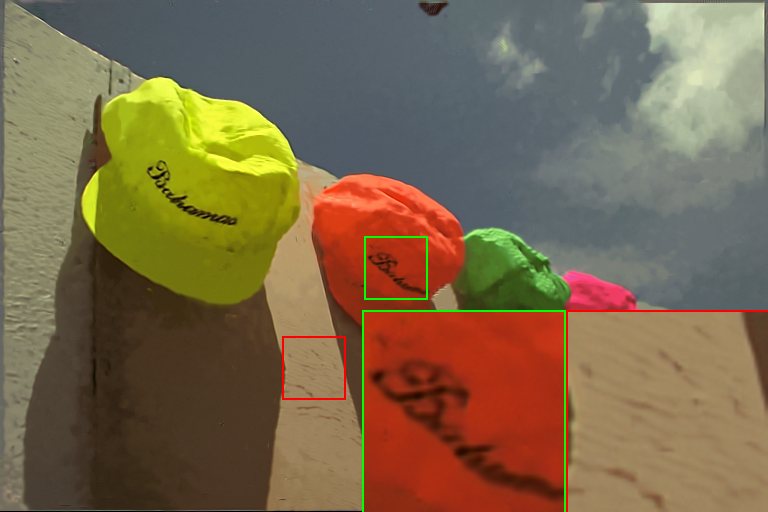} \\
        & 29.84/0.7994 & 30.52/0.8226 & 30.68/0.8278 & 30.82/0.8262 \\
        & (f) QLRMA & (g) QWSNM & (h) QWNNM & (i) Ours \\

	\end{tabular}
	}
	\caption{ 
    Comparison of visual results for color image denoising. All images are from the Kodak dataset.
 }
	\label{denoise_fig_2}
\end{figure*}

Table \ref{denoising} illustrates the comparative results of color image denoising, showcasing the consistent state-of-the-art performance of the proposed QNMF across various datasets and noise levels. In general, the algorithm based on quaternion representation outperforms methods focusing on vectorization or color space transformation. Specifically, among quaternion representation-based approaches, QWNNM excels in handling high noise levels ($\sigma \geq 50$), while QWSNM is particularly effective for low to medium noise levels ($\sigma < 50$). However, the proposed QNMF demonstrates robust performance across all noise levels.

The visual comparison of reconstructed images is depicted in Figures \ref{denoise_fig_1} and \ref{denoise_fig_2}. SV-TV exhibits significant noise residuals. CBM3D, another color space transformation algorithm, generates numerous color artifacts, particularly noticeable in smooth image regions like the sky in the hat image (in Figure \ref{denoise_fig_2}). These abrupt color changes stem from processing errors in smooth areas, which are highly discernible to the human eye, significantly influencing the perceived image quality. McWNNM, utilizing vectorization, produces fewer color artifacts but tends to oversmooth the image, leading to a loss of texture detail. Algorithms based on quaternion representation offer notable advantages in image color, attributed to the transformation and computation of inter-channel information facilitated by quaternion operations. Among these quaternion-based approaches, QNMF stands out for superior color and detail preservation, whereas both QWNNM and QWSNM suffer from excessive smoothing.

\begin{table*}[t]	
	\begin{center}
	\caption{Comparison of real denoising PSNR/SSIM values in cc, PolyU and SIDD datasets. The best results are shown in \textbf{bold}.}
    \label{real}
    \resizebox{\textwidth}{!}{
    \renewcommand\arraystretch{1}
    \begin{tabular}{l|ccccccc}
    \hline

    Real dataset & CBM3D \cite{dabov2007color} & McWNNM \cite{xu2017multi} & SV-TV \cite{jia2019color} & QLRMA \cite{chen2019low} & QWSNM \cite{zhang2024quaternion} & QWNNM \cite{yu2019quaternion} & Ours \\ \hline
    cc \cite{nam2016holistic}   & 36.62/0.9379 & 35.99/0.9247 & 35.72/0.9348 & 36.29/0.9346 & 37.75/0.9555 & 37.69/0.9542 & \textbf{37.83}/\textbf{0.9566} \\ 
    PolyU \cite{xu2018real}     & 37.93/0.9630 & 36.26/0.9513 & 37.15/0.9536 & 32.79/0.9115 & 37.88/0.9612 & 38.06/0.9622 & \textbf{38.10}/\textbf{0.9625} \\ 
    SIDD \cite{Abdelhamed2018}   & 36.10/0.9093 & 34.40/0.8720 & 32.84/0.8147 & 32.49/0.7971 & 36.44/0.9137  & 36.49/0.9155 & \textbf{36.53}/\textbf{0.9166} \\ 
    Av. & 36.88/0.9367 & 35.55/0.9160 & 35.24/0.9010 & 33.86/0.8811 & 37.36/0.9435 & 37.41/0.9440 & \textbf{37.49}/\textbf{0.9452} \\ \hline

    \end{tabular}
    }
	\end{center}

\end{table*} 
\begin{figure*}[!t]
	\centering
	\addtolength{\tabcolsep}{-5.5pt}
    \renewcommand\arraystretch{0.7}
	{\fontsize{10pt}{\baselineskip}\selectfont 
	\begin{tabular}{ccccccccc}
        \includegraphics[width=0.17\textwidth]{./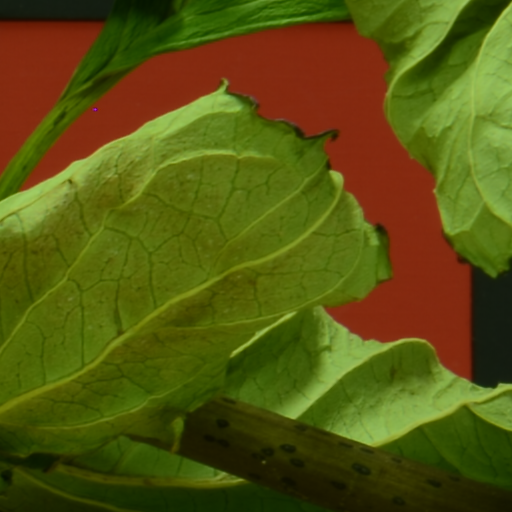} &
        \includegraphics[width=0.17\textwidth]{./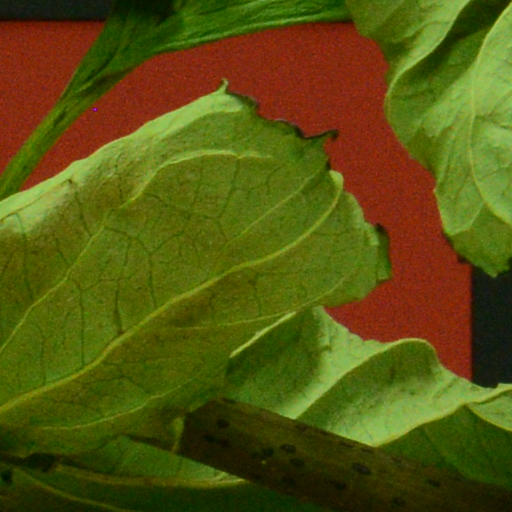} &
        \includegraphics[width=0.17\textwidth]{./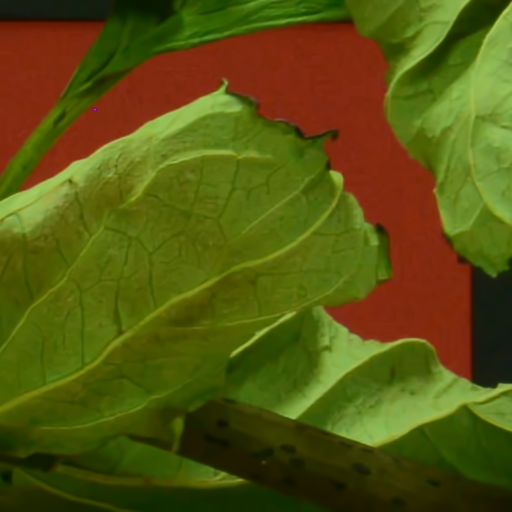} &
        \includegraphics[width=0.17\textwidth]{./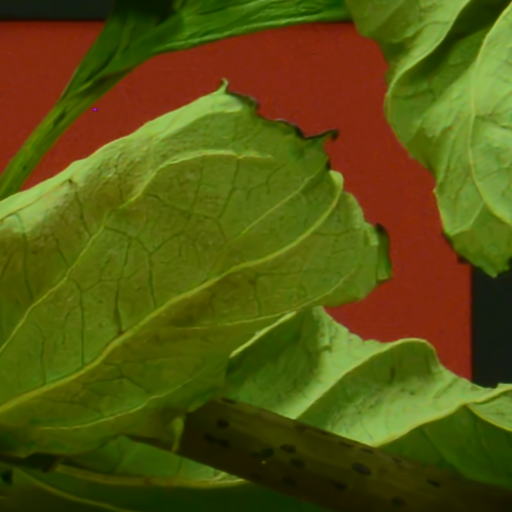} &
        \includegraphics[width=0.17\textwidth]{./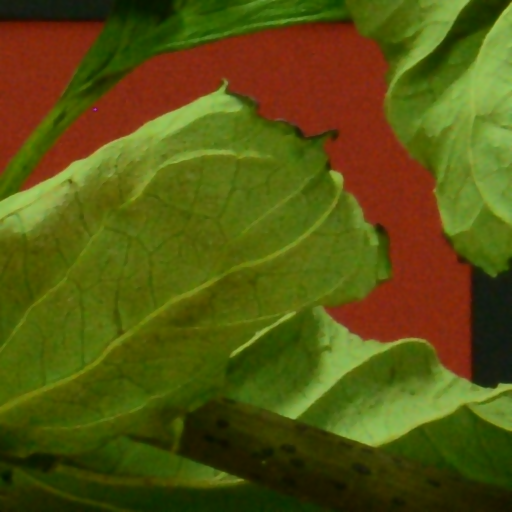} \\
        (a) GT & (b) NY & (c) CBM3D & (d) McWNNM & (e) SV-TV \\

        &
        \includegraphics[width=0.17\textwidth]{./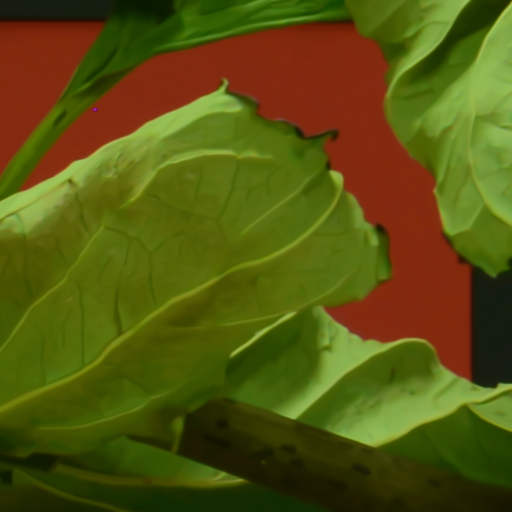} &
        \includegraphics[width=0.17\textwidth]{./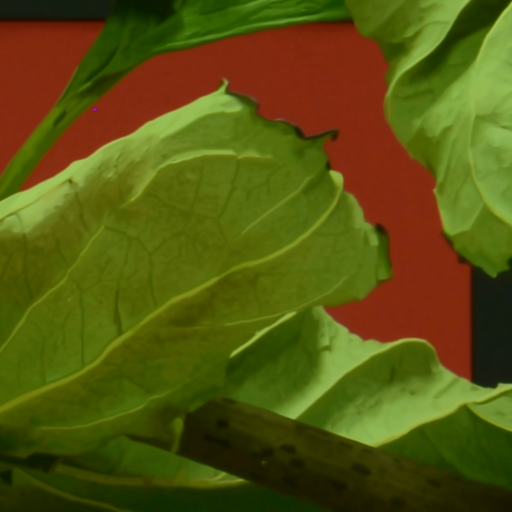} &
        \includegraphics[width=0.17\textwidth]{./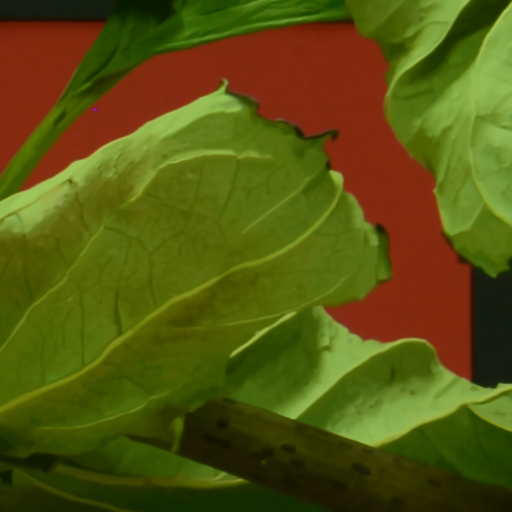} &
        \includegraphics[width=0.17\textwidth]{./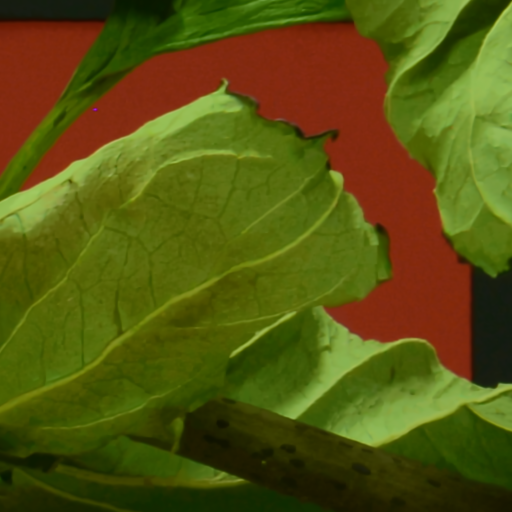} \\
        & (f) QLRMA & (g) QWSNM & (h) QWNNM & (i) Ours \\

        \includegraphics[width=0.17\textwidth]{./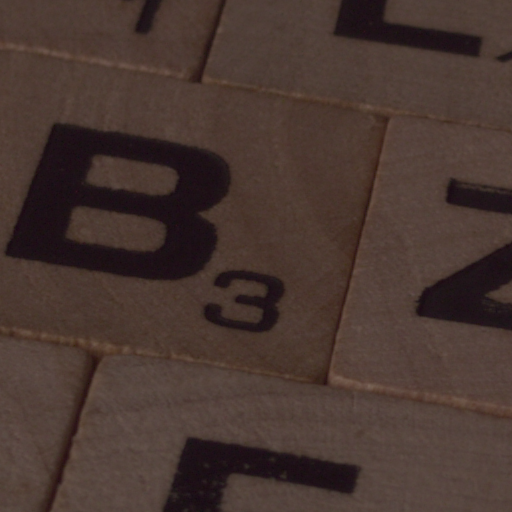} &
        \includegraphics[width=0.17\textwidth]{./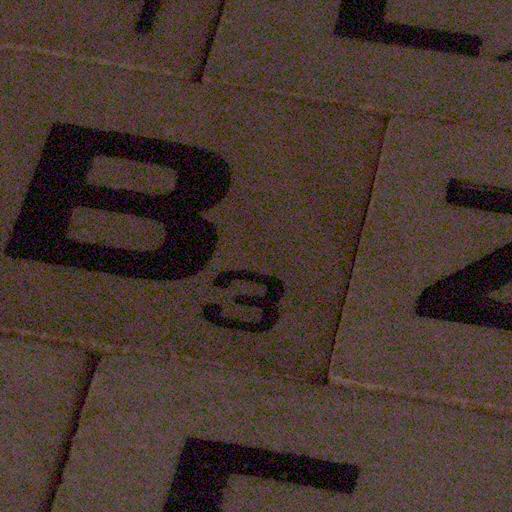} &
        \includegraphics[width=0.17\textwidth]{./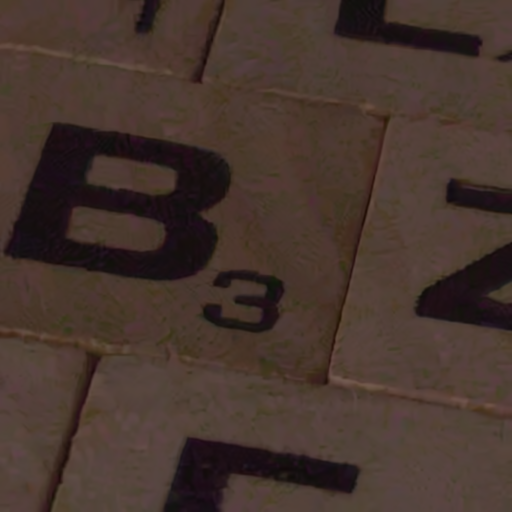} &
        \includegraphics[width=0.17\textwidth]{./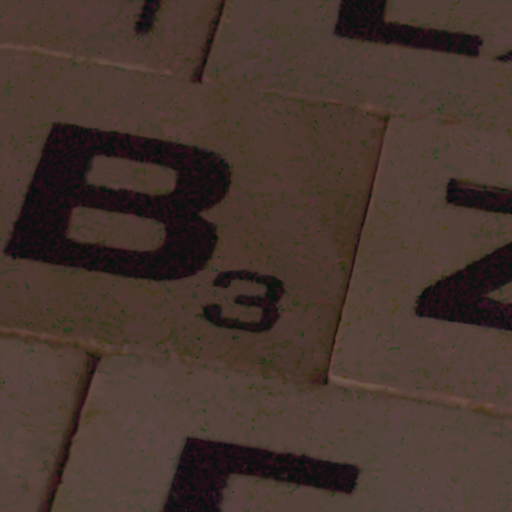} &
        \includegraphics[width=0.17\textwidth]{./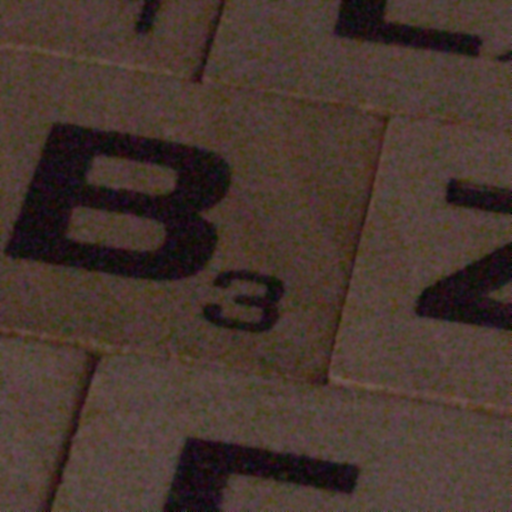} \\
        (a) GT & (b) NY & (c) CBM3D & (d) McWNNM & (e) SV-TV \\

        &
        \includegraphics[width=0.17\textwidth]{./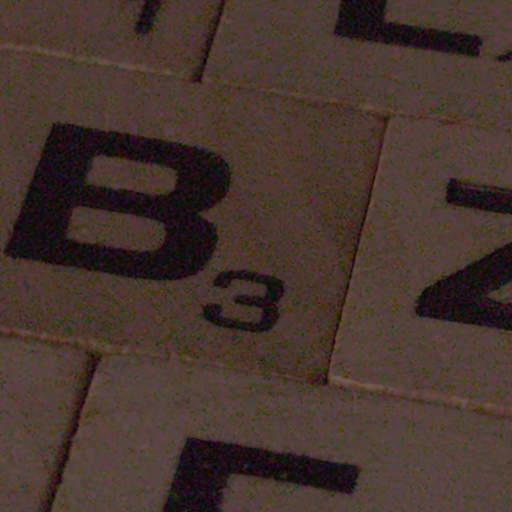} &
        \includegraphics[width=0.17\textwidth]{./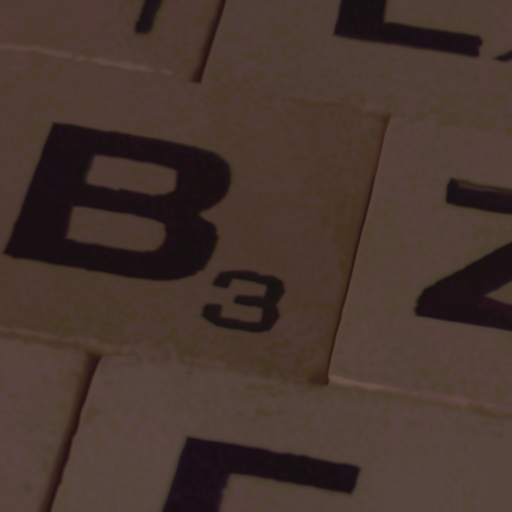} &
        \includegraphics[width=0.17\textwidth]{./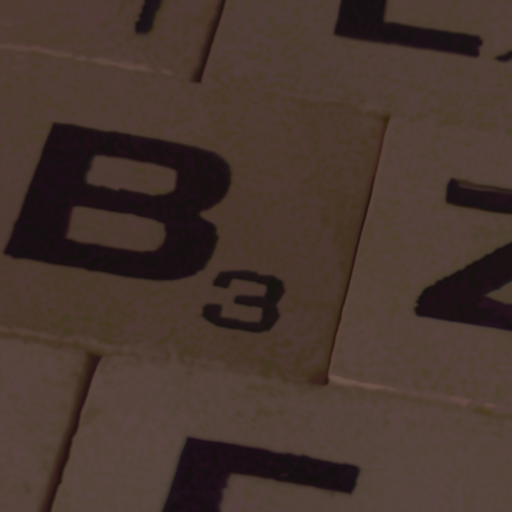} &
        \includegraphics[width=0.17\textwidth]{./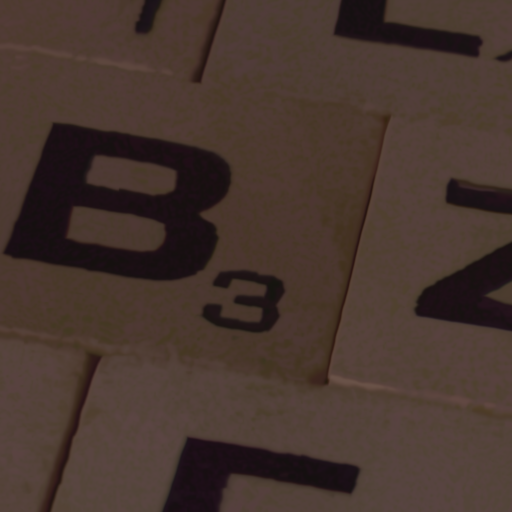} \\
        & (f) QLRMA & (g) QWSNM & (h) QWNNM & (i) Ours \\

	\end{tabular}
	}
	\caption{ Comparison of visual results for real color image denoising. The first two rows of images are from the cc dataset. The last two rows of images are from the SIDD dataset. }
	\label{Real_fig}
\end{figure*}

\subsection{Real image denoising} 
In real image denoising, the algorithm setup remains consistent with the previous subsection. To address blind denoising, where the noise intensity level is unknown, we employed the noise intensity estimator proposed by \cite{Chen2015Efficient} to estimate the $\sigma$ parameter for each algorithm.

\begin{table}[t]	
	\caption{Comparison of debluring PSNR/SSIM values in CSet12 datasets. The best results are shown in \textbf{bold}.}
    \label{deblur}
    
	\begin{center}
    \renewcommand\arraystretch{1}
    \footnotesize
    \addtolength{\tabcolsep}{ -4pt}
    \begin{tabular}{l|ccc}
    \hline

     Blur kernel & UB(9,9)/$\sigma=15$ & GB(25,1.6)/$\sigma=15$ & MB(20,60)/$\sigma=15$ \\ \hline
     SNSS \cite{zha2020image} & 20.02/0.5511 & 20.74/0.6438 & 19.95/0.5571 \\ 
     SV-TV \cite{jia2019color} & 22.57/0.6030 & 16.09/0.3634 & 19.77/0.4870 \\ 
     QD-SV-TV \cite{huang2021quaternion} & 21.37/0.5551 & 22.75/0.6110 & 19.79/0.4944 \\ 
     QWSNM \cite{zhang2024quaternion} & 24.42/0.6879 & 26.15/0.7642 & 23.83/0.6687 \\ 
     QWNNM \cite{huang2022quaternion} & 24.39/0.6862 & 26.02/0.7572 & 24.07/0.6819 \\ 
     Ours  & \textbf{24.45}/\textbf{0.6908} & \textbf{26.20}/\textbf{0.7665} & \textbf{24.19}/\textbf{0.6857} \\ \hline

    \end{tabular}
	\end{center}
\end{table}

\begin{figure}[!t]
	\centering
	\addtolength{\tabcolsep}{-5.5pt}
    \renewcommand\arraystretch{0.7}
	{\fontsize{10pt}{\baselineskip}\selectfont 
	\begin{tabular}{cccc}
        \includegraphics[width=0.17\textwidth]{./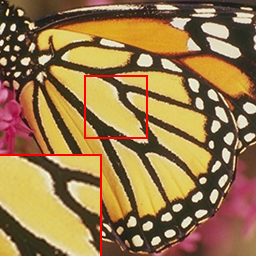} &
        \includegraphics[width=0.17\textwidth]{./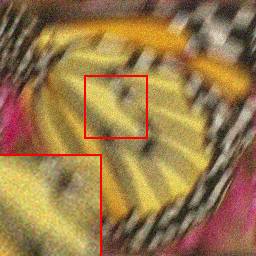} &
        \includegraphics[width=0.17\textwidth]{./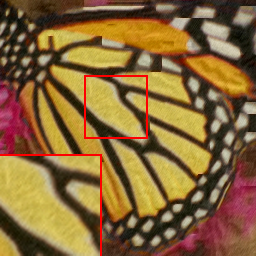} &
        \includegraphics[width=0.17\textwidth]{./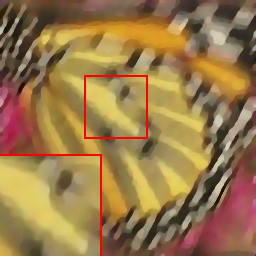}  \\

        (a) GT & (b) Blur & (c) SNSS & (d) SV-TV   \\
        
         \includegraphics[width=0.17\textwidth]{./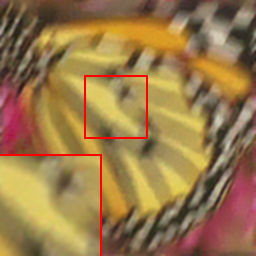} &
         \includegraphics[width=0.17\textwidth]{./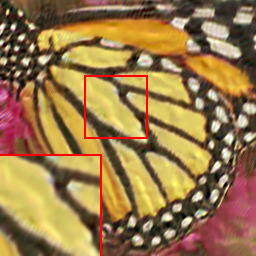} &
        \includegraphics[width=0.17\textwidth]{./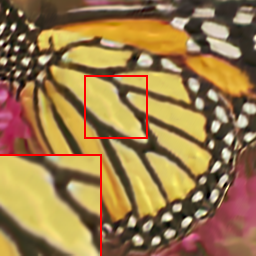} &
        \includegraphics[width=0.17\textwidth]{./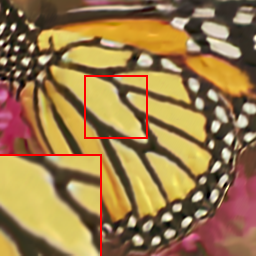} \\
       (e) QD-SV-TV & (f) QWSNM & (g) QWNNM & (h) Ours \\

	\end{tabular}
	}
	\caption{ 
     Comparison of visual results of color image deblurring. The blurring kernel is MB(20,60)/$\sigma=15$.
 }
	\label{Deblur_fig}
\end{figure}

Table \ref{real} illustrates the results on three real image datasets, indicating that the proposed QNMF consistently performs well and achieves state-of-the-art results across all three datasets. This aligns with its performance in the denoising experiments discussed earlier. The visual comparison is presented in Figure \ref{Real_fig}. QNMF maintains its superiority, preserving more image details compared to other methods. For example, our algorithm better preserves leaf texture while effectively eliminating noise (see the first two rows of Figure \ref{Real_fig}).

\subsection{Color image debluring} 

Taking color image deblurring as an example, we explored the potential of the proposed QNMF in low-level vision tasks. We considered three common blur kernels: a $9\times9$ uniform kernel, a Gaussian kernel with a standard deviation of 1.6, and the motion kernel with a motion length of 20 and a motion angle of 60. All of these blurred images were affected by additive Gaussian noise with $\sigma=15$. Four algorithms, namely SV-TV \cite{jia2019color}, QWNNM \cite{huang2022quaternion}, QWSNM \cite{zhang2024quaternion}, QD-SV-TV based on quaternion dictionary learning \cite{huang2021quaternion}, and SNSS \cite{zha2020image} based on gaussian mixture model, were used for comparison.

The parameters of the deblurring algorithm were set as follows: $\beta=8.5$, $\gamma=115$ for uniform blur; $\beta=7.5$, $\gamma=65$ for Gaussian blur; $\beta=7.5$, $\gamma=115$ for motion blur. 155 similar patches of $6\times6$ were searched in the search window of 30. $\alpha$ was still set to 4.

Table \ref{deblur} displays the reconstruction performance of the comparison algorithms for the three types of blurs. QWSNM shows superior performance under uniform and Gaussian blurs, while QWNNM excels at handling motion blur. The proposed QNMF consistently achieves state-of-the-art results across all three blurs. In Figure \ref{Deblur_fig}, a visual comparison between the proposed QNMF and state-of-the-art algorithms under motion blur is presented. The reconstruction outcomes of SV-TV and QD-SV-TV are unsatisfactory. Both QWNNM and QWSNM exhibit reconstruction artifacts, whereas the proposed QNMF demonstrates smooth reconstruction results. These findings underscore the remarkable potential of QNMF in addressing the color image linear inverse problem.
 
\begin{table}[t]	
	\caption{Comparison of PSNR/SSIM values for color image matrix completion. The best results are marked in \textbf{bold}. The second best results are marked in {\color{red} red}. Ours-G indicates processing on the global image and Ours-NNS indicates processing using image patches.}
    \label{mc}
    
	\begin{center}
    \footnotesize
    \renewcommand\arraystretch{1}
    \addtolength{\tabcolsep}{ -2pt}
    \begin{tabular}{l|cccc}
    \hline

    Miss rate & 50\% & 60\% & 75\% & 80\%  \\ \hline
    TRPCA \cite{yang2020low} & 24.26/{\color{red} 0.7636} & 22.97/0.6901 & 20.70/0.5397 & 19.71/0.4745 \\ 
    QMC \cite{jia2019robust} & 23.76/0.7594 & 22.75/{\color{red} 0.6940} & 20.73/{\color{red} 0.5504} & 19.82/{\color{red} 0.4866} \\ 
    QWSNM \cite{zhang2024quaternion} & 25.47/0.6993 & 23.62/0.6124 & 19.14/0.3827 & 14.42/0.2097 \\ 
    QWNNM \cite{huang2022quaternion} & 24.63/0.6577 & 22.80/0.5680 & 19.60/0.3942 & 17.04/0.2769 \\ 
    Ours-G & {\color{red} 26.27}/0.7594 & {\color{red} 24.36}/0.6717 & {\color{red} 21.52}/0.5083 & {\color{red} 20.37}/0.4415 \\ 
    Ours-NNS & \textbf{31.80}/\textbf{0.9397} & \textbf{29.85}/\textbf{0.9114} & \textbf{27.05}/\textbf{0.8512} & \textbf{23.81}/\textbf{0.7714} \\ \hline

    \end{tabular}
	\end{center}

\end{table} 
\begin{table}[!t]	
	\caption{Comparison of PSNR/SSIM values for color image random impulse noise removal. The best results are marked in \textbf{bold}. The second best results are marked in {\color{red} red}. Ours-G indicates processing on the global image and Ours-NNS indicates processing using image patches.}
    \label{rpca}
    
	\begin{center}
    \footnotesize
    \renewcommand\arraystretch{1}
    \begin{tabular}{l|ccccH}
    \hline

    Random rate & 3\% & 5\% & 7\% & 10\% & 20\%  \\ \hline
    TRPCA \cite{yang2020low} & 29.56/{\color{red} 0.9398} & 29.38/{\color{red} 0.9343} & 29.14/\textbf{0.9279} & 28.80/\textbf{0.9170} & {\color{red} 27.51}/\textbf{0.8677} \\ 
    QMC \cite{jia2019robust} & 27.53/0.9119 & 27.37/0.9062 & 27.23/0.9004 & 26.99/{\color{red} 0.8904} & 26.10/{\color{red} 0.8444} \\ 
    QWSNM \cite{zhang2024quaternion} & 24.32/0.7513 & 24.29/0.7489 & 24.25/0.7468 & 24.18/0.7438 & 23.91/0.7294 \\ 
    QWNNM \cite{huang2022quaternion} & 24.40/0.7582 & 24.39/0.7558 & 24.33/0.7534 & 24.28/0.7498 & 23.97/0.7338 \\ 
    Ours-G & \textbf{33.21}/\textbf{0.9462} & \textbf{32.07}/\textbf{0.9362} & \textbf{30.86}/{\color{red} 0.9164} & {\color{red} 29.34}/0.8895 & 25.85/0.7704 \\ 
    Ours-NNS & {\color{red} 32.70}/0.9093 & {\color{red} 30.98}/0.8815 & {\color{red} 30.32}/0.8617 & \textbf{29.58}/0.8582 & \textbf{27.74}/0.8194 \\ \hline

    \end{tabular}
	\end{center}

\end{table} 

\subsection{Matrix completion and Robust PCA}

In this subsection, we evaluated the performance of QNMF in matrix completion and RPCA. For matrix completion, we assessed observation missing rates of 50\%, 60\%, 75\%, and 80\%. For RPCA, we added 3\%, 5\%, 7\%, and 10\%  random impulse noise to the color images. We compared QNMF with tensor-based TRPCA \cite{yang2020low}, quaternion matrix completion (QMC) \cite{jia2019robust}, QWNNM, and QWSNM.  TRPCA, QMC, QWNNM, and QWSNM used global image computation. In contrast, QNMF utilized both NNS-based patch computation (Ours-NNS) and global image computation (Ours-G).
For parameter settings, refer to \cite{gu2017weighted,xie2016weighted}, with $\alpha$ set to 4.

\begin{figure}[t]
	\centering
	\addtolength{\tabcolsep}{-5.5pt}
    \renewcommand\arraystretch{0.6}
	{\fontsize{10pt}{\baselineskip}\selectfont 
	\begin{tabular}{cccc}
        \includegraphics[width=0.17\textwidth]{./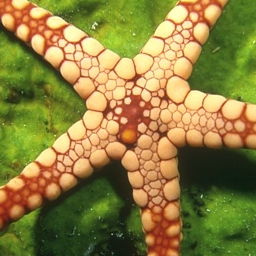} &
        \includegraphics[width=0.17\textwidth]{./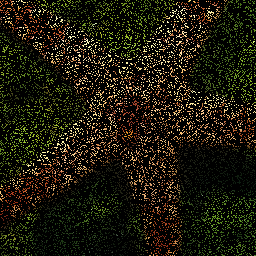} &
        \includegraphics[width=0.17\textwidth]{./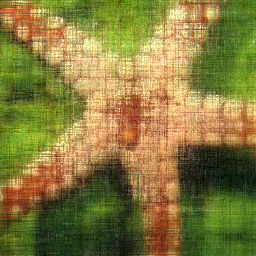} &
        \includegraphics[width=0.17\textwidth]{./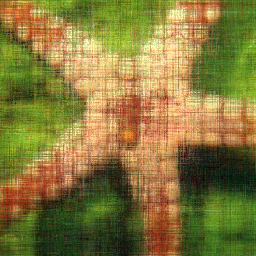} \\

        (a) GT & (b) Miss (80\%) & (c) TRPCA & (d) QMC \\
        
        \includegraphics[width=0.17\textwidth]{./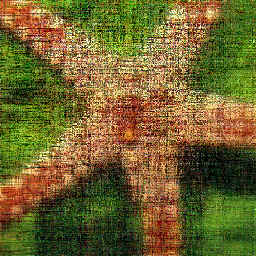} &
        \includegraphics[width=0.17\textwidth]{./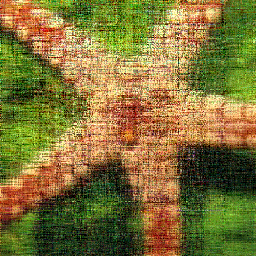} &
        \includegraphics[width=0.17\textwidth]{./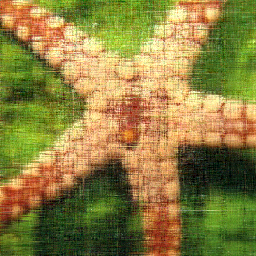} & 
        \includegraphics[width=0.17\textwidth]{./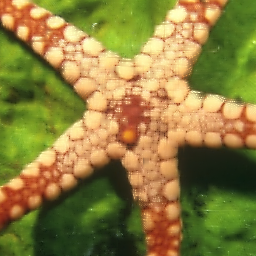} \\
         (e) QWSNM & (f) QWNNM & (g) Ours-G & (f) Ours-NNS \\

	\end{tabular}
	}
	\caption{   
    Comparison of visual results for color image matrix completion. The missing rate is 80\%. }
	\label{MC_fig}
\end{figure}
\begin{figure}[!t]
	\centering
	\addtolength{\tabcolsep}{-5.5pt}
    \renewcommand\arraystretch{0.6}
	{\fontsize{10pt}{\baselineskip}\selectfont 
	\begin{tabular}{cccc}
        \includegraphics[width=0.17\textwidth]{./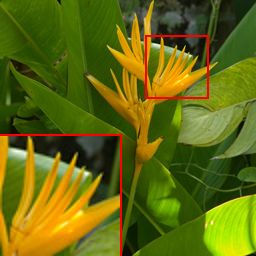} &
        \includegraphics[width=0.17\textwidth]{./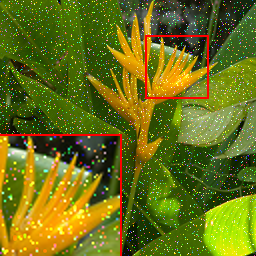} &
        \includegraphics[width=0.17\textwidth]{./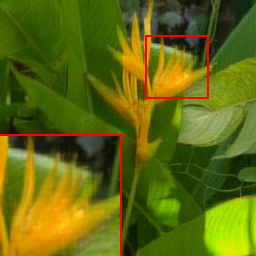} &
        \includegraphics[width=0.17\textwidth]{./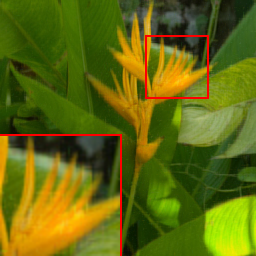} \\

        (a) GT & (b) NY (10\%) & (c) TRPCA & (d) QMC \\
        
        \includegraphics[width=0.17\textwidth]{./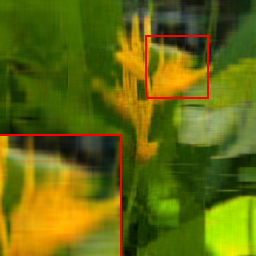} &
        \includegraphics[width=0.17\textwidth]{./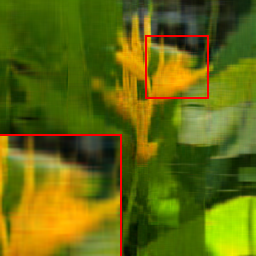} &
        \includegraphics[width=0.17\textwidth]{./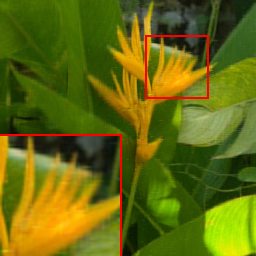} & 
        \includegraphics[width=0.17\textwidth]{./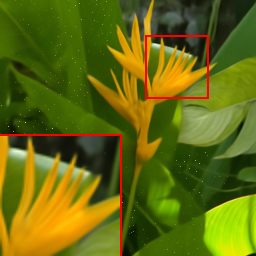} \\
         (e) QWSNM & (f) QWNNM & (g) Ours-G & (f) Ours-NNS \\

	\end{tabular}
	}
	\caption{ 
    Comparison of visual results for random impulse noise removal from color images. The random rate is 10\%. }
	\label{RPCA_fig}
\end{figure}

Table \ref{mc} presents the numerical results of matrix completion, where QNMF-NNS achieves the best performance. QNMF-G ranks second in PSNR, while TRPCA and QMC perform well in SSIM. However, QWNNM and QWSNM, which excel in denoising and deblurring, exhibit subpar performance in matrix completion. Figure \ref{MC_fig} illustrates the visual comparison of completion results for 80\% missing data. Notably, the reconstructions by QWNNM and QWSNM exhibit significant noise, while those of TRPCA and QMC are overly smooth. QNMF-G stands out visually among global algorithms by preserving more details. QNMF-NNS leverages non-local self-similarity prior for superior results. Table \ref{rpca} and Figure \ref{RPCA_fig} present quantitative and qualitative RPCA results, respectively. The performance comparisons are similar to those in matrix completion, with QNMF-NNS not consistently optimal. QNMF-G outperforms QNMF-NNS particularly when random noise is minimal, as fewer pixel mutations interfere with the image, potentially leading to smoother outcomes with NNS utilization.

\subsection{Ablation experiment}

In the QNMF model (\ref{NNFN}), the parameter $\alpha$ is crucial. When $\alpha=0$, the QNMF model simplifies to the QNNM model. Setting $\alpha$ is critical because a value that is too large can truncate the shrinkage of singular values, making it ineffective. Therefore, $\alpha$ should be within a reasonable range to ensure effective singular value shrinkage.
We evaluated the impact of $\alpha$ on the performance of the Gaussian denoising algorithm for various noise levels ($\sigma=5, 20, 40, 60$). We tested $\alpha$ values of 0.5, 1, 2, 3, 4, 5, and 6, and plotted the performance curves in Figure \ref{alpha_fig}. The trends for both PSNR and SSIM are similar with respect to $\alpha$. For $\alpha$ values between 0.5 and 3, the algorithm’s performance improves as $\alpha$ increases. However, for $\alpha$ values greater than 4, the performance begins to decline due to insufficient singular value shrinkage. Based on these observations, we typically set $\alpha=4$ in practical applications of the QNMF algorithm to balance performance and effective singular value shrinkage.

\begin{figure}[!t]
	\centering
	\addtolength{\tabcolsep}{-5.5pt}
    \renewcommand\arraystretch{0.6}
	{\fontsize{10pt}{\baselineskip}\selectfont 
	\begin{tabular}{cccc}
        \includegraphics[width=0.4\textwidth]{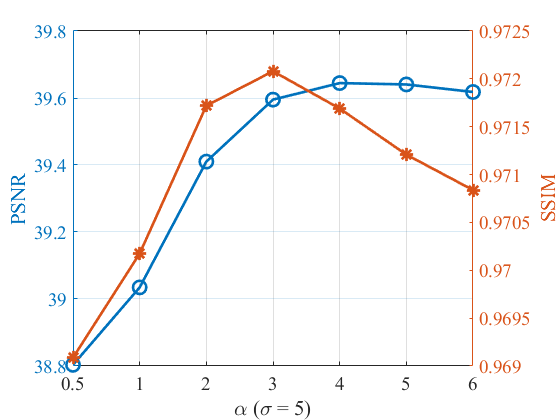} &
        \includegraphics[width=0.4\textwidth]{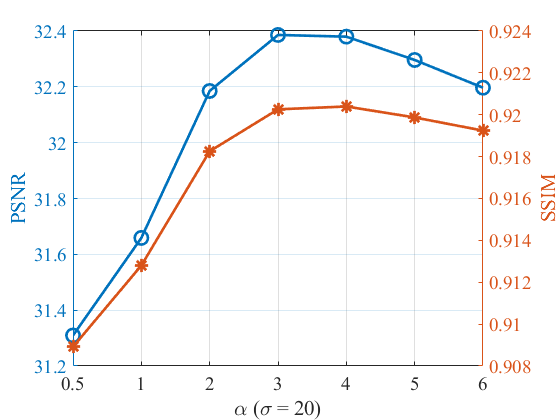} \\
        \includegraphics[width=0.4\textwidth]{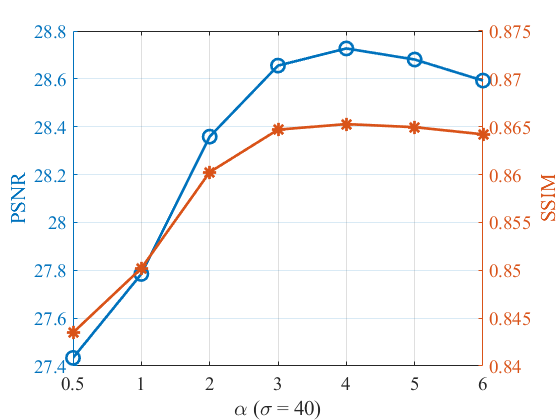} &
        \includegraphics[width=0.4\textwidth]{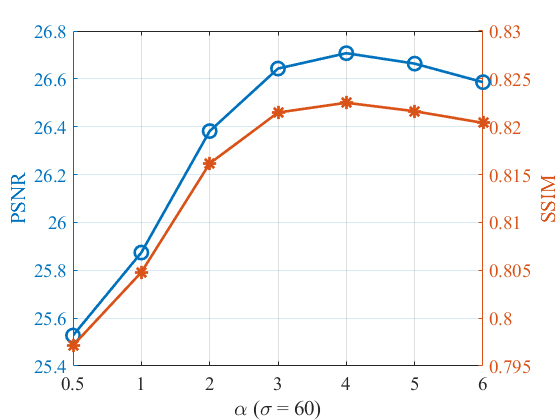} \\

	\end{tabular}
	}
 '
	\caption{ The effect of the value $\alpha$ on the denoising performance.} 
	\label{alpha_fig}
\end{figure}

\begin{table*}[t]	
	\begin{center}
	\caption{The running time (s) of all algorithms on a PC with Intel (R) Core (TM) i7-13700KF 3.40 GHz and 64G RAM.} 
 
    \label{time}
    \resizebox{\textwidth}{!}{
    \addtolength{\tabcolsep}{ -2pt}
    \begin{tabular}{c|ccccccc}
    \hline
    \multicolumn{8}{c}{Denoising, $\sigma=40$} \\ \hline
    
    Size & CBM3D \cite{dabov2007color} & McWNNM \cite{xu2017multi} & SV-TV \cite{jia2019color} & QLRMA \cite{chen2019low} & QWSNM \cite{zhang2024quaternion} & QWNNM \cite{yu2019quaternion} & Ours \\ \hline
    256$\times$256 & 2.02 & 111.57 & 3.01  & 577.46  & 955.33  & 580.60  &  620.86 \\  
    500$\times$500 & 7.38 & 460.51 & 11.53 & 2289.23 & 3379.72 & 2289.69 & 2423.98 \\ \hline

    \multicolumn{8}{c}{Debluring, GB(25,1.6)/$\sigma=15$} \\ \hline
    Size & -- & SNSS \cite{zha2020image} & SV-TV \cite{jia2019color} & QD-SV-TV \cite{huang2021quaternion} & QWSNM \cite{zhang2024quaternion} & QWNNM \cite{huang2022quaternion} & Ours \\ \hline
    256$\times$256 & -- & 324.42 & 3.09  & 22.82  & 3959.28  & 3831.23  &  2154.88 \\  \hline

    \multicolumn{8}{c}{Matrix completion, miss=80\% } \\ \hline
    Size & -- & -- & TRPCA \cite{yang2020low} & QMC \cite{jia2019robust} & QWSNM \cite{zhang2024quaternion} & QWNNM \cite{huang2022quaternion} & Ours \\ \hline
    256$\times$256 & -- & -- & 15.02 & 168.13  & 21.07  & 20.85  &  10.83 \\  \hline

    \end{tabular}}
	\end{center}

\end{table*}

To demonstrate the computational complexity of different algorithms, we tested their runtimes on a PC with an Intel (R) Core (TM) i7-13700KF 3.40 GHz processor and 64GB RAM. The runtimes are shown in Table \ref{time}. CBM3D and SV-TV exhibit rapid runtimes, while low-rank class algorithms generally require longer runtimes. Notably, the proposed QNMF algorithm has an advantageous runtime among quaternion-based algorithms.

\section{Limitation and Conclusion}
\label{sec:5}

Our comparison of runtimes reveals that algorithms utilizing the low-rank nature of images generally have longer runtimes. This is due to the fact that image rank often follows a long-tailed distribution, making direct use of low-rank priors less effective. To better exploit low-rank priors, similar image patches are typically combined with the self-similarity of images to form low-rank matrices. While this method improves results, it also increases computational complexity.
Furthermore, algorithms based on quaternion algebra typically rely on QSVD, which is slower than SVD in the real number domain. These challenges highlight the need for future optimization efforts to enhance computational speed. Currently, deep learning algorithms are at the forefront of image reconstruction. Incorporating a priori knowledge from traditional algorithms into these learnable models will be a key area of focus moving forward.

In this paper, we investigate the quaternion nucleus norm minus Frobenius norm minimization (QNMF) problem and apply it to various image processing tasks. Initially, we introduce a strategy for solving the QNMF problem while minimizing the loss of F-norm fidelity, offering a closed-form solution. Subsequently, we extend the application of QNMF to address the color image linear inverse problem, matrix completion (MC), and robust principal component analysis (RPCA) using the ADMM framework. We provide rigorous proofs for all these methodologies to ensure their theoretical soundness. We validate the effectiveness of QNMF across various tasks, such as color image denoising, deblurring, inpainting, and random impulse noise removal. Experimental results confirm that the proposed method outperforms existing techniques, especially in preserving color fidelity and reducing artifacts. The ability to handle various rank components and the availability of closed-form solutions make QNMF a promising tool for future research in color image reconstruction and related applications.  In the future, we will consider whether other quaternion hybrid norms can approximate the rank norm more accurately, such as the quaternion nuclear norm over Frobenius norm. Meanwhile, considering that QSVD has high computational complexity, QSVD-free quaternion methods will also be the focus of our future work.

\section*{Acknowledgment}
This work was supported by National Natural Science Foundation of China (No. 12061052), Young Talents of Science and Technology in Universities of Inner Mongolia Autonomous Region (No. NJYT22090), Natural Science Foundation of Inner Mongolia Autonomous Region (No. 2024LHMS01006), Innovative Research Team in Universities of Inner Mongolia Autonomous Region (No. NMGIRT2207),  Special Funds for Graduate Innovation and Entrepreneurship of Inner Mongolia University (No.~11200-121024), Prof. Guoqing Chen's “111 project” of higher education talent training in Inner Mongolia Autonomous Region, Inner Mongolia University Independent Research Project (No. 2022-ZZ004), Inner Mongolia Science and Technology Achievement Transfer and Transformation Demonstration Zone, University Collaborative Innovation Base, and University Entrepreneurship Training Base" Construction Project (Supercomputing Power Project) (No. 21300-231510) and the network information center of Inner Mongolia University. M. Ng’s research is funded by HKRGC GRF 17201020 and 17300021, HKRGC CRF C7004-21GF, and Joint NSFC and RGC N-HKU769/21.
The authors are also grateful to the reviewers for their valuable comments and remarks.

\section*{Appendix}
\subsection*{A. Proof of Theorem \ref{theorem02}}
\label{Appendix 2}

\begin{proof}[Proof]
First, we prove that the sequence $\{\dot{\mathbf{\eta}}^{(k)}\}$ generated by Algorithm \ref{ag1} is upper bounded.
\begin{equation}
\begin{aligned}
||\dot{\mathbf{\eta}}^{(k+1)} ||_{F}^{2} &= || \dot{\mathbf{\eta}}^{(k)} + \beta^{(k)}(\dot{\mathbf{X}}^{(k+1)}-\dot{\mathbf{Z}}^{(k+1)})  ||_{F}^{2} \\
&= {\beta^{(k)}}^{2} || ({\beta^{(k)}}^{-1}\dot{\mathbf{\eta}}^{(k)} + \dot{\mathbf{X}}^{(k+1)}) -\dot{\mathbf{Z}}^{(k+1)}  ||_{F}^{2} \\
&= {\beta^{(k)}}^{2} ||\dot{\mathbf{U}}^{(k)}\Sigma^{(k)} \mathbf{V}^{(k)\top} -\dot{\mathbf{U}}^{(k)}\Sigma^{(k)}_{\dot{\mathbf{Z}}} \mathbf{V}^{(k)\top} ||_{F}^{2} \\
&= {\beta^{(k)}}^{2} \sum^{M}_{i=1}(\sigma^{(k)}_{i} - \sigma^{(k)}_{\dot{\mathbf{Z}},i})^2 \\
&\leq {\beta^{(k)}}^{2} \sum_{i=1}^{M} (\lambda/\beta^{(k)})^{2} \\
&= \lambda^{2}M.
\label{eta}
\end{aligned}
\end{equation}
For the last inequality, if $\sigma^{(k)}_{i} \geq \lambda/\beta^{(k)}$, then $\sigma^{(k)}_{\dot{\mathbf{Z}},i} = \frac{||z||_{2}+\alpha \lambda/\beta^{(k)}}{||z||_{2}}\max(\sigma^{(k)}_{i}-\lambda/\beta^{(k)}, 0)$.  Obviously $\sigma^{(k)}_{i} - \sigma^{(k)}_{\dot{\mathbf{Z}},i} \leq \lambda/\beta^{(k)}$ holds. If $\sigma^{(k)}_{i} < \lambda/\beta^{(k)}$, $\sigma^{(k)}_{i} - \sigma^{(k)}_{\dot{\mathbf{Z}},i} = \sigma^{(k)}_{i} < \lambda/\beta^{(k)}$. Hence, the sequence $\{\dot{\mathbf{\eta}}^{(k)}\}$ generated by Algorithm \ref{ag1} is upper bounded.

Then, we prove that the sequence of Lagrange function $\{ \mathcal{L}(\dot{\mathbf{X}}^{(k+1)}, \dot{\mathbf{Z}}^{(k+1)}, \dot{\mathbf{\eta}}^{(k+1)}, \beta^{(k+1)}) \}$ is also upper bounded. According to the update rule for $\dot{\mathbf{\eta}}^{(k)}$, it can be obtained 
\begin{equation}
\begin{aligned}
&\mathcal{L}(\dot{\mathbf{X}}^{(k+1)}, \dot{\mathbf{Z}}^{(k+1)}, \dot{\mathbf{\eta}}^{(k+1)}, \beta^{(k+1)}) \\
&~=\mathcal{L}(\dot{\mathbf{X}}^{(k+1)}, \dot{\mathbf{Z}}^{(k+1)}, \dot{\mathbf{\eta}}^{(k)}, \beta^{(k)}) + \langle{\dot{\mathbf{\eta}}^{(k+1)} - \dot{\mathbf{\eta}}^{(k)}, \dot{\mathbf{X}}^{(k+1)}-\dot{\mathbf{Z}}^{(k+1)} }\rangle  +\frac{\beta^{(k+1)}-\beta^{(k)}}{2}||\dot{\mathbf{X}}^{(k+1)}-\dot{\mathbf{Z}}^{(k+1)} ||_{F}^{2} \\
&~=\mathcal{L}(\dot{\mathbf{X}}^{(k+1)}, \dot{\mathbf{Z}}^{(k+1)}, \dot{\mathbf{\eta}}^{(k)}, \beta^{(k)})  + \langle{\dot{\mathbf{\eta}}^{(k+1)} - \dot{\mathbf{\eta}}^{(k)}, \frac{\dot{\mathbf{\eta}}^{(k+1)} - \dot{\mathbf{\eta}}^{(k)}}{\beta^{(k)}} }\rangle  +\frac{\beta^{(k+1)}-\beta^{(k)}}{2}||\frac{\dot{\mathbf{\eta}}^{(k+1)} - \dot{\mathbf{\eta}}^{(k)}}{\beta^{(k)}} ||_{F}^{2} \\
&~= \mathcal{L}(\dot{\mathbf{X}}^{(k+1)}, \dot{\mathbf{Z}}^{(k+1)}, \dot{\mathbf{\eta}}^{(k)}, \beta^{(k)})  +\frac{\beta^{(k+1)} +\beta^{(k)}}{2{\beta^{(k)}}^2}||\dot{\mathbf{\eta}}^{(k+1)} - \dot{\mathbf{\eta}}^{(k)} ||_{F}^{2}. 
\label{L}
\end{aligned}
\end{equation}
Since $\{\dot{\mathbf{\eta}}^{(k)}\}$ is bounded, the sequence $\{\dot{\mathbf{\eta}}^{(k+1)} - \dot{\mathbf{\eta}}^{(k)}\}$ is also bounded. Suppose that the upper bound of the sequence $\{\dot{\mathbf{\eta}}^{(k+1)} - \dot{\mathbf{\eta}}^{(k)}\}$ is $M_0$, i.e., $ \forall k \geq 0, || \dot{\mathbf{\eta}}^{(k+1)} - \dot{\mathbf{\eta}}^{(k)} ||_F \leq M_0$. Meanwhile, the inequality $\mathcal{L}(\dot{\mathbf{X}}^{(k+1)}, \dot{\mathbf{Z}}^{(k+1)}, \dot{\mathbf{\eta}}^{(k)}, \beta^{(k)}) \leq \mathcal{L}(\dot{\mathbf{X}}^{(k)}, \dot{\mathbf{Z}}^{(k)}, \dot{\mathbf{\eta}}^{(k)}, \beta^{(k)})$ always holds because $\dot{\mathbf{X}}$ and $\dot{\mathbf{Z}}$ are globally optimal solutions to the corresponding subproblems. Therefore, we have
\begin{equation}
\begin{aligned}
&\mathcal{L}(\dot{\mathbf{X}}^{(k+1)}, \dot{\mathbf{Z}}^{(k+1)}, \dot{\mathbf{\eta}}^{(k+1)}, \beta^{(k+1)}) \\
&~\leq \mathcal{L}(\dot{\mathbf{X}}^{(k+1)}, \dot{\mathbf{Z}}^{(k+1)}, \dot{\mathbf{\eta}}^{(k)}, \beta^{(k)}) + \frac{\beta^{(k+1)}+\beta^{(k)}}{2{\beta^{(k)}}^2}M_{0}^{2} \\
&~\leq \mathcal{L}(\dot{\mathbf{X}}^{(1)}, \dot{\mathbf{Z}}^{(1)}, \dot{\mathbf{\eta}}^{(0)}, \beta^{(0)}) + M_{0}^{2}\sum_{k=0}^{\infty}\frac{1+\mu}{2\beta^{(0)}\mu^{k}} \\
&~\leq \mathcal{L}(\dot{\mathbf{X}}^{(1)}, \dot{\mathbf{Z}}^{(1)}, \dot{\mathbf{\eta}}^{(0)}, \beta^{(0)}) + \frac{M_{0}^{2}}{\beta^{(0)}}\sum_{k=0}^{\infty}\frac{1}{\mu^{k-1}} \\
&~< -\infty.
\end{aligned}
\label{L2}
\end{equation}
Hence, $\{\mathcal{L}(\dot{\mathbf{X}}^{(k+1)}, \dot{\mathbf{Z}}^{(k+1)}, \dot{\mathbf{\eta}}^{(k+1)}, \beta^{(k+1)}) \}$ is upper bounded.

Next, we prove that the sequences $\{\dot{\mathbf{X}}^{(k)}$\} and $\{\dot{\mathbf{Z}}^{(k)}\}$ are upper bounded. From formula (\ref{en-NNFN}), we have
\begin{equation}
\begin{aligned}
&\frac{\gamma}{2}|| \dot{\mathbf{A}}\dot{\mathbf{X}}^{(k+1)} - \dot{\mathbf{Y}} ||^{2}_F + \lambda(||\dot{\mathbf{Z}}^{(k+1)}||_{*} - \alpha||\dot{\mathbf{Z}}^{(k+1)}||_{F} ) \\
&=\mathcal{L}(\dot{\mathbf{X}}^{(k+1)}, \dot{\mathbf{Z}}^{(k+1)}, \dot{\mathbf{\eta}}^{(k)}, \beta^{(k)}) - \langle{\dot{\mathbf{\eta}}^{(k)}, \dot{\mathbf{X}}^{(k+1)} - \dot{\mathbf{Z}}^{(k+1)}}\rangle - \frac{\beta^{(k)}}{2}|| \dot{\mathbf{X}}^{(k+1)} - \dot{\mathbf{Z}}^{(k+1)} ||_{F}^{2} \\
&=\mathcal{L}(\dot{\mathbf{X}}^{(k+1)}, \dot{\mathbf{Z}}^{(k+1)}, \dot{\mathbf{\eta}}^{(k)}, \beta^{(k)}) - \langle{\dot{\mathbf{\eta}}^{(k)}, \frac{\dot{\mathbf{\eta}}^{(k+1)} - \dot{\mathbf{\eta}}^{(k)}}{\beta^{(k)}} }\rangle - \frac{\beta^{(k)}}{2}|| \frac{\dot{\mathbf{\eta}}^{(k+1)} - \dot{\mathbf{\eta}}^{(k)}}{\beta^{(k)}} ||_{F}^{2} \\
&= \mathcal{L}(\dot{\mathbf{X}}^{(k+1)}, \dot{\mathbf{Z}}^{(k+1)}, \dot{\mathbf{\eta}}^{(k)}, \beta^{(k)}) + \frac{||\dot{\mathbf{\eta}}^{(k)}||_{F}^{2} - ||\dot{\mathbf{\eta}}^{(k+1)}||_{F}^{2} }{2\beta^{(k)}}.
\label{XZ}
\end{aligned}
\end{equation}
From this, we get that $\{\dot{\mathbf{A}}\dot{\mathbf{X}}^{(k)} - \dot{\mathbf{Y}} \}$ and $\{ \dot{\mathbf{Z}}^{(k)} \}$ are upper bounded. Since $\dot{\mathbf{\eta}}^{(k+1)} = \dot{\mathbf{\eta}}^{(k)} + \beta^{(k)}(\dot{\mathbf{X}}^{(k+1)} - \dot{\mathbf{Z}}^{(k+1)}) $, $\{ \dot{\mathbf{X}}^{(k)} \}$ is also upper bounded. Thus, there exists at least one accumulation point for $\{ \dot{\mathbf{X}}^{(k)}, \dot{\mathbf{Z}}^{(k)} \}$. Further, we can get
\begin{equation}
\begin{aligned}
\lim_{k\rightarrow +\infty} || \dot{\mathbf{X}}^{(k+1)}-\dot{\mathbf{Z}}^{(k+1)} ||_{F} &= \lim_{k\rightarrow +\infty} \frac{1}{\beta^{(k)}}|| \dot{\mathbf{\eta}}^{(k+1)}-\dot{\mathbf{\eta}}^{(k)} ||_{F} \\
&=0,
\end{aligned}
\end{equation}
and that the accumulation point is a feasible solution to the objective function. Thus, the first equation in (\ref{converge1}) is proved.

Finally, we demonstrate that the change in the sequences $\{\dot{\mathbf{X}}^{(k)}\}$ and $\{\dot{\mathbf{Z}}^{(k)}\}$ converges to 0 with iteration. By $\dot{\mathbf{X}}^{(k)} = \frac{1}{\beta^{(k-1)}}(\dot{\mathbf{\eta}}^{(k)} - \dot{\mathbf{\eta}}^{(k-1)} ) + \dot{\mathbf{Z}}^{(k)}$, we get
\begin{equation}
\begin{aligned}
&\lim_{k\rightarrow +\infty} || \dot{\mathbf{X}}^{(k+1)}- \dot{\mathbf{X}}^{(k)} ||_{F} \\
&~= \lim_{k\rightarrow +\infty} || (\gamma\dot{\mathbf{A}}^{*}\dot{\mathbf{A}} + \beta^{(k)}\dot{\mathbf{I}})^{-1}(\gamma\dot{\mathbf{A}}^{*}\dot{\mathbf{Y}} + \beta^{(k)}\dot{\mathbf{Z}}^{(k)} - \dot{\mathbf{\eta}}^{(k)}) - \frac{1}{\beta^{(k-1)}}(\dot{\mathbf{\eta}}^{(k)} - \dot{\mathbf{\eta}}^{(k-1)} ) - \dot{\mathbf{Z}}^{(k)} ||_{F} \\
&~= \lim_{k\rightarrow +\infty} || (\gamma\dot{\mathbf{A}}^{*}\dot{\mathbf{A}} + \beta^{(k)}\dot{\mathbf{I}})^{-1}(\gamma\dot{\mathbf{A}}^{*}\dot{\mathbf{Y}} - \gamma\dot{\mathbf{A}}^{*}\dot{\mathbf{A}}\dot{\mathbf{Z}}^{(k)} - \dot{\mathbf{\eta}}^{(k)}) - \frac{1}{\beta^{(k-1)}}(\dot{\mathbf{\eta}}^{(k)} - \dot{\mathbf{\eta}}^{(k-1)} ) ||_{F} \\
&~\leq \lim_{k\rightarrow +\infty} || (\gamma\dot{\mathbf{A}}^{*}\dot{\mathbf{A}} + \beta^{(k)}\dot{\mathbf{I}})^{-1}(\gamma\dot{\mathbf{A}}^{*}\dot{\mathbf{Y}} - \gamma\dot{\mathbf{A}}^{*}\dot{\mathbf{A}}\dot{\mathbf{Z}}^{(k)} - \dot{\mathbf{\eta}}^{(k)})||_{F} + \frac{1}{\beta^{(k-1)}}||\dot{\mathbf{\eta}}^{(k)} - \dot{\mathbf{\eta}}^{(k-1)}||_{F} \\
&~=0.
\end{aligned}
\end{equation}

Similarly, we get
\begin{equation}
\begin{aligned}
&\lim_{k\rightarrow +\infty} || \dot{\mathbf{Z}}^{(k+1)}- \dot{\mathbf{Z}}^{(k)} ||_{F} \\
&~= \lim_{k\rightarrow +\infty} || \frac{1}{\beta^{(k)}}(\dot{\mathbf{\eta}}^{(k)} - \dot{\mathbf{\eta}}^{(k+1)} ) +  \dot{\mathbf{X}}^{(k+1)} - \dot{\mathbf{Z}}^{(k)} ||_{F} \\
&~= \lim_{k\rightarrow +\infty} || \dot{\mathbf{X}}^{(k)} + \frac{1}{\beta^{(k-1)}}\dot{\mathbf{\eta}}^{(k-1)} - \dot{\mathbf{Z}}^{(k)} +  \dot{\mathbf{X}}^{(k+1)} -  \dot{\mathbf{X}}^{(k)} + \frac{1}{\beta^{(k)}}(\dot{\mathbf{\eta}}^{(k)} - \dot{\mathbf{\eta}}^{(k+1)} )  - \frac{1}{\beta^{(k-1)}}\dot{\mathbf{\eta}}^{(k-1)} ||_{F} \\
&~\leq \lim_{k\rightarrow +\infty} || \dot{\mathbf{X}}^{(k)} + \frac{1}{\beta^{(k-1)}}\dot{\mathbf{\eta}}^{(k-1)} - \dot{\mathbf{Z}}^{(k)} ||_{F} + || \dot{\mathbf{X}}^{(k+1)} -  \dot{\mathbf{X}}^{(k)} ||_{F} + || \frac{1}{\beta^{(k)}}(\dot{\mathbf{\eta}}^{(k)} - \dot{\mathbf{\eta}}^{(k+1)} )  - \frac{1}{\beta^{(k-1)}}\dot{\mathbf{\eta}}^{(k-1)} ||_{F} \\
&~= \lim_{k\rightarrow +\infty} || \dot{\mathbf{X}}^{(k)} + \frac{1}{\beta^{(k-1)}}\dot{\mathbf{\eta}}^{(k-1)} - \dot{\mathbf{Z}}^{(k)} ||_{F} + || \dot{\mathbf{X}}^{(k+1)} -  \dot{\mathbf{X}}^{(k)} ||_{F} + || \frac{1}{\beta^{(k)}}(\dot{\mathbf{\eta}}^{(k+1)} - \dot{\mathbf{\eta}}^{(k)} ) ||_{F} + || \frac{1}{\beta^{(k-1)}}\dot{\mathbf{\eta}}^{(k-1)} ||_{F} \\
&~= \lim_{k\rightarrow +\infty} ||  \frac{1}{\beta^{(k-1)}}\dot{\mathbf{\eta}}^{(k)}  ||_{F} + || \dot{\mathbf{X}}^{(k+1)} -  \dot{\mathbf{X}}^{(k)} ||_{F} + ||  \dot{\mathbf{X}}^{(k+1)} - \dot{\mathbf{Z}}^{(k+1)} ||_{F} + || \frac{1}{\beta^{(k-1)}}\dot{\mathbf{\eta}}^{(k-1)} ||_{F} \\
&~=0.
\end{aligned}
\end{equation}

\end{proof}

\bibliographystyle{elsarticle-num-names}
\bibliography{ref}

\newpage
{\huge{Supplementary Material}}

\subsection*{B. Proof of Theorem 3.4}
\label{Appendix 3}
\begin{proof}
	First, we prove that the sequence $\{\dot{\mathbf{\eta}}^{(k)}\}$ generated by Algorithm 3 is upper bounded.
	\begin{equation}
		\begin{aligned}
			||\dot{\mathbf{\eta}}^{(k+1)} ||_{F}^{2} &= || \dot{\mathbf{\eta}}^{(k)} + \beta^{(k)}(\dot{\mathbf{Y}} - \dot{\mathbf{X}}^{(k+1)}-\dot{\mathbf{Z}}^{(k+1)} )  ||_{F}^{2} \\
			&= {\beta^{(k)}}^{2} || ({\beta^{(k)}}^{-1}\dot{\mathbf{\eta}}^{(k)} + \dot{\mathbf{Y}} - \dot{\mathbf{Z}}^{(k+1)}) -\dot{\mathbf{X}}^{(k+1)}  ||_{F}^{2} \\
			&= {\beta^{(k)}}^{2} ||\dot{\mathbf{U}}^{(k)}\Sigma^{(k)} \mathbf{V}^{(k)\top} -\dot{\mathbf{U}}^{(k)}\Sigma^{(k)}_{\dot{\mathbf{X}}} \mathbf{V}^{(k)\top} ||_{F}^{2} \\
			&= {\beta^{(k)}}^{2} \sum^{M}_{i=1}(\sigma^{(k)}_{i} - \sigma^{(k)}_{\dot{\mathbf{X}},i})^2 \\
			&\leq {\beta^{(k)}}^{2} \sum_{i=1}^{M} (\lambda/\beta^{(k)})^{2} \\
			&= \lambda^{2}M.
			\label{eta_3}
		\end{aligned}
	\end{equation}
	For the last inequality, if $\sigma^{(k)}_{i} \geq \lambda/\beta^{(k)}$, then $\sigma^{(k)}_{\dot{\mathbf{X}},i} = \frac{||x||_{2}+\alpha \lambda/\beta^{(k)}}{||x||_{2}}\max(\sigma^{(k)}_{i}-\lambda/\beta^{(k)}, 0)$.  Obviously $\sigma^{(k)}_{i} - \sigma^{(k)}_{\dot{\mathbf{X}},i} \leq \lambda/\beta^{(k)}$ holds. If $\sigma^{(k)}_{i} < \lambda/\beta^{(k)}$, $\sigma^{(k)}_{i} - \sigma^{(k)}_{\dot{\mathbf{X}},i} = \sigma^{(k)}_{i} < \lambda/\beta^{(k)}$. Hence, the sequence $\{\dot{\mathbf{\eta}}^{(k)}\}$ generated by Algorithm 3 is upper bounded.

	Then, we prove that the sequence of Lagrange function $\{ \mathcal{L}(\dot{\mathbf{X}}^{(k+1)}, \dot{\mathbf{Z}}^{(k+1)}, \dot{\mathbf{\eta}}^{(k+1)}, \beta^{(k+1)}) \}$ is also upper bounded. According to the update rule for $\dot{\mathbf{\eta}}^{(k)}$, it can be obtained 
	\begin{equation}
		\begin{aligned}
			&\mathcal{L}(\dot{\mathbf{X}}^{(k+1)}, \dot{\mathbf{Z}}^{(k+1)}, \dot{\mathbf{\eta}}^{(k+1)}, \beta^{(k+1)}) \\
			&~=\mathcal{L}(\dot{\mathbf{X}}^{(k+1)}, \dot{\mathbf{Z}}^{(k+1)}, \dot{\mathbf{\eta}}^{(k)}, \beta^{(k)}) + \langle{\dot{\mathbf{\eta}}^{(k+1)} - \dot{\mathbf{\eta}}^{(k)}, \dot{\mathbf{Y}} - \dot{\mathbf{X}}^{(k+1)} - \dot{\mathbf{Z}}^{(k+1)} }\rangle  +\frac{\beta^{(k+1)}-\beta^{(k)}}{2}||\dot{\mathbf{Y}} - \dot{\mathbf{X}}^{(k+1)} - \dot{\mathbf{Z}}^{(k+1)} ||_{F}^{2} \\
			&~=\mathcal{L}(\dot{\mathbf{X}}^{(k+1)}, \dot{\mathbf{Z}}^{(k+1)}, \dot{\mathbf{\eta}}^{(k)}, \beta^{(k)})  + \langle{\dot{\mathbf{\eta}}^{(k+1)} - \dot{\mathbf{\eta}}^{(k)}, \frac{\dot{\mathbf{\eta}}^{(k+1)} - \dot{\mathbf{\eta}}^{(k)}}{\beta^{(k)}} }\rangle +\frac{\beta^{(k+1)}-\beta^{(k)}}{2}||\frac{\dot{\mathbf{\eta}}^{(k+1)} - \dot{\mathbf{\eta}}^{(k)}}{\beta^{(k)}} ||_{F}^{2} \\
			&~= \mathcal{L}(\dot{\mathbf{X}}^{(k+1)}, \dot{\mathbf{Z}}^{(k+1)}, \dot{\mathbf{\eta}}^{(k)}, \beta^{(k)})  +\frac{\beta^{(k+1)} +\beta^{(k)}}{2{\beta^{(k)}}^2}||\dot{\mathbf{\eta}}^{(k+1)} - \dot{\mathbf{\eta}}^{(k)} ||_{F}^{2}. 
			\label{L_3}
		\end{aligned}
	\end{equation}
	Since $\{\dot{\mathbf{\eta}}^{(k)}\}$ is bounded, the sequence $\{\dot{\mathbf{\eta}}^{(k+1)} - \dot{\mathbf{\eta}}^{(k)}\}$ is also bounded. Suppose that the upper bound of the sequence $\{\dot{\mathbf{\eta}}^{(k+1)} - \dot{\mathbf{\eta}}^{(k)}\}$ is $M_1$, i.e., $ \forall k \geq 0, || \dot{\mathbf{\eta}}^{(k+1)} - \dot{\mathbf{\eta}}^{(k)} ||_F \leq M_1$. Meanwhile, the inequality $\mathcal{L}(\dot{\mathbf{X}}^{(k+1)}, \dot{\mathbf{Z}}^{(k+1)}, \dot{\mathbf{\eta}}^{(k)}, \beta^{(k)}) \leq \mathcal{L}(\dot{\mathbf{X}}^{(k)}, \dot{\mathbf{Z}}^{(k)}, \dot{\mathbf{\eta}}^{(k)}, \beta^{(k)})$ always holds because $\dot{\mathbf{X}}$ and $\dot{\mathbf{Z}}$ are globally optimal solutions to the corresponding subproblems. Therefore, we have
	\begin{equation*}
		\begin{aligned}
			&\mathcal{L}(\dot{\mathbf{X}}^{(k+1)}, \dot{\mathbf{Z}}^{(k+1)}, \dot{\mathbf{\eta}}^{(k+1)}, \beta^{(k+1)}) \\
			&~\leq \mathcal{L}(\dot{\mathbf{X}}^{(k+1)}, \dot{\mathbf{Z}}^{(k+1)}, \dot{\mathbf{\eta}}^{(k)}, \beta^{(k)}) + \frac{\beta^{(k+1)}+\beta^{(k)}}{2{\beta^{(k)}}^2}M_{1}^{2} \\
		\end{aligned}
	\end{equation*}
	\begin{equation}
		\begin{aligned}
			&~\leq \mathcal{L}(\dot{\mathbf{X}}^{(1)}, \dot{\mathbf{Z}}^{(1)}, \dot{\mathbf{\eta}}^{(0)}, \beta^{(0)}) + M_{1}^{2}\sum_{k=0}^{\infty}\frac{1+\mu}{2\beta^{(0)}\mu^{k}} \\
			&~\leq \mathcal{L}(\dot{\mathbf{X}}^{(1)}, \dot{\mathbf{Z}}^{(1)}, \dot{\mathbf{\eta}}^{(0)}, \beta^{(0)}) + \frac{M_{1}^{2}}{\beta^{(0)}}\sum_{k=0}^{\infty}\frac{1}{\mu^{k-1}} \\
			&~< -\infty.
		\end{aligned}
		\label{L2_3}
	\end{equation}
	Hence, $\{\mathcal{L}(\dot{\mathbf{X}}^{(k+1)}, \dot{\mathbf{Z}}^{(k+1)}, \dot{\mathbf{\eta}}^{(k+1)}, \beta^{(k+1)}) \}$ is upper bounded.

	Next, we prove that the sequences $\{\dot{\mathbf{X}}^{(k)}$\} and $\{\dot{\mathbf{Z}}^{(k)}\}$ are upper bounded. 
	\begin{equation}
		\begin{aligned}
			&\lambda(||\dot{\mathbf{X}}^{(k+1)}||_{*} - \alpha||\dot{\mathbf{X}}^{(k+1)}||_{F}) + \rho||\dot{\mathbf{Z}}^{(k+1)}||_{1} \\
			&=\mathcal{L}(\dot{\mathbf{X}}^{(k+1)}, \dot{\mathbf{Z}}^{(k+1)}, \dot{\mathbf{\eta}}^{(k)}, \beta^{(k)}) - \langle{\dot{\mathbf{\eta}}^{(k)}, \dot{\mathbf{Y}} - \dot{\mathbf{X}}^{(k+1)} - \dot{\mathbf{Z}}^{(k+1)}}\rangle - \frac{\beta^{(k)}}{2}|| \dot{\mathbf{Y}} - \dot{\mathbf{X}}^{(k+1)} - \dot{\mathbf{Z}}^{(k+1)} ||_{F}^{2} \\
			&=\mathcal{L}(\dot{\mathbf{X}}^{(k+1)}, \dot{\mathbf{Z}}^{(k+1)}, \dot{\mathbf{\eta}}^{(k)}, \beta^{(k)}) - \langle{\dot{\mathbf{\eta}}^{(k)}, \frac{\dot{\mathbf{\eta}}^{(k+1)} - \dot{\mathbf{\eta}}^{(k)}}{\beta^{(k)}} }\rangle - \frac{\beta^{(k)}}{2}|| \frac{\dot{\mathbf{\eta}}^{(k+1)} - \dot{\mathbf{\eta}}^{(k)}}{\beta^{(k)}} ||_{F}^{2} \\
			&= \mathcal{L}(\dot{\mathbf{X}}^{(k+1)}, \dot{\mathbf{Z}}^{(k+1)}, \dot{\mathbf{\eta}}^{(k)}, \beta^{(k)}) + \frac{||\dot{\mathbf{\eta}}^{(k)}||_{F}^{2} - ||\dot{\mathbf{\eta}}^{(k+1)}||_{F}^{2} }{2\beta^{(k)}}.
			\label{XZ_3}
		\end{aligned}
	\end{equation}
	From this, we get that $\{\dot{\mathbf{X}}^{(k)}\}$ and $\{ \dot{\mathbf{Z}}^{(k)} \}$ are upper bounded. Thus, there exists at least one accumulation point for $\{ \dot{\mathbf{X}}^{(k)}, \dot{\mathbf{Z}}^{(k)} \}$. Further, we can get
	\begin{equation}
		\begin{aligned}
			\lim_{k\rightarrow +\infty} || \dot{\mathbf{Y}} - \dot{\mathbf{X}}^{(k+1)}-\dot{\mathbf{Z}}^{(k+1)} ||_{F} &= \lim_{k\rightarrow +\infty} \frac{1}{\beta^{(k)}}|| \dot{\mathbf{\eta}}^{(k+1)}-\dot{\mathbf{\eta}}^{(k)} ||_{F} \\
			&=0,
		\end{aligned}
	\end{equation}
	and that the accumulation point is a feasible solution to the objective function. Thus, the first equation in (32) is proved.

	Finally, we demonstrate that the change in the sequences $\{\dot{\mathbf{X}}^{(k)}\}$ and $\{\dot{\mathbf{Z}}^{(k)}\}$ converges to 0 with iteration. For $\dot{\mathbf{Z}}$ subproblem, by $\dot{\mathbf{Z}}^{(k)} = \dot{\mathbf{Y}} - \dot{\mathbf{X}}^{(k)} + \frac{\dot{\mathbf{\eta}}^{(k-1)}-\dot{\mathbf{\eta}}^{(k)}}{\beta^{(k-1)}}$, we get
	\begin{equation}
		\begin{aligned}
			&\lim_{k\rightarrow +\infty} ||  \dot{\mathbf{Z}}^{(k+1)} - \dot{\mathbf{Z}}^{(k)} ||_{F} \\
			&~= \lim_{k\rightarrow +\infty} || \mathcal{S}_{\frac{\rho}{\beta^{(k)}}}(\dot{\mathbf{Y}} - \dot{\mathbf{X}}^{(k)}  + \frac{\dot{\mathbf{\eta}}^{(k)}}{\beta^{(k)}})  - (\dot{\mathbf{Y}} - \dot{\mathbf{X}}^{(k)}  + \frac{\dot{\mathbf{\eta}}^{(k)}}{\beta^{(k)}})  + (\frac{\dot{\mathbf{\eta}}^{(k)}}{\beta^{(k-1)}} - \frac{\dot{\mathbf{\eta}}^{(k-1)}}{\beta^{(k-1)}} + \frac{\dot{\mathbf{\eta}}^{(k)}}{\beta^{(k)}})||_{F} \\
			&~\leq \lim_{k\rightarrow +\infty} || \mathcal{S}_{\frac{\rho}{\beta^{(k)}}}(\dot{\mathbf{Y}} - \dot{\mathbf{X}}^{(k)}  + \frac{\dot{\mathbf{\eta}}^{(k)}}{\beta^{(k)}})  - (\dot{\mathbf{Y}} - \dot{\mathbf{X}}^{(k)}  + \frac{\dot{\mathbf{\eta}}^{(k)}}{\beta^{(k)}}) ||_{F}  + ||\frac{\dot{\mathbf{\eta}}^{(k)}}{\beta^{(k-1)}} - \frac{\dot{\mathbf{\eta}}^{(k-1)}}{\beta^{(k-1)}} + \frac{\dot{\mathbf{\eta}}^{(k)}}{\beta^{(k)}}||_{F} \\
			&~\leq \lim_{k\rightarrow +\infty} \frac{\rho \sqrt{mn}}{\beta^{(k)}} + ||\frac{(\mu+1)\dot{\mathbf{\eta}}^{(k)} - \mu \dot{\mathbf{\eta}}^{(k-1)}}{\beta^{(k)}}||_{F}\\
			&~=0,
		\end{aligned}
	\end{equation}
	where $\mathcal{S}_{\frac{\rho}{\beta^{(k)}}}$ is the soft-threshold operation with parameter $\frac{\rho}{\beta^{(k)}}$, $m$ and $n$ are the size of $\dot{\mathbf{Y}}$.
	
	Similarly, we prove that $\lim_{k\rightarrow +\infty} || \dot{\mathbf{X}}^{(k+1)}-\dot{\mathbf{X}}^{(k)} ||_{F} = 0$. By Algorithm 3, we get
	\begin{equation*}
		\dot{\mathbf{X}}^{(k+1)} = \dot{\mathbf{Y}} - \dot{\mathbf{Z}}^{(k+1)} + \frac{\dot{\mathbf{\eta}}^{(k)} - \dot{\mathbf{\eta}}^{(k+1)}}{\beta^{(k)}},
	\end{equation*}
	\begin{equation*}
		\dot{\mathbf{X}}^{(k)} = \dot{\mathbf{U}}^{(k-1)}\mathcal{S}^{\mathrm{QNMF}}_{\frac{\lambda}{\beta^{(k-1)}}}(\Sigma^{(k-1)})\dot{\mathbf{V}}^{(k-1)\top},
	\end{equation*}
	where $\dot{\mathbf{U}}^{(k-1)}\Sigma^{(k-1)}\dot{\mathbf{V}}^{(k-1)\top}$ is the QSVD of $\dot{\mathbf{Y}} -  \dot{\mathbf{Z}}^{(k)} + \frac{ \dot{\mathbf{\eta}^{(k-1)}}}{\beta^{(k-1)}}$, $\mathcal{S}^{\mathrm{QNMF}}_{\frac{\lambda}{\beta^{(k-1)}}}$  is the QNMF closed solution operator provided by Theorem III.1 and Eq. (8). Then, we have
	\begin{equation}
		\begin{aligned}
			&\lim_{k\rightarrow +\infty} ||  \dot{\mathbf{X}}^{(k+1)} - \dot{\mathbf{X}}^{(k)} ||_{F} \\
			&~= \lim_{k\rightarrow +\infty} || \dot{\mathbf{Y}} - \dot{\mathbf{Z}}^{(k+1)} + \frac{\dot{\mathbf{\eta}}^{(k)} - \dot{\mathbf{\eta}}^{(k+1)}}{\beta^{(k)}} - \dot{\mathbf{X}}^{(k)}||_{F}\\
			&~= \lim_{k\rightarrow +\infty} || \dot{\mathbf{Y}} - \dot{\mathbf{Z}}^{(k+1)} + \frac{\dot{\mathbf{\eta}}^{(k)} - \dot{\mathbf{\eta}}^{(k+1)}}{\beta^{(k)}} - \dot{\mathbf{X}}^{(k)} + (\dot{\mathbf{Z}}^{(k)} + \frac{\dot{\mathbf{\eta}}^{(k-1)}}{\beta^{(k-1)}}) - (\dot{\mathbf{Z}}^{(k)} + \frac{\dot{\mathbf{\eta}}^{(k-1)}}{\beta^{(k-1)}})||_{F}\\
			&~\leq \lim_{k\rightarrow +\infty} || (\dot{\mathbf{Y}} - \dot{\mathbf{Z}}^{(k)} +\frac{\dot{\mathbf{\eta}}^{(k-1)}}{\beta^{(k-1)}}) - \dot{\mathbf{X}}^{(k)} ||_{F} + || \dot{\mathbf{Z}}^{(k)} - \dot{\mathbf{Z}}^{(k+1)}||_{F} + || \frac{\dot{\mathbf{\eta}}^{(k)} - \dot{\mathbf{\eta}}^{(k+1)}}{\beta^{(k)}} - \frac{\dot{\mathbf{\eta}}^{(k-1)}}{\beta^{(k-1)}}||_{F}\\
			&~= \lim_{k\rightarrow +\infty} ||\Sigma^{(k-1)} - \mathcal{S}^{\mathrm{QNMF}}_{\frac{\lambda}{\beta^{(k-1)}}}(\Sigma^{(k-1)}) ||_{F} + || \dot{\mathbf{Z}}^{(k)} - \dot{\mathbf{Z}}^{(k+1)}||_{F} + || \frac{\dot{\mathbf{\eta}}^{(k)} - (\mu+1)\dot{\mathbf{\eta}}^{(k+1)}}{\beta^{(k)}} ||_{F} \\
			&~\leq \lim_{k\rightarrow +\infty}  \frac{\lambda \sqrt{mn}}{\beta^{(k-1)}} + || \dot{\mathbf{Z}}^{(k)} - \dot{\mathbf{Z}}^{(k+1)}||_{F} + || \frac{\dot{\mathbf{\eta}}^{(k)} - (\mu+1)\dot{\mathbf{\eta}}^{(k+1)}}{\beta^{(k)}} ||_{F} \\
			&~=0
		\end{aligned}
	\end{equation}
	
\end{proof}

\end{document}